\definecolor{Red}{rgb}{1,0,0}
\definecolor{Blue}{rgb}{0,0,1}
\newcommand*\circled[1]{\tikz[baseline=(char.base)]{
            \node[shape=circle,draw,inner sep=2pt] (char) {#1};}}
\newtheorem{theorem}{Theorem}[section]
\newtheorem{proposition}[theorem]{Proposition}
\def \x {\mathbf{x}}
\def \t {\mathbf{t}}
\def \e {\mathbf{e}}
\def \del {\bm{\delta}}
\def \bX {\mathbf{X}}
\def \bA {\mathbf{A}}
\def \bV {\mathbf{V}}
\def \bU {\mathbf{U}}
\def \Lam {\bm{\Lambda}}
\def \R {\mathbb{R}}
\def \P {\mathcal{P}}
\def\x{\boldsymbol{x}}
\def\t{\boldsymbol{t}}
\def\e{\boldsymbol{e}}
\def\del{\boldsymbol{d}}
\def\R{\mathbf{R}}
\def\T{\mathbf{T}}
\def\B{\mathbf{B}}
\def\L{\mathcal{L}}
\def\C{\mathbf{C}}
\def\D{\mathbf{D}}
\def\G{\mathbf{G}}
\def\H{\mathbf{H}}
\def\Lam{\bm{\Lambda}}
\def\A{\mathbf{A}}
\def\X{\mathbf{X}}
\title{Least-squares registration of point sets over $\mathbb{SE}(d)$ using closed-form projections}
\author{\IEEEauthorblockN{Sk. Miraj Ahmed\IEEEauthorrefmark{1},
Niladri Ranjan Das\IEEEauthorrefmark{2} and Kunal Narayan Chaudhury\IEEEauthorrefmark{2} \IEEEauthorrefmark{1}\IEEEauthorrefmark{1}}

\IEEEauthorblockA{
\IEEEauthorrefmark{1}University of California, Riverside, USA.,
\IEEEauthorrefmark{2}Indian Institute of Science, India.}
\thanks{\IEEEauthorrefmark{1}\IEEEauthorrefmark{1}Correspondence: kunal@ee.iisc.ac.in (Kunal Narayan Chaudhury). This work was partially supported by EMR grant SERB/F/6047/2016-2017 from DST-SERB, Government of India.}}
\begin{document}
\maketitle
\begin{abstract}
Consider the problem of registering multiple point sets in some $d$-dimensional space using rotations and translations. Assume that there are sets with common points, and moreover the pairwise correspondences are known for such sets. We consider a least-squares formulation of this problem, where the variables are the transforms associated with the point sets. The present novelty is that we reduce this nonconvex problem to an optimization over the positive semidefinite cone, where the objective is linear but the constraints are nevertheless nonconvex. We propose to solve this using variable splitting and the alternating directions method of multipliers (ADMM). Due to the linearity of the objective and the structure of constraints, the ADMM subproblems are given by projections with closed-form solutions. In particular, for $m$ point sets, the dominant cost per iteration is the partial eigendecomposition of an $md \times md$ matrix, and $m-1$ singular value decompositions of $d \times d$ matrices. We empirically show that for appropriate parameter settings, the proposed solver has a large convergence basin and is stable under perturbations. As applications, we use our method for 2D shape matching and 3D multiview registration. In either application, we model the shapes/scans as point sets and determine the pairwise correspondences using ICP. In particular, our algorithm compares favorably with existing methods for multiview reconstruction in terms of  timing and accuracy.
\end{abstract}


\section{Introduction}

Surface reconstruction from multiview scans has applications in computer-aided design, computer graphics, computer vision, medical imaging, and virtual reality. A range scanner is typically used to scan the object from different views. The scans are extracted by fixing the object and moving the scanner around, or by fixing the scanner and rotating the object on a turntable. Each scan is represented using a mesh, composed of vertices, faces, and normals (\cite{3Dscanrep}). The vertices are simply the sampled points, the faces encode the connectivity between vertices, and the normals can be used to estimate the curvature (\cite{MLSalgo}). As with most existing reconstruction methods, we will use just the vertex information, i.e., each scan is represented as a \textit{point set}. The computational problem in this regard is to register the scans using translations and rotations.
The problem has two components, namely, finding the point-to-point correspondences between scans and determining the alignment of each scan using the found correspondences. The former is harder to deal with due to its combinatorial nature (\cite{Xing2013}).

On the other hand, if the point-to-point correspondence is known for two scans, then they can be registered simply using singular value decomposition (\cite{Umeyama1991}). However, the correspondences are not known in practice and have to be inferred from the scan data. A natural approach is to solve the correspondence and registration problems in an alternating manner.
This approach is used in classical ICP (Iterative Closest Point) and its variants (\cite{Besl1992,Rusinkiewicz2001,Zinsser2003}), where the correspondences are obtained by aligning the scans and matching nearest points. The two steps are repeated until convergence. However, ICP requires a good estimate for the initial alignment. Deformations (scaling/stretching) in the scans further deteriorates the performance of ICP. To address this, \cite{lieicp2009,scaledICPlie2009} proposed an unified mathematical model for registration, by combining ICP with optimization over Lie groups. Classical ICP is also sensitive to outliers and partial overlaps between scans. Robust variants of ICP have been proposed that can reject outliers (\cite{TrimmedICP,Zinsser2003,RobustICP2007,Robusticp2011,Zhang2011,DONG201467,Yang2016}). However, this involves pruning or reweighting the correspondences, which can be computation intensive. A more efficient alternative based on sparsity inducing norms was explored in \cite{SparseICP2013}.

The situation is more difficult with multiple scans. A straightforward approach is \textit{sequential registration} in which multiple scans are aligned one at a time using ICP. However, sequential registration is prone to error propagation, and the situation gets worse in the presence of outliers. A more robust alternative is to take into account the pairwise correspondences between scans (obtained using ICP), integrate them in a single objective (cost function), and jointly optimize the transforms with respect to the objective.  This way we can prevent error propagation and distribute the registration error across the scans. Our approach is based on this so-called \textit{global registration}.

\subsection{Global Registration}

There exist several methods for global registration in the literature. These can be divided into two broad classes depending on whether the registration is performed in frame or point space. Frame-space methods use the relative transform between scans (\cite{Sharp2002,Fusiello2002,Torsello2011,Govindu2004,Pooja2014}), whereas point-space methods work directly with the local coordinates (\cite{Pennec1996,Bergevin1996,Benjemaa1999,Bennamoun2001}). In one of the earliest frame-space methods (\cite{Sharp2002}), the view graph is decomposed into cycles such that the optimal transforms for each cycle can be computed in closed-form. In \cite{Fusiello2002}, the rotations are parameterized using quaternions and the registration error is distributed evenly across all scans. In \cite{Torsello2011}, dual quaternions are used to represent the transforms and geodesic averaging is used to denoise the relative transforms. In \cite{Govindu2004} and \cite{Pooja2014}, the authors used Lie-algebraic methods for averaging relative transforms. In \cite{Arrigoni2015} and \cite{Bernard2015}, it is shown that the null space of an appropriate matrix (constructed from the relative transforms) can be used to extract cycle-consistent transforms. In \cite{Arrigoni2016}, the authors proposed to use low-rank matrix completion for  global registration.
The pioneering point-space methods include \cite{Pennec1996,Bergevin1996,Benjemaa1999,Bennamoun2001}. 
The authors in \cite{Pennec1996} alternately computed an average shape from the scans and aligned the scans against this shape.
In \cite{Bergevin1996}, the scans are repeatedly selected and registered against other scans in a global reference frame.
The method in \cite{Bergevin1996} is enhanced in \cite{Benjemaa1999} by accelerating the correspondence search using multi-z buffer technique, and by updating the surface positions immediately after the transform computation in each iteration. 
In \cite{Raghuramu2015}, the registration error is measured using the $\ell_1$ norm instead of the $\ell_2$ norm. 
A generalization of two-view ICP for multiple views was proposed in \cite{Fantoni2012}. In \cite{Toldo2010}, the authors combined ICP with general procrustes analysis for multiset registration. In \cite{Evangelidis2014}, the point sets are modeled using the Gaussian mixture model and registration is performed using the EM algorithm.

\subsection{Contribution}

In this paper, we consider the abstract problem of registering $m$ point sets over the group of rotations and translations, the special Euclidean group $\mathbb{SE}(d)$, given the local coordinates of points in each point set. We assume that there exist sets with common points, and that the point-to-point correspondences are known for such sets.
We consider a least-squares formulation for this problem, where the variables are the $\mathbb{SE}(d)$-transforms associated with the point sets. Remarkably, while this problem is inherently nonconvex, its solution (global optimum) can be computed using SVD when $m=2$ (\cite{Umeyama1991}). Unfortunately, such a solution is not available when $m\geq3$, and one must resort to some iterative method. In this regard, the precise contributions are as follows:
\begin{itemize}
\item We observe that the constrained least-squares optimization can be reduced to an SDP (semidefinite program), where the objective is linear but the constraints are nonconvex. The solution of the original least-squares problem can be derived from the SDP solution via a linear map. 

\item We propose to solve the SDP using variable splitting and ADMM (alternating directions method of multipliers). In particular, the variable splitting is such that the resulting subproblems are given by matrix projections with closed-form solutions. For $m$ point sets, the per-iteration cost is essentially the partial eigendecomposition of an $md \times md$ matrix, and $m-1$ SVD-based projections onto the rotation group $\mathbb{SO}(d)$.

\item We apply the proposed algorithm to the motivating problem, namely, the registration of multiview scans extracted from a three-dimensional surface. We model each scan as a point set and determine the pairwise correspondences using Picky-ICP \cite{Zinsser2003}. Since the dominating computation per iteration is a partial eigendecomposition, we can scale our iterative solver to practical problems involving large number of scans. To demonstrate the applicability of the algorithm beyond multiview registration, we also use it for 2D shape matching.

\end{itemize}

Based on toy examples and simulated multiview data, we empirically demonstrate that the proposed solver has a large basin for global convergence and exhibits fast convergence for appropriate parameter settings. We present simulation results on scans from the Stanford repository and compare them with existing multiview methods.  We observe that our algorithm is generally more robust to noise (both in the coordinates and the correspondences) and is quite fast. 

\subsection{Related Work}

The present contribution is an extension of \cite{Ahmed2017} where the registration is performed over the Euclidean group $\mathbb{E}(d)$, which includes reflections along with translations and rotations. 
Since the optimization is performed over a subgroup of $\mathbb{E}(d)$ in this paper, namely $\mathbb{SE}(d)$, we are required to introduce additional constraints. The novelty in this regard is the proposed variable splitting, whereby we are able to add more constraints and yet retain the efficiency of the ADMM solver in \cite{Ahmed2017}. 
Similar to \cite{Ahmed2017}, the subproblems of the proposed ADMM solver are given by closed-form projections. 
In the context of multiview registration, the ADMM algorithm in \cite{Ahmed2017} works well for small noise. However, when the noise is large, the  optimal transforms returned by the algorithm often include reflections. This is possible since the domain is $\mathbb{E}(d)$, whereby both rotations and reflections are allowed. The present proposal fixes this problem by forcing the optimal transforms to be in $\mathbb{SE}(d)$. 

We note that least-squares formulations of global registration have been considered in \cite{Bennamoun2001,Krishnan2005,Chaudhury2015}. The optimization is performed over  $\mathbb{SE}(d)$ in \cite{Bennamoun2001,Krishnan2005}, and over  $\mathbb{E}(d)$ in 
\cite{Chaudhury2015}. The optimal rotations are iteratively computed using SVD in \cite{Bennamoun2001}, while Gauss-Newton iterations are performed on the $\mathbb{SO}(3)$ manifold to compute the optimal rotations in \cite{Krishnan2005}. The solver in \cite{Krishnan2005} was computationally enhanced  in \cite{Bonarrigo2011}, and later it was shown in \cite{Mateo2014} that the problem can be posed within a Bayesian framework. In a different direction, it was observed in \cite{Chaudhury2015} that least-squares registration over $\mathbb{E}(d)$ can be reduced to a rank-constrained SDP; this can further be relaxed to a convex SDP with provable tightness and stability guarantees. 

We note that though ADMM has found wide applications in convex programming \cite{Boyd2011}, the fact that it works well with nonconvex problems was reported more recently; e.g., see \cite{Chartrand2013,Miksik2014,Wang2015,Diamond2017}. Moreover, while ADMM comes with with strong convergence guarantees for convex problems, analysis of nonconvex ADMM is still in its infancy (\cite{Wang2015,Hong2016}). 

\subsection{Organization}

The paper is organized as follows. In Section \ref{sec:Reg}, we give the mathematical description of the registration problem and the algorithmic solution. 
We empirically analyze the algorithm in Section \ref{sec:Numerics} using toy examples and simulated data. 
In Section \ref{sec:MR}, we present results on multiview registration using scans from the Stanford repository and compare our results with existing methods.  We conclude the paper in Section \ref{sec:Conc}.

\subsection{Notation}

We use $[m]$ to denote $\{1,\ldots,m\}$. The Euclidean norm is denoted by $\lVert \x \rVert_2$.
Symmetric matrices in $\mathbb{R}^{k \times k}$ are represented by $\mathbb{S}^k$, and the subset of positive semidefinite  matrices by $\mathbb{S}^k_+$. We use the standard inner-product on $\mathbb{R}^{k \times k}$ given by $\langle \A, \B\rangle=\text{trace}(\A^\top \!\B)$; the norm induced by this inner-product is  $\Vert \A\Vert_{\mathrm{F}}=\langle \A, \A\rangle^{1/2}$. The orthogonal group $\mathbb{O}(d)$ consists of matrices $\bU \in \mathbb{R}^{d \times d}$ such that $\bU^\top \! \bU = \mathbf{I}$, where $\mathbf{I} \in \mathbb{R}^{d \times d}$ is the identity matrix. The special orthogonal (or rotation) group $\mathbb{SO}(d)$ consists of matrices $\R \in \mathbb{R}^{d \times d}$ such that
\begin{equation}
\label{SOd}
 \R \in  \mathbb{O}(d) \quad  \text{ and } \quad \mathrm{det}(\R)=1.
\end{equation}
The special Euclidean group  $\mathbb{SE}(d)$ consists of orientation-preserving rigid motions in $d$-dimensions, which are simply rotations and translations. An element of $\mathbb{SE}(d)$ is represented using a pair $(\R,\t)$, where $\t \in \mathbb{R}^d$ and $\R \in \mathbb{SO}(d)$.

\section{Registration over $\mathbb{SE}(d)$}
\label{sec:Reg}

We generally consider the global registration of point sets over $\mathbb{SE}(d)$, though we are mainly interested in $d=2$ and $d=3$. 
Suppose we have $m$ point sets $\P_1,\ldots,\P_m \subset \mathbb{R}^d$.
Two point sets are said to overlap if they have at least one common point.
In general, not every pair of sets overlap. 
For $i,j \in [m]$, we use $i \sim j$ to mean that $\P_i \cap \P_j$ is not empty, and we denote the number of common points in this case by $n_{ij}$.
Moreover, we denote the local coordinates of the common points in $\P_i$ and $\P_j$ as
\begin{equation*}
\{\x^k_{ij} : 1 \leq k \leq n_{ij}\} \quad \text{and} \quad \{\x^k_{ji} : 1 \leq k \leq n_{ij}\}.
\end{equation*}
In the ideal case, the hypothesis is that these points are related to each other via a transform from $\mathbb{SE}(d)$. In particular, if we associate the transform $(\R_i,\t_i) \in \mathbb{SE}(d)$ with $\P_i$, then 
\begin{equation}
\label{rel}
\R_i \x^k_{ij} + \t_i  =  \R_j \x^k_{ji} + \t_j \qquad (i \sim j).
\end{equation}
Needless to say, the transforms are specified relative to some global reference frame.

\subsection{Least-Squares Formulation}

 In practice, the consistency relations \eqref{rel} only hold approximately due to various imperfections. The task of computing the transforms can be posed as an optimization problem in this case. In particular, we consider the least-squares formulation 
 \begin{equation}
\label{LSopt}
\min \ \sum_{i \sim j} \sum_{k=1}^{n_{ij}} \  \lVert \R_i \x^k_{ij} + \t_i -   \R_j \x^k_{ji} - \t_j \rVert_2^2.
\end{equation}
By introducing the matrix variables 
\begin{equation}
\label{defRT}
\R = \left[ \R_1 \cdots \R_m \right]  \quad \text{and} \quad  \T = \left[ \t_1 \cdots \t_m \right],
\end{equation}
we can express \eqref{LSopt} as
\begin{equation}
\label{obj}
\min \ \sum_{i \sim j} \sum_{k=1}^{n_{ij}} \  \lVert \R \del^k_{ij} + \T \e_{ij} \rVert_2^2,
\end{equation}
where $\e_{ij} = \e_i - \e_j$, $\e_i \in \mathbb{R}^m$ is the all-zeros vector with one at the $i$-th position, and 
\begin{equation*}
\del^k_{ij} = (\e_i  \otimes \mathbf{I})\  \x^k_{ij} - (\e_j  \otimes \mathbf{I})\  \x^k_{ji},
\end{equation*}
where  $\otimes$ is the Kronecker product. For fixed $\R$, the minimum of \eqref{obj} is attained when $\T = - \R \B \mathbf{L}^{\dagger}$, where 
\begin{equation*}
\mathbf{L} = \sum_{i \sim j} n_{ij} \e_{ij}\e_{ij}^\top \quad  \text{and} \quad \B = \sum_{i \sim j} \sum_{k=1}^{n_{ij}}  \del^k_{ij} \e_{ij}^\top,
\end{equation*}
and $\mathbf{L}^{\dagger}$ is the Moore-Penrose pseudo-inverse of $\mathbf{L}$. Substituting $\T = - \R \B \mathbf{L}^{\dagger}$, the objective in \eqref{obj} becomes
\begin{equation}
\label{exp}
\langle \C, \R^\top \R \rangle=  \sum_{ i,j=1}^m \text{trace}(\C_{ij} \R_i^\top \R_j),
\end{equation}
where 
\begin{equation*}
\D = \sum_{i \sim j}\sum_{k=1}^{n_{ij}}   \del_{ij}\del_{ij}^\top \quad \text{and}\quad  \C=\D-\B \mathbf{L}^{\dagger}\B^\top.
\end{equation*}
In \eqref{exp}, we use $\C_{ij} \in \mathbb{R}^{d \times d}$ to denote the $(i,j)$-th block (sub-matrix) of $\C$:
\begin{equation*}
\C_{ij} (p,q) = \C\left((i-1)d + p,(j-1)d + q\right),
\end{equation*}
where $ i,j \in [m]$ and $p,q \in [d]$. In summary, we have reduced \eqref{obj} to 
\begin{equation}
\label{manopt}
\underset{\R}{\text{min}} \ \ \langle \C, \R^\top \R \rangle \quad  \text{s.t.} \quad  \R_1,\ldots,\R_m \in \mathbb{SO}(d). 
\end{equation}
The reduction is in the following sense: if the minimizer of \eqref{manopt} is $\R^*$, then the minimizer of \eqref{obj} is $(\R^*,\T^*), \ \T^* = - \R^* \B \mathbf{L}^{\dagger}$. Note that the domain of \eqref{manopt} is $\mathbb{SO}(d) \times \cdots \times \mathbb{SO}(d)$, which is  not convex. However, $\mathbb{SO}(d)$ has the structure of a smooth Riemannian manifold (\cite{Absil2009}), and this is be used to design efficient numerical solvers in \cite{Krishnan2005}.

\subsection{Nonconvex SDP}

By a change-of-variables, we can express \eqref{manopt} as an SDP, where the objective is linear but the constraints are nonconvex. In particular, this is done using the Gram matrix $\G \in \mathbb{R}^{dm \times dm}$  of the variables $\R_1,\ldots,\R_m \in \mathbb{SO}(d)$, given by  
\begin{equation}
\label{defG}
\G_{ij}=\R_i^\top\! \R_j \qquad (i,j \in [m]).
\end{equation}
We reiterate that $\G_{ij}$ denotes the $(i,j)$-th block of $\G$, i.e.,
\begin{equation*}
\G = \begin{bmatrix} \G_{11} & \G_{12} & \dots & \G_{1m}\\ \G_{21} & \G_{22} & \dots & \G_{2m}\\ \vdots & \vdots & \ddots & \vdots \\ \G_{m1} & \G_{m2} & \dots & \G_{mm} \end{bmatrix}.
\end{equation*}
\begin{proposition}
The Gram matrix has the following properties:
\begin{description}
\item $(\mathrm{P1})$ $\G \in \mathbb{S}^{dm}_+$,
\item $(\mathrm{P2)}$ $\mathrm{rank}(\G) \leq d$,
\item $(\mathrm{P3})$ $\G_{ii}=\mathbf{I}, \ i \in [m]$, and
\item $(\mathrm{P4})$ $\G_{i,i+1} \in  \mathbb{SO}(d), \ i \in [m-1]$.
\end{description}
\end{proposition}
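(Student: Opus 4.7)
The plan is to verify each of the four properties directly from the defining identity \eqref{defG}, which can be packaged as $\G = \R^\top \R$ with $\R \in \mathbb{R}^{d \times dm}$ from \eqref{defRT}. Since three of the four claims are essentially one-liners, I would present (P1)--(P3) first and reserve (P4) --- the only one that actually distinguishes $\mathbb{SO}(d)$ from the larger $\mathbb{O}(d)$ --- for the end.

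For (P1), symmetry follows from $(\R^\top \R)^\top = \R^\top \R$, and positive semidefiniteness follows because $\x^\top \G \x = \lVert \R \x \rVert_2^2 \geq 0$ for every $\x \in \mathbb{R}^{dm}$. For (P2), I would invoke the standard rank identity $\mathrm{rank}(\R^\top \R) = \mathrm{rank}(\R)$ together with the trivial row-count bound $\mathrm{rank}(\R) \leq d$, since $\R$ has only $d$ rows. For (P3), the diagonal block $\G_{ii} = \R_i^\top \R_i$ equals $\mathbf{I}$ because $\R_i \in \mathbb{SO}(d) \subset \mathbb{O}(d)$.

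For (P4), the point is to show that $\R_i^\top \R_{i+1}$ inherits both orthogonality and the determinant-one property. Orthogonality of the product follows from $\mathbb{O}(d)$ being closed under multiplication and transposition, while the determinant computation gives $\det(\R_i^\top \R_{i+1}) = \det(\R_i)\det(\R_{i+1}) = 1 \cdot 1 = 1$, using $\det(\R_i^\top) = \det(\R_i)$. Hence $\G_{i,i+1}$ satisfies \eqref{SOd}.

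No step here poses a real obstacle; the content of the proposition is essentially that the group structure of $\mathbb{SO}(d)$ survives passage to the Gram matrix. The only subtlety worth flagging is that (P4) would fail for $\mathbb{O}(d)$-valued blocks in general --- the determinant constraint in \eqref{SOd} is exactly what forces $\G_{i,i+1}$ into $\mathbb{SO}(d)$ rather than merely $\mathbb{O}(d)$, and this is precisely what distinguishes the present setting from the reflection-inclusive formulation of \cite{Ahmed2017} mentioned in the related work.
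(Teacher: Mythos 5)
Your proof is correct and follows essentially the same route as the paper's: all four properties are read off from the factorization $\G = \R^\top\!\R$, with (P4) reduced to the fact that a product of rotations is a rotation. You merely spell out the determinant and orthogonality computations that the paper leaves implicit.
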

\begin{proof}
Since we can write $\G=\R^\top\!\R$, $(\mathrm{P1})$ is clear. Moreover, since $\R$ has full rank, and $\mathrm{rank}(\G)= \mathrm{rank}(\R)$, we in fact have equality in $(\mathrm{P2})$. Finally, note that $\G_{i,i+1} = \R_i^\top\R_{i+1}$, i.e., $\G_{i,i+1}$ is the product of rotations. In particular, $\G_{i,i} = \R_i^\top\R_{i}= \mathbf{I}$. This establishes $(\mathrm{P3})$ and $(\mathrm{P4})$.
\end{proof}

Apart from ($\mathrm{P1}$)-($\mathrm{P4}$), we can of course list other properties of $\G$. But the key observation is that ($\mathrm{P1}$)-($\mathrm{P4}$) are its essential properties, namely, they are sufficient to characterize $\G$ as a Gram matrix of rotations (see Appendix for the proof).
\begin{theorem}
\label{thm1}
If $\G$ satisfies $\mathrm{(P1)}\mbox{-}\mathrm{(P4)}$, then $\G$ is given by \eqref{defG} for some $\R_1,\ldots,\R_m \in \mathbb{SO}(d)$.
\end{theorem}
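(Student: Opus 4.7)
The plan is to recover $\R_1,\dots,\R_m$ by a rank-revealing factorization of $\G$ and then use properties (P3) and (P4) to upgrade the factors from $\mathbb{O}(d)$ to $\mathbb{SO}(d)$. First, by (P1) and (P2), $\G$ is symmetric positive semidefinite of rank at most $d$, so there exists $\R \in \mathbb{R}^{d \times dm}$ with $\G = \R^\top \R$ (take, e.g., the square root derived from the eigendecomposition of $\G$, truncated to its $d$ largest eigenvalues). Partition $\R$ into $d \times d$ blocks $\R = [\R_1 \ \cdots \ \R_m]$, so that $\G_{ij} = \R_i^\top \R_j$ for all $i,j \in [m]$.

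Next, I would invoke (P3), which gives $\R_i^\top \R_i = \G_{ii} = \mathbf{I}$ for each $i$. This immediately places every $\R_i$ in $\mathbb{O}(d)$, so $\det(\R_i) \in \{+1,-1\}$. To control the sign, I would invoke (P4): since $\R_i^\top \R_{i+1} = \G_{i,i+1} \in \mathbb{SO}(d)$ for $i \in [m-1]$, we have $\det(\R_i)\det(\R_{i+1}) = 1$, so consecutive determinants are equal. Propagating this equality along $i = 1,\dots,m-1$ shows that all $\det(\R_i)$ share a common value $\varepsilon \in \{+1,-1\}$.

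If $\varepsilon = +1$, each $\R_i \in \mathbb{SO}(d)$ and we are done. The only remaining case is $\varepsilon = -1$, which I would handle by exploiting the left orthogonal freedom of the factorization: for any $\bQ \in \mathbb{O}(d)$, $\tilde\R = \bQ \R$ also satisfies $\tilde\R^\top \tilde\R = \G$, and its blocks are $\tilde\R_i = \bQ \R_i$ with $\det(\tilde\R_i) = \det(\bQ)\det(\R_i)$. Choosing any $\bQ \in \mathbb{O}(d)$ with $\det(\bQ) = -1$ (for instance, $\bQ = \mathrm{diag}(-1,1,\dots,1)$) then yields $\tilde\R_1,\dots,\tilde\R_m \in \mathbb{SO}(d)$ with $\G_{ij} = \tilde\R_i^\top \tilde\R_j$, completing the reconstruction in \eqref{defG}.

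The one nontrivial step is the last one: ensuring the factors lie in $\mathbb{SO}(d)$ rather than merely $\mathbb{O}(d)$. Property (P4) is precisely what rules out mixed signs among the $\det(\R_i)$, and the gauge freedom $\R \mapsto \bQ \R$ in the Gram factorization is what lets us flip a uniformly wrong sign without disturbing (P1)–(P3). Both ingredients are essential; without (P4) the claim would fail, and without the left multiplication by a reflection the even-dimensional case could not be repaired by simple negation.
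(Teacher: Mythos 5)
Your proposal is correct and follows essentially the same route as the paper's own proof: a rank-$d$ spectral factorization $\G = \R^\top \R$, property $(\mathrm{P3})$ to place each block in $\mathbb{O}(d)$, property $(\mathrm{P4})$ to force all determinants to share a common sign, and a global left-multiplication by a reflection to repair the all-negative case. The only cosmetic difference is that the paper first pins down $\mathrm{rank}(\G)=d$ from $(\mathrm{P2})$ and $(\mathrm{P3})$ before factoring, whereas you factor directly from $\mathrm{rank}(\G)\le d$ and let $(\mathrm{P3})$ do that work implicitly; both are fine.
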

At this point, we note that Theorem \ref{thm1} remains valid if we use $\mathrm{rank}(\G) = d$ in ($\mathrm{P2}$). The reason why we use the present formulation is that the set $\{\G  \in \mathbb{S}^{dm}: \mathrm{rank}(\G) \leq d\}$ is closed in $\mathbb{S}^{dm}$, i.e., it contains all its limit points. In contrast, the set $\{\G \in \mathbb{S}^{dm}: \mathrm{rank}(\G) = d\}$ is not closed. For example, the sequence of matrices  $\mathbf{I},(1/2)\mathbf{I}, (1/3)\mathbf{I}, \ldots$ are of rank $d$, but their limit is the zero matrix which has zero rank. The algorithmic implication of this technical point will be evident in the next section, where we will be required to compute the projection on a set. If the set is not closed, then the projection might not be defined.

Based on Theorem \ref{thm1}, we substitute $\G=\R^\top\!\R$ in \eqref{manopt} and consider the following problem:
\begin{equation}
\label{ncvxSDP}
\begin{aligned}
& \underset{\G}{\text{min}} & & \langle \C, \G \rangle  \\
& \text{s.t.} & & \G \in \mathbb{S}^{dm}_+, \ \mathrm{rank}(\G) \leq d, \\
&&& \G_{ii}=\mathbf{I}, \ i \in [m], \\
&&& \G_{i,i+1} \in \mathbb{SO}(d), \ i \in [m-1].
\end{aligned}
\end{equation}
Note that the objectives in \eqref{manopt} and \eqref{ncvxSDP} are identical. Moreover, following Theorem \ref{thm1}, there exists a one-to-one correspondence between the domains of \eqref{manopt} and \eqref{ncvxSDP}. Therefore, we have the following important observation.
\begin{theorem} 
\label{thm2} 
Problems \eqref{manopt} and \eqref{ncvxSDP} are equivalent, that is, $\G^{\star}$ is a minimizer of \eqref{manopt}  if and only if $\G^{\star}$ a minimizer of \eqref{ncvxSDP}.
\end{theorem}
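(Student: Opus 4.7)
The plan is to build a value-preserving correspondence between the feasible sets of \eqref{manopt} and \eqref{ncvxSDP}, so that optimality transfers cleanly in both directions. The forward direction will rely on the Proposition (properties $\mathrm{(P1)}$--$\mathrm{(P4)}$ of the Gram matrix), while the reverse direction will invoke Theorem \ref{thm1}.

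First I would introduce the Gram map $\Phi$ on tuples of rotations by $\Phi(\R_1,\ldots,\R_m) = \R^\top\R$, where $\R=[\R_1\cdots\R_m]$. By the Proposition, $\Phi$ takes values in the feasible set of \eqref{ncvxSDP}, and by construction the objective at $\Phi(\R)$ equals $\langle \C, \R^\top\R \rangle$, which is exactly the objective of \eqref{manopt} at $\R$. Hence $\Phi$ is value-preserving.

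Next I would invoke Theorem \ref{thm1} to see that every $\G$ satisfying $\mathrm{(P1)}$--$\mathrm{(P4)}$ arises as $\Phi(\R')$ for some admissible tuple $\R'$; thus $\Phi$ is surjective onto the feasible set of \eqref{ncvxSDP}. Given this, the equivalence is essentially forced: if $\R^\star$ minimizes \eqref{manopt} with value $v^\star$ and $\G^\star := \Phi(\R^\star)$, then for any feasible $\G$ of \eqref{ncvxSDP} I pick $\R$ with $\Phi(\R) = \G$ and conclude $\langle \C, \G \rangle = \langle \C, \R^\top\R \rangle \geq v^\star = \langle \C, \G^\star \rangle$, so $\G^\star$ is optimal for \eqref{ncvxSDP}. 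The converse is symmetric: starting from a minimizer $\G^\star$ of \eqref{ncvxSDP}, Theorem \ref{thm1} supplies $\R^\star$ with $\Phi(\R^\star) = \G^\star$, and the same inequality chain shows $\R^\star$ is optimal for \eqref{manopt}.

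I do not anticipate a serious obstacle here; Theorem \ref{thm2} is almost a corollary of Theorem \ref{thm1}, and the real content sits in the Proposition (already proved) and in the proof of Theorem \ref{thm1} (deferred to the Appendix). The only mild subtlety worth flagging is that $\Phi$ is not injective --- a common left rotation $\mathbf{Q} \in \mathbb{SO}(d)$ applied to every $\R_i$ leaves $\Phi$ invariant --- so the statement is best read as asserting equality of optimal values together with a bijection between the set of minimizing Gram matrices of \eqref{ncvxSDP} and the $\Phi$-image of the minimizers of \eqref{manopt}. This gauge ambiguity is harmless and does not affect any of the inequalities above.
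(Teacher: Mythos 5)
Your proposal is correct and follows essentially the same route as the paper, which justifies Theorem \ref{thm2} in the surrounding text by noting that the objectives of \eqref{manopt} and \eqref{ncvxSDP} coincide under the substitution $\G=\R^\top\!\R$ and that Theorem \ref{thm1} gives the correspondence between the two feasible sets. Your remark on the non-injectivity of the Gram map (the global rotation ambiguity) is a worthwhile clarification of the paper's informal claim of a ``one-to-one correspondence,'' but it does not constitute a different argument.
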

The optimization in \eqref{ncvxSDP} is a nonconvex SDP (\cite{Diamond2017}), where the variable is positive semidefinite and the objective is linear. The constraint $\G_{ii}=\mathbf{I}$ is affine. However, the constraints
\begin{equation}
\label{ncvx}
\mathrm{rank}(\G) \leq d \quad \text{and} \quad \G_{i,i+1} \in \mathbb{SO}(d)
\end{equation}
are nonconvex. We develop an ADMM-based solver for \eqref{ncvxSDP}. This is based on two crucial observations---the objective in  \eqref{ncvxSDP} is linear, and the projections onto the feasible sets in \eqref{ncvx} can be computed in closed-form.

\subsection{Variable Splitting and ADMM}

Following the success of ADMM for convex programming (\cite{Boyd2011}), the ADMM framework has been extended to several nonconvex problems (\cite{Chartrand2013,Miksik2014,Diamond2017}). Preliminary theoretical results concerning the validity of such formal extensions have also been reported (\cite{Wang2015,Hong2016}). We propose an ADMM solver for  \eqref{ncvxSDP} using variable splitting. In particular, we define the following subsets of $\mathbb{S}^{dm}$:
\begin{equation*}
 \Omega = \Big\{\mathbf{X} \in \mathbb{S}^{dm}_+ ; \ \mathrm{rank}(\mathbf{X}) \leq d \Big\},
\end{equation*}
and
\begin{equation*}
\Theta = \Big\{ \mathbf{X}_{ii}=\mathbf{I}, i \in [m]; \ \mathbf{X}_{j,j+1} \in \mathbb{SO}(d), j \in [m-1] \Big\}.
\end{equation*}
Simply stated, $\Theta$ consists of symmetric matrices whose diagonal blocks are identity and super-diagonal blocks are rotations. Note that we can equivalently write \eqref{ncvxSDP} as
\begin{equation}
\label{split}
\begin{aligned}
& \underset{\G, \H}{\text{min}} & & \langle \C, \G \rangle  \\
& \text{s.t.} & & \G \in \Omega, \\
&&& \H  \in \Theta, \\
&&& \G - \H = 0.
\end{aligned}
\end{equation}
The purpose of this splitting is to group the constraints into two distinct classes, albeit at the expense of a linear constraint. For fixed $\rho > 0$, the augmented Lagrangian for \eqref{split} is 
\begin{equation*}
\label{AL}
\L_{\rho}(\G,\H,\Lam)= \langle\C, \G \rangle + \langle \Lam , \G-\H \rangle + \frac{\rho}{2}\Vert \G-\H\Vert_{\mathrm{F}}^{2},
\end{equation*}
where $\Lam \in \mathbb{S}^{dm}$ is the dual variable for the constraint $ \G-\H=\mathbf{0}$ (\cite{Boyd2011}). Starting with initializations $\H^{0}$ and $\Lam^{0}$, the ADMM solver uses the following sequence of updates for $k \geq 0$:
\begin{equation}
\label{Gupdate} 
\G^{k+1} =\underset{\G \in \Omega}{\text{argmin}}\,\,\,\L_{\rho}(\G,\H^{k},\Lam^{k}),
\end{equation}
\begin{equation}
\label{Hupdate} 
 \H^{k+1} =\underset{\H \in \Theta}{\text{argmin}}\,\,\,\L_{\rho}(\G^{k+1},\H,\Lam^{k}),
 \end{equation}
\begin{equation}
\label{updateLam}
\Lam^{k+1}=\Lam^{k}+\rho(\G^{k+1}-\H^{k+1}).
\end{equation}
Note that we can write the objective in \eqref{Gupdate} as 
\begin{equation*}
 \frac{\rho}{2}\Vert \G- (\H^k - \rho^{-1} (\C+\Lam^k ) ) \Vert_{\mathrm{F}}^{2} \ + \ \text{ constant },
\end{equation*}
where the constant does not depend on $\G$. Therefore,
\begin{equation}
\label{projG}
\G^{k+1} = \Pi_{\Omega} \big(\H^k - \rho^{-1} (\C+\Lam^k ) \big),
\end{equation}
where $\Pi_{\Omega}(\A)$ is the projection of $\A \in \mathbb{S}^{dm}$ onto $\Omega$: 
\begin{equation}
\label{proj}
\Pi_{\Omega}(\A) = \underset{\X \in \Omega}{\text{argmin}} \ \lVert \X - \A \rVert_{\mathrm{F}}^2.
\end{equation}
Similarly, it can be verified that
\begin{equation}
\label{projH}
\H^{k+1} = \Pi_{\Theta} (\G^{k+1} + \rho^{-1} \Lam^k ).
\end{equation}
In summary, we were able to express the primal subproblems as orthogonal projections. This is where the linearity of the objective in \eqref{split} plays an important role. At this point, note that both $\Omega$ and $\Theta$ are closed sets. Hence, $\Pi_{\Omega}$ and $\Pi_{\Theta}$ are well-defined. However, if we had defined $ \Omega$ to be the set of rank-$d$ matrices, then $\Pi_{\Omega}(\A)$ would not be defined if the rank of $\A$ were strictly less than $d$.

The complete algorithm is summarized in Algorithm \ref{algo}. We now explain how steps \ref{updateG} and \ref{updateH} can be computed in closed-form.
\begin{algorithm}
\label{algo}
\caption{ADMM Solver.}
\textbf{Input}: $\C$ and $\rho>0$.\\
Initialize $\H$ and $\Lam$.\\
\While{\textit{the stopping criterion is not met}}{
$\G \leftarrow \Pi_{\Omega} (\H - \rho^{-1} (\C+\Lam ) )$.\\ \label{updateG}
$\H \leftarrow \Pi_{\Theta} (\G + \rho^{-1} \Lam )$.\\ \label{updateH}
$\Lam \leftarrow \Lam+\rho(\G-\H)$.
}
\end{algorithm}

\subsection{Matrix Projections}

Recall that $\Omega$ and $\Theta$ are closed in $\mathbb{S}^{dm}$. As a result, projections \eqref{projG} and \eqref{projH} are well-defined. In particular, by adapting the Eckart-Young theorem \cite{Eckart1936} for positive semidefinite matrices, we can deduce the following.
 \begin{theorem}
\label{thm3}
Let the eigendecomposition of $\A \in \mathbb{S}^{dm}$ be
\begin{equation*}
\A = \sum_{i=1}^{dm} \lambda_i \boldsymbol{u}_i \boldsymbol{u}_i ^{\top},
\end{equation*}
where $\lambda_1 \geq  \cdots \geq  \lambda_{dm}$ are its eigenvalues, and $\boldsymbol{u}_1,\ldots, \boldsymbol{u}_{dm}$ the corresponding eigenvectors. Then
\begin{equation*}
\Pi_{\Omega}(\A) = \sum_{i=1}^d \max(\lambda_i,0) \boldsymbol{u}_i \boldsymbol{u}_i ^{\top}.
\end{equation*}
\end{theorem}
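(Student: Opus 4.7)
The plan is to follow the standard blueprint for PSD low-rank approximation: split $\A$ into its positive and negative spectral parts, reduce the projection onto $\Omega$ to the rank-constrained PSD approximation of the positive part, and then invoke Eckart-Young-Mirsky to identify the minimizer. Using the hypothesized eigendecomposition of $\A$, define
\begin{equation*}
\A_{+} = \sum_{i\,:\,\lambda_i > 0} \lambda_i \boldsymbol{u}_i \boldsymbol{u}_i^{\top}, \qquad \A_{-} = \sum_{i\,:\,\lambda_i < 0} (-\lambda_i) \boldsymbol{u}_i \boldsymbol{u}_i^{\top},
\end{equation*}
so that $\A = \A_{+} - \A_{-}$ with $\A_{+}, \A_{-} \in \mathbb{S}^{dm}_{+}$ and $\langle \A_{+}, \A_{-} \rangle = 0$ (orthogonal eigenspaces).

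The next step is the key algebraic identity. For any $\X \in \Omega$, I would expand
\begin{equation*}
\lVert \X - \A \rVert_{\mathrm{F}}^{2} = \lVert \X - \A_{+} \rVert_{\mathrm{F}}^{2} + 2 \langle \X, \A_{-} \rangle + \lVert \A_{-} \rVert_{\mathrm{F}}^{2},
\end{equation*}
using $\langle \A_{+}, \A_{-} \rangle = 0$. Because both $\X$ and $\A_{-}$ are positive semidefinite, the cross term $\langle \X, \A_{-} \rangle$ is nonnegative. Therefore
\begin{equation*}
\lVert \X - \A \rVert_{\mathrm{F}}^{2} \ \geq \ \lVert \X - \A_{+} \rVert_{\mathrm{F}}^{2} + \lVert \A_{-} \rVert_{\mathrm{F}}^{2},
\end{equation*}
with equality whenever $\mathrm{range}(\X)$ is contained in the span of the nonnegative eigenvectors of $\A$. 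This reduces the task to minimizing $\lVert \X - \A_{+} \rVert_{\mathrm{F}}^{2}$ over $\X \in \Omega$, while keeping a watchful eye on the orthogonality condition needed for equality.

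For this reduced problem, I would apply the classical Eckart-Young-Mirsky theorem to the symmetric matrix $\A_{+}$, which asserts that the best rank-at-most-$d$ approximation in Frobenius norm is obtained by truncating the spectral decomposition to the $d$ largest eigenvalues, namely $\sum_{i=1}^{d} \max(\lambda_i, 0) \boldsymbol{u}_i \boldsymbol{u}_i^{\top}$; being a sum of rank-one PSD terms, it automatically lies in $\Omega$, so the unconstrained best rank-$d$ approximant of $\A_{+}$ coincides with the best one in $\Omega$. Finally, I would verify that this candidate $\X^{\star}$ satisfies the equality condition $\langle \X^{\star}, \A_{-} \rangle = 0$, which is immediate because $\X^{\star}$ is supported on $\mathrm{span}\{\boldsymbol{u}_1,\ldots,\boldsymbol{u}_d\}$, which is orthogonal to $\mathrm{range}(\A_{-})$. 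Hence $\X^{\star}$ attains the lower bound derived above, and the proof is complete.

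The only delicate point is the equality case: if some of the largest $d$ eigenvalues of $\A$ are negative, the corresponding $\boldsymbol{u}_i$'s would lie in $\mathrm{range}(\A_{-})$, so one must check carefully that zeroing out the coefficients via $\max(\lambda_i, 0)$ simultaneously preserves optimality of the $\A_{+}$ approximation and enforces $\langle \X^{\star}, \A_{-}\rangle = 0$. Both are straightforward consequences of the spectral decomposition, but they deserve an explicit mention in the write-up.
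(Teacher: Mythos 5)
Your proof is correct, but it follows a genuinely different route from the paper's. The paper parametrizes an arbitrary $\X \in \Omega$ as $\bV \Gamma \bV^\top$ with $\Gamma$ a nonnegative diagonal of rank at most $d$, reduces the objective to $\lVert \mathbf{K}\Gamma\mathbf{K}^\top - \Lambda\rVert_{\mathrm{F}}^2$ with $\mathbf{K} = \bU^\top\bV$, asserts that the optimal orthogonal alignment is $\mathbf{K} = \mathbf{I}$, and then solves the resulting diagonal problem $\min_\Gamma \sum_i (\mu_i - \lambda_i)^2$. You instead split $\A = \A_+ - \A_-$ into its positive and negative spectral parts, use $\langle \X, \A_-\rangle \geq 0$ for $\X \succeq 0$ to obtain the lower bound $\lVert \X - \A\rVert_{\mathrm{F}}^2 \geq \lVert \X - \A_+\rVert_{\mathrm{F}}^2 + \lVert \A_-\rVert_{\mathrm{F}}^2$, and then invoke Eckart--Young on $\A_+$, noting that the truncated spectral expansion is automatically in $\Omega$ and is orthogonal to $\A_-$ so the bound is attained. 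Your route has the advantage of isolating the sign constraint cleanly and of resting on the classical Eckart--Young theorem as a black box, whereas the crux of the paper's argument --- that the minimum over $\mathbf{K} \in \mathbb{O}(dm)$ is attained at $\mathbf{K} = \mathbf{I}$ --- is only asserted, not proved (it is essentially a trace/rearrangement inequality of the same depth as Eckart--Young itself). The paper's route, in exchange, rederives the rank truncation from first principles once that alignment step is granted. Both arguments correctly identify $\sum_{i=1}^d \max(\lambda_i,0)\boldsymbol{u}_i\boldsymbol{u}_i^\top$ as a minimizer; neither addresses uniqueness when eigenvalues are tied, so you are not at a disadvantage there.
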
 
For completeness, we sketch the proof of Theorem \ref{thm3} in the Appendix.  In practice, we can efficiently compute $\Pi_{\Omega}(\A)$ using the power method (\cite{golub2012matrix}), since we only require the top-$d$ eigenvalues/eigenvectors of $\A$. 
The relevance of property ($\mathrm{P2}$) is now evident. Namely,  $\Pi_{\Omega}(\A)$ might not be defined if instead of requiring the matrices in $\Omega$ to have rank at most $d$, we insist that the rank be exactly $d$. In particular, if $\A$ has less than $d$ positive eigenvalues, then $\Pi_{\Omega}(\A)$ does not exist as the minimum in \eqref{proj} is not attained in this case.

To compute $\Pi_{\Theta}(\A)$, note that if $\X \in \Theta$, then we can write 
\begin{align*}
 \lVert \bX - \bA \rVert_{\mathrm{F}}^2 &=  \sum_{i=j} \lVert \mathbf{I} - \A_{i,j} \rVert_{\mathrm{F}}^2 + \sum_{|i-j| =1} \lVert \X_{i,i+1} -\A_{i,i+1} \rVert_{\mathrm{F}}^2\\
 & +  \sum_{|i-j| \geq 2} \lVert \X_{i,j} -\A_{i,j} \rVert_{\mathrm{F}}^2.
 \end{align*}
Following the definition of $\Theta$, it is then evident that
\begin{equation*}
\Pi_{\Theta}(\bA)_{ij} 
=\begin{cases} 
\mathbf{I} & i=j,\\
\Pi_{\mathbb{SO}(d)} (\A_{ij}) & |i-j|=1,\\
\A_{ij} & |i - j| \geq 2.
\end{cases}
\end{equation*}
Simply stated, we can determine $\Pi_{\Theta}(\A)$ by setting the diagonal blocks of $\A$ to $\mathbf{I}$, projecting the super- and sub-diagonal blocks of $\A$ onto $\mathbb{SO}(d)$, and keeping the remaining blocks of $\A$ unchanged. Of course, since the resulting matrix is required to be symmetric, we simply project  the $m-1$ super-diagonal blocks of $\A$ and set the  sub-diagonal blocks using symmetry.
In this regard, we record the following result.
\begin{theorem}
\label{thm4}
Let the SVD of $\H \in \mathbb{R}^{d \times d}$ be
 \begin{equation*}
 \H=\bU \mathrm{diag}(\sigma_1,\ldots,\sigma_{d-1},\sigma_d ) \bV^\top,
 \end{equation*}
 where $\sigma_1 \geq  \cdots \geq  \sigma_d \geq 0$ and $\bU,\bV \in \mathbb{O}(d)$. Then
  \begin{equation*}
 \Pi_{\mathbb{SO}(d)}(\H)=\bU \mathrm{diag}\left(1,\ldots,1, \mathrm{det}(\bU \bV^\top) \right)\bV^\top.
 \end{equation*}
\end{theorem}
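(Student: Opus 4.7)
The plan is to recast the projection as a trace maximization and resolve it via the SVD. Since $\|\R - \H\|_{\mathrm{F}}^2 = d - 2\,\text{trace}(\R^\top \H) + \|\H\|_{\mathrm{F}}^2$ for any $\R$ with $\R^\top \R = \bI$, the projection problem is equivalent to maximizing $\text{trace}(\R^\top \H)$ over $\R \in \mathbb{SO}(d)$. Plugging in the SVD $\H = \bU \Sigma \bV^\top$ and making the orthogonal change of variables $\bW := \bU^\top \R \bV$, I map $\mathbb{SO}(d)$ bijectively onto $\{\bW \in \mathbb{O}(d) : \det(\bW) = \det(\bU\bV^\top)\}$. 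Writing $\epsilon := \det(\bU\bV^\top) \in \{-1,+1\}$ and using that $\Sigma$ is diagonal, the objective simplifies to $\text{trace}(\bW^\top \Sigma) = \sum_i W_{ii}\,\sigma_i$. So the problem reduces to maximizing $\sum_i W_{ii}\,\sigma_i$ over $\bW \in \mathbb{O}(d)$ subject to $\det(\bW) = \epsilon$.

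When $\epsilon = +1$, orthogonality immediately yields $|W_{ii}| \leq 1$, so the objective is bounded by $\sum_i \sigma_i$ and the bound is attained at $\bW = \bI$. The real difficulty is the case $\epsilon = -1$, since the naive entrywise bound still gives $\sum_i \sigma_i$ but is no longer attainable (the maximizer would have to be $\bI$, which violates $\det(\bW) = -1$). The tight bound I want to establish is $\sigma_1 + \cdots + \sigma_{d-1} - \sigma_d$, attained at $\bW^\star = \mathrm{diag}(1, \ldots, 1, -1)$. My approach is to pass to the symmetric part $\bS := \tfrac{1}{2}(\bW + \bW^\top)$: since $\Sigma$ is diagonal, one checks $\text{trace}(\bW \Sigma) = \text{trace}(\bS \Sigma)$, and via the real Schur normal form of an orthogonal matrix (block-diagonal with $\pm 1$ scalar blocks and $2 \times 2$ planar rotations), the eigenvalues of $\bS$ are precisely the $\pm 1$ blocks together with the cosines of the rotation angles, all lying in $[-1,1]$. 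Crucially, $\det(\bW) = -1$ forces an odd number of $-1$ blocks, so $-1$ must appear as an eigenvalue of $\bS$, which pins down $\lambda_d(\bS) = -1$.

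The final step is a clean application of Ky Fan's trace inequality for symmetric matrices: with $\lambda_1(\bS) \geq \cdots \geq \lambda_d(\bS) = -1$ and $\sigma_1 \geq \cdots \geq \sigma_d \geq 0$,
\[
\text{trace}(\bS \Sigma) \;\leq\; \sum_{i=1}^d \lambda_i(\bS)\,\sigma_i \;\leq\; \sum_{i=1}^{d-1} \sigma_i - \sigma_d,
\]
where the second inequality uses $\lambda_i(\bS) \leq 1$ for $i < d$ and $\lambda_d(\bS) = -1$. Equality is achieved at $\bW^\star = \mathrm{diag}(1,\ldots,1,-1)$, confirming optimality in both cases. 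Combining the two cases, the optimizer is $\bW^\star = \mathrm{diag}(1, \ldots, 1, \epsilon)$, and undoing the change of variables gives the claimed formula $\R^\star = \bU\,\mathrm{diag}(1, \ldots, 1, \det(\bU \bV^\top))\,\bV^\top$. The main obstacle, as indicated, is the $\epsilon = -1$ case; the symmetrization trick combined with Ky Fan's inequality is what simultaneously exploits the orthogonality and the determinant constraint, which an entrywise argument cannot do.
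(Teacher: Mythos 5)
Your proof is correct, but it takes a genuinely different route from the paper's. Both arguments begin by rewriting the projection as the maximization of $\mathrm{trace}(\R^\top\H)$ over $\mathbb{SO}(d)$, but from there they diverge. The paper argues via first-order optimality: it takes a maximizer $\X_0$ (whose existence follows from compactness), perturbs it along curves $t\mapsto \X_0 e^{t\mathbf{M}}$ with $\mathbf{M}$ anti-symmetric to show that $\A^\top\X_0$ is symmetric, then diagonalizes $\A^\top\X_0$ against $(\A^\top\X_0)^2=\X_0^\top\A\A^\top\X_0$ to write its eigenvalues as $s_i\sigma_i$ with $s_i\in\{-1,1\}$, and finally uses $\det(\A^\top\X_0)=\det(\bU\bV^\top)\prod_i\sigma_i$ to constrain the signs; it explicitly defers the degenerate case $\det(\A)=0$ (``can be worked out similarly''), since there the determinant identity becomes $0=0$ and no longer pins down $\prod_i s_i$. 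You instead give a direct global upper bound: the change of variables $\bW=\bU^\top\R\bV$ converts the problem to maximizing $\sum_i W_{ii}\sigma_i$ over $\bW\in\mathbb{O}(d)$ with $\det(\bW)=\det(\bU\bV^\top)$, and in the hard case $\det(\bW)=-1$ you pass to the symmetric part $\bS$, use the real normal form of an orthogonal matrix to locate all eigenvalues of $\bS$ in $[-1,1]$ with $-1$ forced to occur, and close with the Ky Fan/von Neumann trace inequality. Your version buys uniform treatment of $\sigma_d=0$ (no case split on $\det(\A)$), needs no separate existence-of-maximizer step since the bound is proved for every feasible point and attained explicitly, and makes transparent exactly where the ordering $\sigma_d\le\sigma_i$ enters; the cost is reliance on two nontrivial imported facts (the orthogonal normal form and the Ky Fan inequality), whereas the paper's variational argument is more self-contained and generalizes more readily to other matrix manifolds where an explicit normal form is unavailable.
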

A proof of this result using the theory of Lagrange multipliers can be found in \cite{Umeyama1991,kabsch1976solution}. We provide a somewhat different proof in the Appendix. 

We note that the optimum $\H^\ast$ obtained using Algorithm \ref{algo} has rank $d$ or more (this follows from the definition of $\Theta$). If the rank of $\H^\ast$ is exactly $d$, then it follows from Theorem \ref{thm1} that we can factor it as $\H^\ast = \R^\top\! \R$, where $\R \in \mathbb{R}^{d \times dm}$. This can be done using the eigendecomposition of $\H^\ast$; see \cite{Chaudhury2015}. In particular, if we partition $\R$ into $m$ blocks of size $d \times d$, then each block would be a rotation matrix. However, if the rank of $\H^\ast$ is greater than $d$, we have to use some form of ``rounding'' to extract the rotations (which are no longer uniquely defined). In the present case, we have use spectral rounding (see Section $2.3$ in \cite{Chaudhury2015}).

\subsection{Complexity Analysis}

The breakup of the computation complexity for the proposed method is given in Table \ref{complexitytable}. The one-time cost of building the matrix $C$ is $O(m^2(m+d+d^2)) = O(m^3)$, since $m \gg d$ in practice. The cost per iteration is dominated by the projections steps $4$ and $5$ in Table \ref{complexitytable}. For step $5$, we need to find the top $d$ eigenvectors of an $md \times md$ matrix. This can be done efficiently using Arnoldi iterations (\cite{GoluVanl96}), where the per-iteration cost is $O(m^2d^2)$. On the other hand, we need to compute the SVD of $m-1$ matrices in step $5$, each of size $d \times d$. This can be done efficiently using QR-type algorithms (\cite{GoluVanl96}). Finally, a single SVD is required in step $6$ (one time cost). In summary, the per-iteration cost of Algorithm \ref{algo} is essentially the partial eigendecomposition of a $md \times md$ matrix, and the SVD of $m-1$ matrices of size $d \times d$. As a result, we can scale up our algorithm to problems involving large number of point sets. 

\begin{table}[!ht]
\centering
\footnotesize
\begin{tabular}{|l|l|l}
\cline{1-2}
\multicolumn{1}{|c|}{\textbf{Operation}}                     & \multicolumn{1}{c|}{\textbf{Complexity}} &                                    \\ \cline{1-2}
$1. L^{\dagger}$                                             & $O(m^3)$                                 &                                    \\ \cline{1-2}
$2. B L^{\dagger} B^\top$                                    & $O(m^2d)$                                &                                    \\ \cline{1-2}
$3. C=D-B L^{\dagger} B^\top$                                & $ O(m^2d^2)$                             &                                    \\ \cline{1-2}
$4. G \leftarrow \Pi_{\Omega} (H - \rho^{-1} (C+\Lambda ) )$ & $ O(km^2d^2)$                            & \rdelim\}{2}{0.1mm}[\small{recurring cost}]  \\ \cline{1-2}
$5. H \leftarrow \Pi_{\Theta} (G + \rho^{-1} \Lambda )$      & $(m-1)O(d^3)$                            &                                    \\ \cline{1-2}
$6. $final rounding                                           & $O(d^3)$                                 &                                    \\ \cline{1-2}
\end{tabular}
\caption{Breakup of the complexity of Algorithm \ref{algo} ($k$ in step $4$ is the number of Arnoldi iterations). }
\label{complexitytable}
\end{table}

\section{Numerical Results}
\label{sec:Numerics}

In this section, we present numerical results using simulated point sets for which the point-to-point correspondences are known. This allows us to factor out the task of finding the correspondences, which will be  addressed in Section \ref{sec:MR}. Moreover, this allows us to manipulate the true correspondences and access their impact on the registration. In particular, we empirically investigate the following questions:

\begin{itemize}
\item Are the iterates in Algorithm \ref{algo} convergent? 
\item If so, do they converge to the global minimum of \eqref{ncvxSDP}, or simply to some saddle point?
\item How does the initialization and $\rho$ affect convergence?
\item Is the algorithm stable to perturbations of coordinates and correspondences? 
\end{itemize}
The first three questions can be theoretically resolved for convex problems (\cite{Boyd2011}). However, the situation is more complicated with nonconvex problems, and only some preliminary results have been established (\cite{Wang2015,Hong2016}). Similar to (\cite{Chartrand2013,Miksik2014,Wang2015,Diamond2017}), we will demonstrate the practical utility of the ADMM solver. In fact, as with the above papers, we will show that the solver works well in practice.

\begin{figure}
\center
\subfloat[Point set $1$.]{\includegraphics[width=0.2\linewidth]{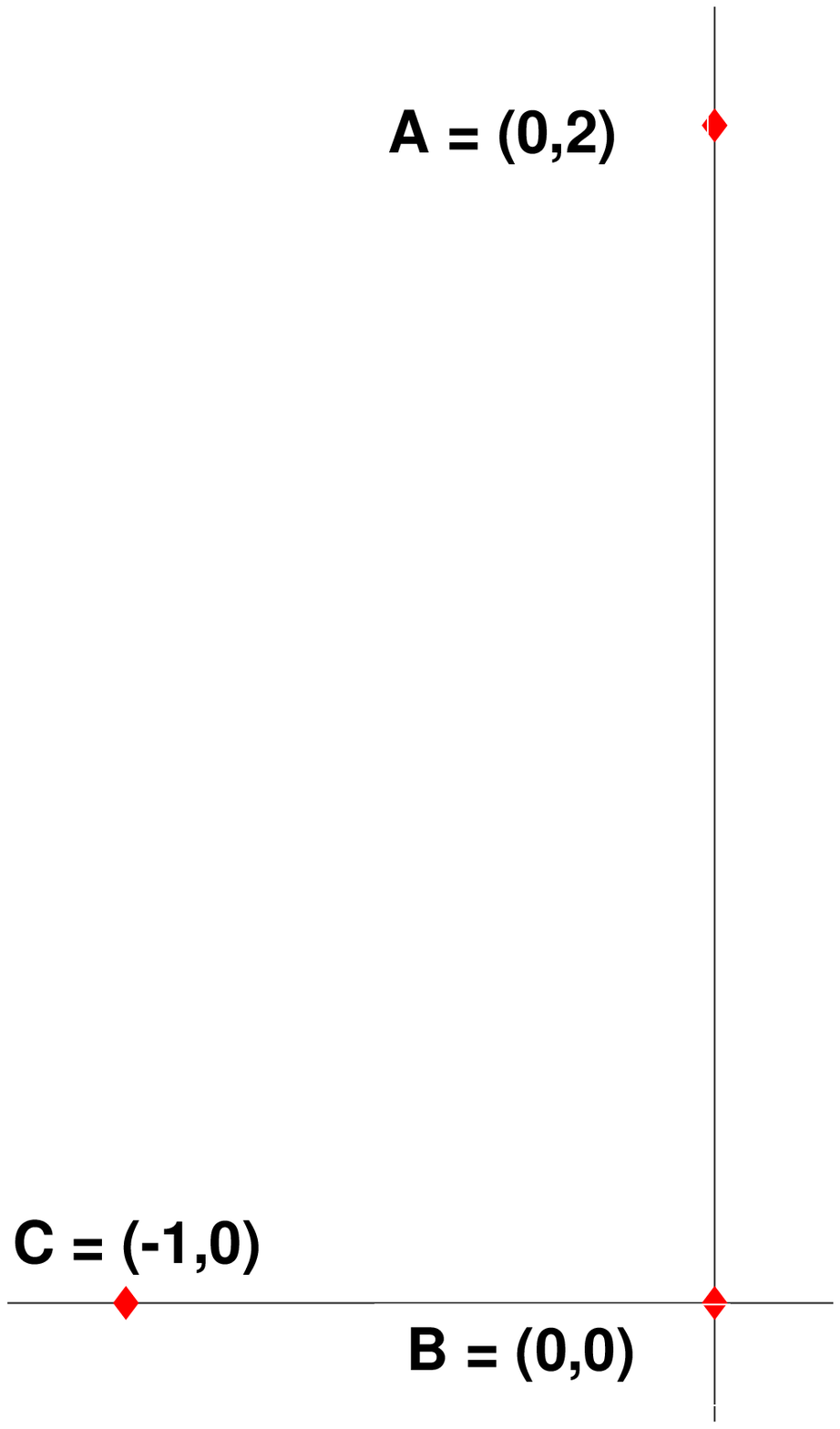}} \hspace{4mm}
\subfloat[Point set $2$.]{\includegraphics[width=0.2\linewidth]{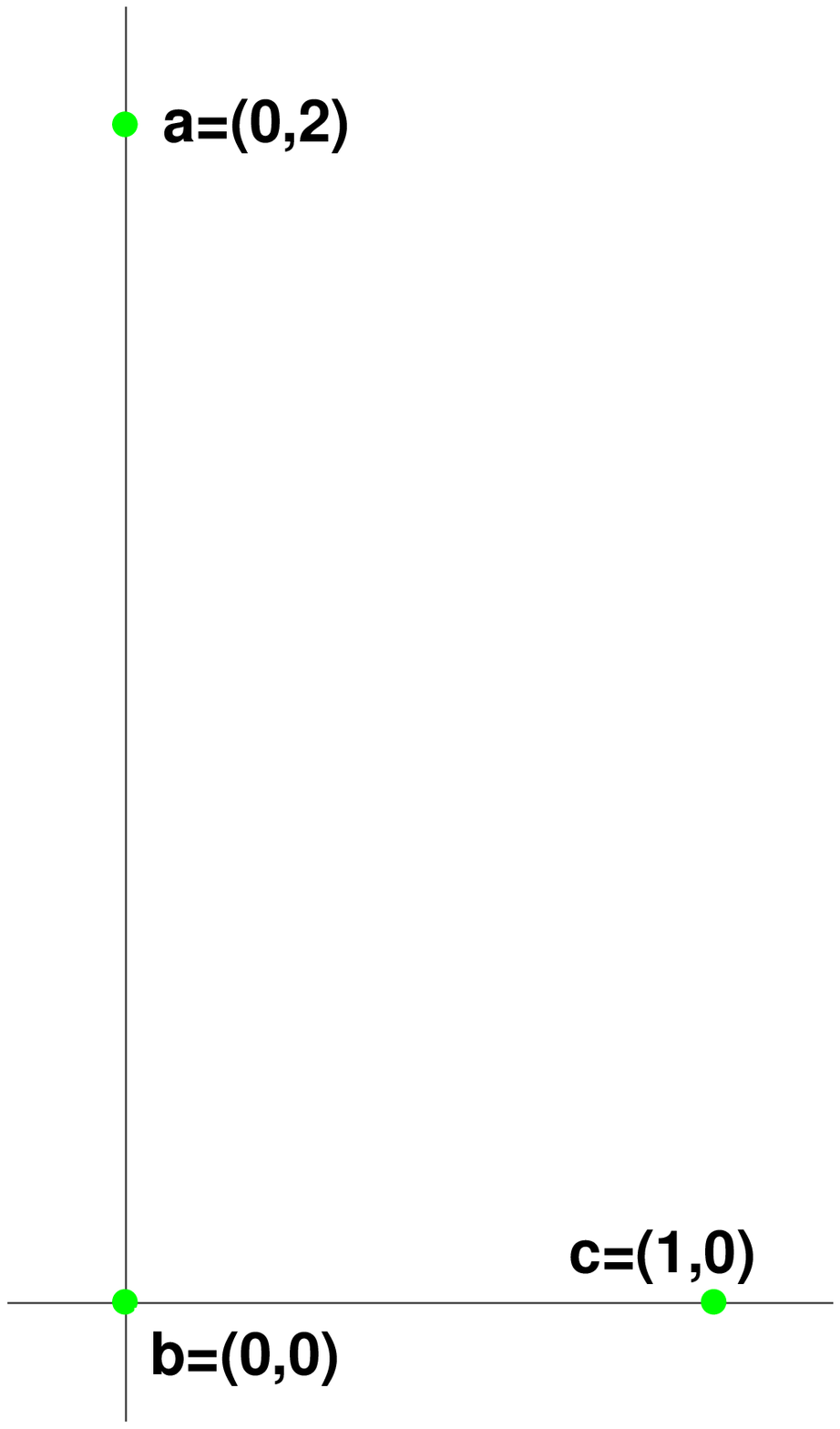}} \hspace{4mm}
\subfloat[After registration.]{\includegraphics[height=0.35\linewidth,width=0.3\linewidth]{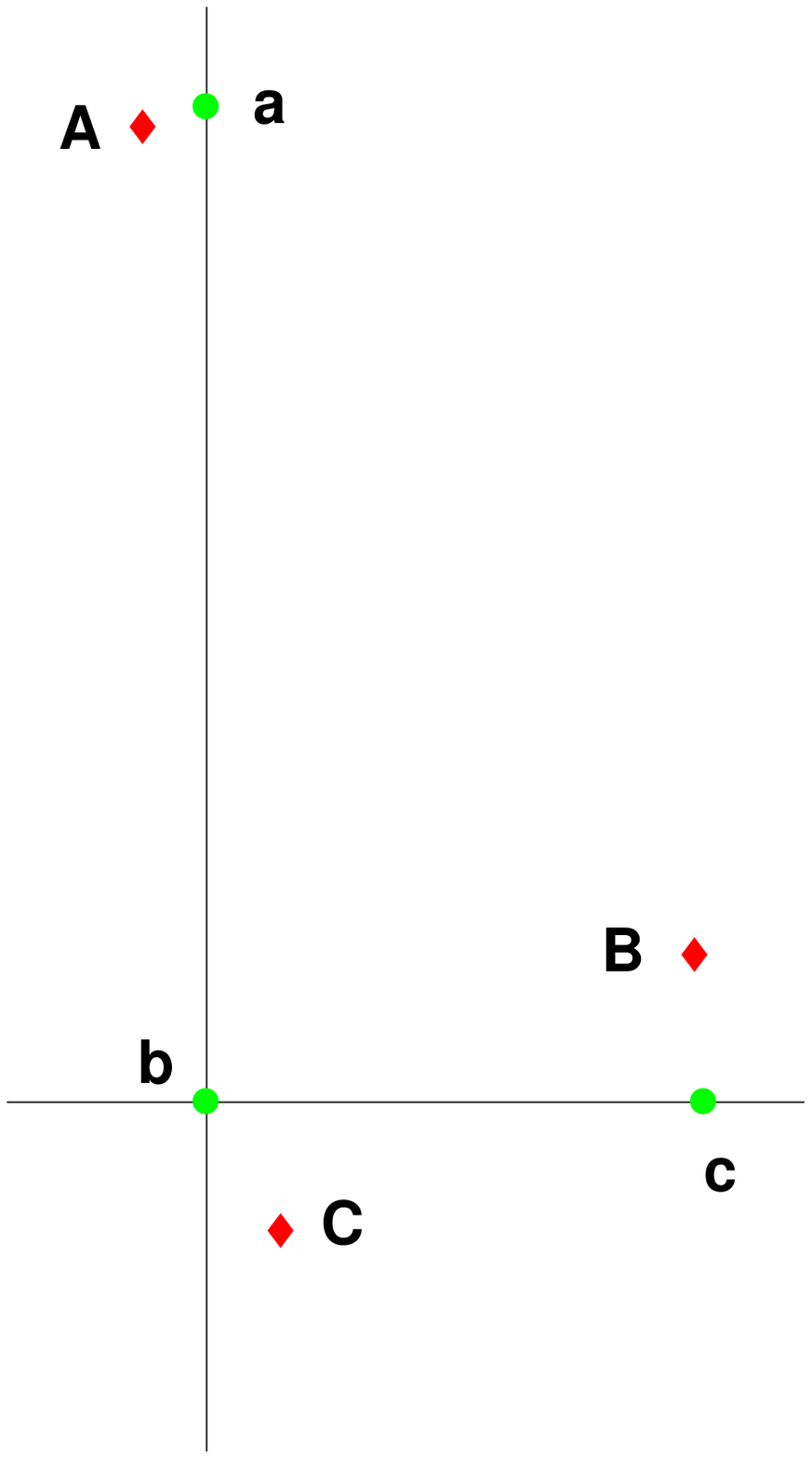}} 
\caption{Registration result for Umeyama's problem using Algorithm \ref{algo}.} 
\label{fig1}
\end{figure}

 \begin{figure}
\center
\includegraphics[width=0.41\linewidth]{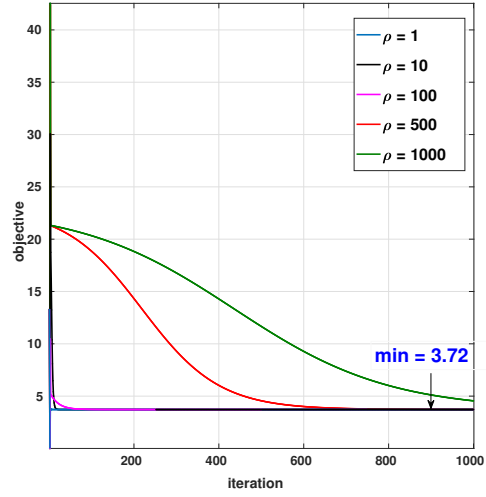}
\caption{Convergence for different $\rho$ for the problem in Figure \ref{fig1}.} 
 \label{fig2}
\end{figure}

To initialize Algorithm \ref{algo}, we set $\Lam$ as the zero matrix. The solution of the spectral relaxation of \eqref{ncvxSDP} is used for $\H$ (\cite{Chaudhury2015}). The latter requires us to compute the smallest $d$ eigenvectors of $\mathbf{C}$. 
We will sometimes initialize $\H$ with the all-identity matrix (a matrix of size $md \times md$ whose $m^2$ blocks are the $d \times d$ identity matrix)  for some experiments, which is weaker than the spectral initialization. 

To address the second question, we somehow need to compute the global minimum of \eqref{ncvxSDP}. 
An useful observation is that this can be done for two point sets. In this case, the minimizer is simply given by the Gram matrix of the optimal rotations obtained using Umeyama's algorithm \cite{Umeyama1991}. This will be particularly useful for analyzing the stability of our algorithm, since it is non-trivial to determine the minimum of \eqref{ncvxSDP} in the presence of noise.
On the other hand, note that the minimum of \eqref{ncvxSDP} is simply zero with simulated data (without noise). In this case, the minimizer is the Gram matrix of the ground-truth  rotations. 

Since Umeyama's original formulation (\cite{Umeyama1991}) includes scaling (along with rotation and translation), we describe the solution over $\mathbb{SE}(d)$ for completeness. 
For $m=2$, we can write \eqref{LSopt} as
 \begin{equation}
\label{opt2}
\min \  \sum_{k=1}^{n} \  \lVert \R_1 \x^k + \t_1 -   \R_2 \boldsymbol{y}^k - \t_2 \rVert_2^2,
\end{equation}
where $n$ is the number of common points, and $(\x^k)$ and $(\boldsymbol{y}^k)$ are the respective local coordinates.

 The problem can be simplified by fixing one point set and computing the relative rotation and translation of the other. In particular, we can take $\R_1=\mathbf{I}$ , $\t_1=\boldsymbol{0}$ and $\R_2=\R$ , $\t_2=t$. That is, we consider the following simplification of \eqref{opt2}:
 \begin{equation}
\label{opt2s}
\min_{\R,\t} \ \sum_{k=1}^{n} \  \lVert  \x^k  -   \R \boldsymbol{y}^k- \t \rVert_2^2.
\end{equation}
As shown in \cite{Umeyama1991}, the minimizers of \eqref{opt2s} are
 \begin{equation}
 \label{minRt}
\R^\star = \Pi_{\mathbb{SO}(d)} (\mathbf{H}) \quad \text{and} \quad \t^\star =\bar{\x}-\R^\star \bar{\boldsymbol{y}},
\end{equation}
where
\begin{equation*}
\bar{\x} = \frac{1}{n}  \sum_{k=1}^n  \x^k, \qquad \bar{\boldsymbol{y}} = \frac{1}{n} \sum_{k=1}^n  \boldsymbol{y}^k, 
\end{equation*}
and
\begin{equation*}
\H =  \sum_{k=1}^n (\x^k - \bar{\x} )( \boldsymbol{y}^k - \bar{\boldsymbol{y}})^\top.
\end{equation*}

For our first experiment, we consider the example in \cite{Umeyama1991} involving two point sets. Each set has three points whose coordinates are $\{(0,0),(1,0),(0,2)\}$ and $\{(0,0),(-1,0),(0,2)\}$; one point set is simply a reflection of the other (see Figure \ref{fig1}). Clearly, the minimum of \eqref{opt2s} cannot be zero in this case, since the points cannot be perfectly aligned using just translations and rotations. In fact, the optimal value corresponding to \eqref{minRt} is $3.7185$ (up to four decimal places). 

We next solve the above problem using Algorithm \ref{algo}, where we work with formulation \eqref{opt2}. Note that the algorithm does not make use of Umeyama's solution; it iteratively computes the solution starting from some initialization. Therefore, this simple problem is nonetheless non-trivial for Algorithm \ref{algo}. For different values of $\rho$, the  objective at each iteration is shown in Figure \ref{fig2}, where we initialize using $\H=\mathbf{I}$.
Note that, while the convergence speed changes with $\rho$ (as in convex ADMM), the objective asymptotically converges to the minimum (indicted in the figure) obtained using Umeyama's formula. The solution obtained using  our algorithm  is depicted in Figure \ref{fig1}, which agrees perfectly with the result in \cite{Umeyama1991}.


\begin{table}
\centering
\small
\caption{Comparison of rotational error of proposed algorithms with that in \cite{Ahmed2017} for $10$ point sets. Rows I, II and III correspond to the ground truth, results from \cite{Ahmed2017}, and results from our method (see text for details).}
\setlength\tabcolsep{2.0pt}
\begin{tabular}{|c|c|c|c|c|c|c|c|c|c|c|c|}
\hline
 \multirow{1}{*}{} & 
\multicolumn{10}{c|}{  Determinant of optimal transforms} &
  {Error} \\
\hline
I &+1 &+1 &+1 &+1 &+1 &+1 &+1 &+1 &+1 &+1 &$\ang{0}$ \\
\hline
II &+1 &+1 &+1 &\circled{-1} &\circled{-1} &+1 &+1 &+1 &+1 &+1 &$\ang{59.4}$\\
\hline
III &+1 &+1 &+1 &+1 &+1 &+1 &+1 &+1 &+1 &+1 &$\ang{5.23}$\\
\hline
\end{tabular}
\label{DC}
\end{table}

%

We next experiment with three-dimensional models from the Stanford  repository \cite{3Dscanrep}. To extract overlapping point sets from a given model (ground-truth), we follow the process in \cite{Evangelidis2014}. We first center the model by subtracting its centroid, and then rotate it about the $x$-axis by $m$ different angles. For a fixed rotation, the points above the $x\mbox{-}y$ plane are formed into a point set. After creating $m$ such point sets, we  randomly rotate and translate them. Obviously, we know the exact correspondences in this case.

We test the robustness of our algorithm by (i) corrupting the coordinates with additive Gaussian noise, and (ii) introducing false correspondences (outliers). The goal is to mimic real world scenarios in which the scanned measurements are invariably noisy and some of the correspondences are wrongly estimated. 
To objectively assess the reconstruction quality, we use the rotation error from \cite{Arrigoni2016}. If $(\R_i)$ are the true rotations and $(\hat{\R}_i)$ are the rotations obtained using our algorithm, we first remove the global rotation by fixing one of the rotations (say, the first one) and performing the corrections
$\R_i' =  \R_1^\top\R_i$ and  $\hat{\R}_i' =  \hat{\R}_1^\top\hat{\R}_i$. The rotation error is defined as 
\begin{equation}
\label{reconErr}
\frac{1}{m} \sum_{i=1}^m d(\R'_i,\hat{\R}_i'),
\end{equation}
where 
\begin{equation*}
d(\R_1,\R_2) = \cos^{ - 1} \left(\frac{1}{2}(\langle \R_1, \R_2 \rangle-1)\right)
\end{equation*}
is the geodesic distance on $\mathbb{SO}(3)$ (\cite{Arrigoni2016}).

\begin{figure}
\center
\subfloat[\textit{Bunny}.]{\includegraphics[height=0.3\linewidth,width=0.3\linewidth]{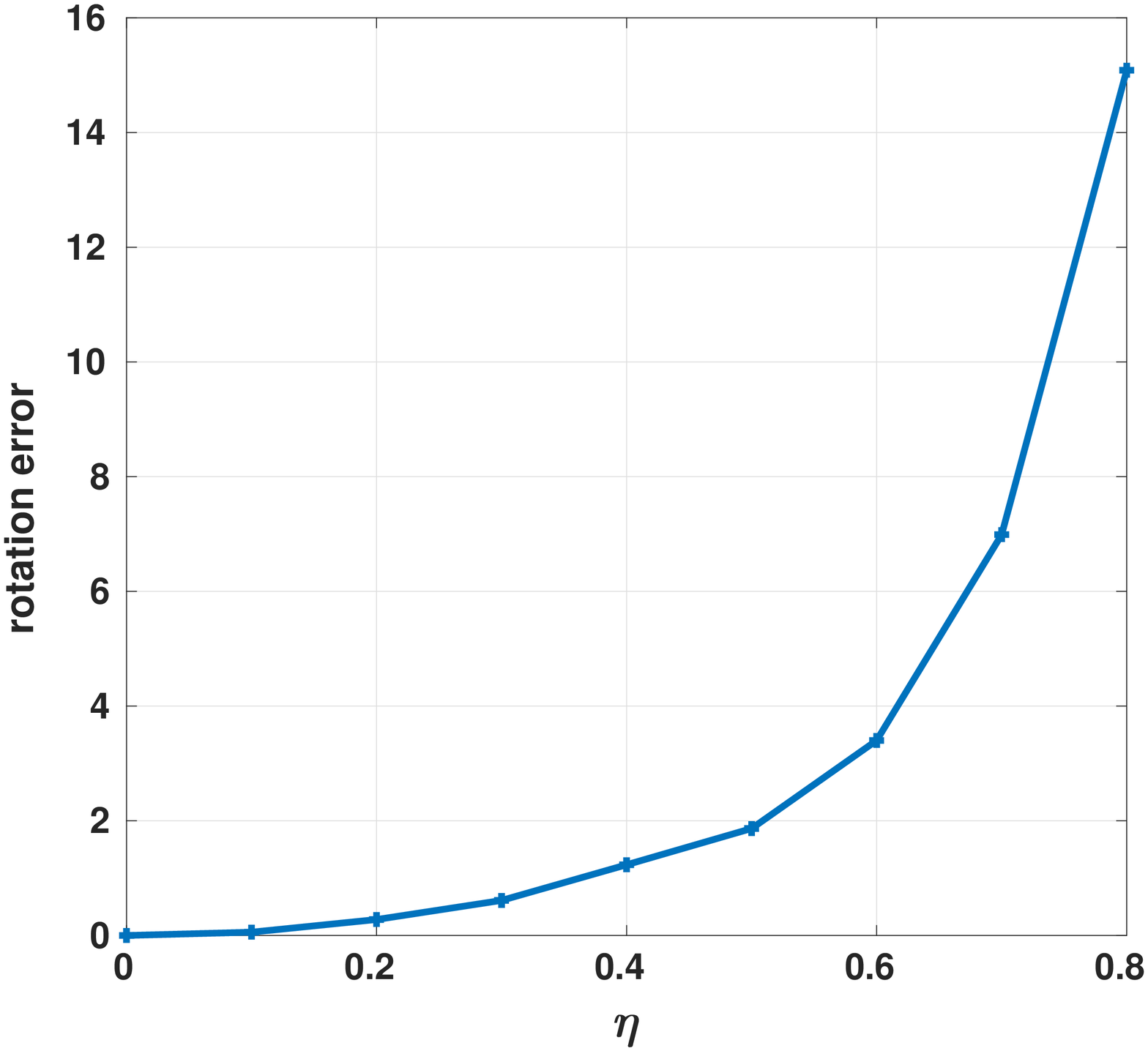}}
\subfloat[\textit{Buddha}.]{\includegraphics[height=0.3\linewidth,width=0.3\linewidth]{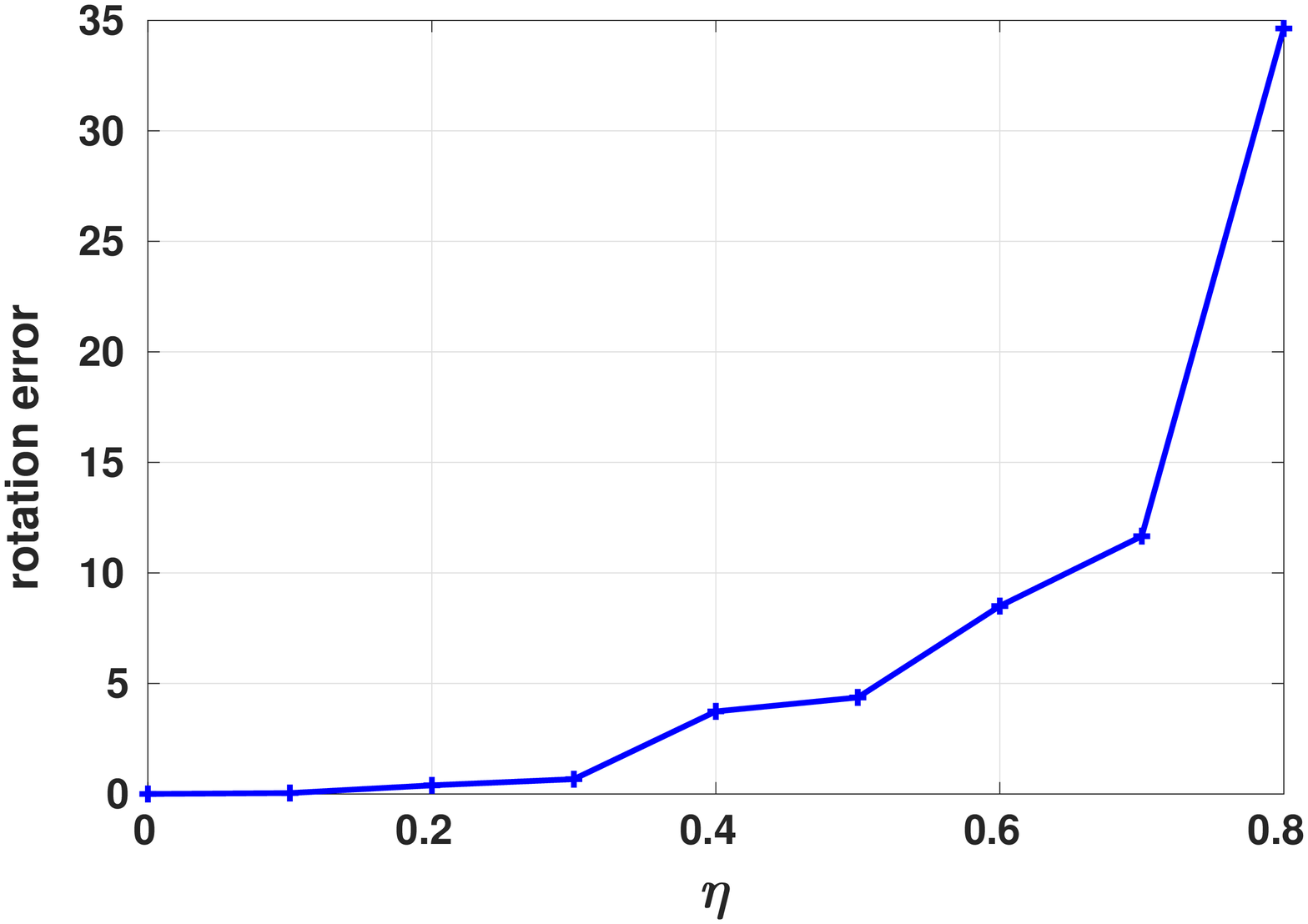}} 
\caption{Rotation error on account of wrong correspondences.} 
\label{fig3}
\end{figure}

\begin{figure}
\center
\subfloat[\textit{Bunny}.]{\includegraphics[height=0.3\linewidth,width=0.3\linewidth]{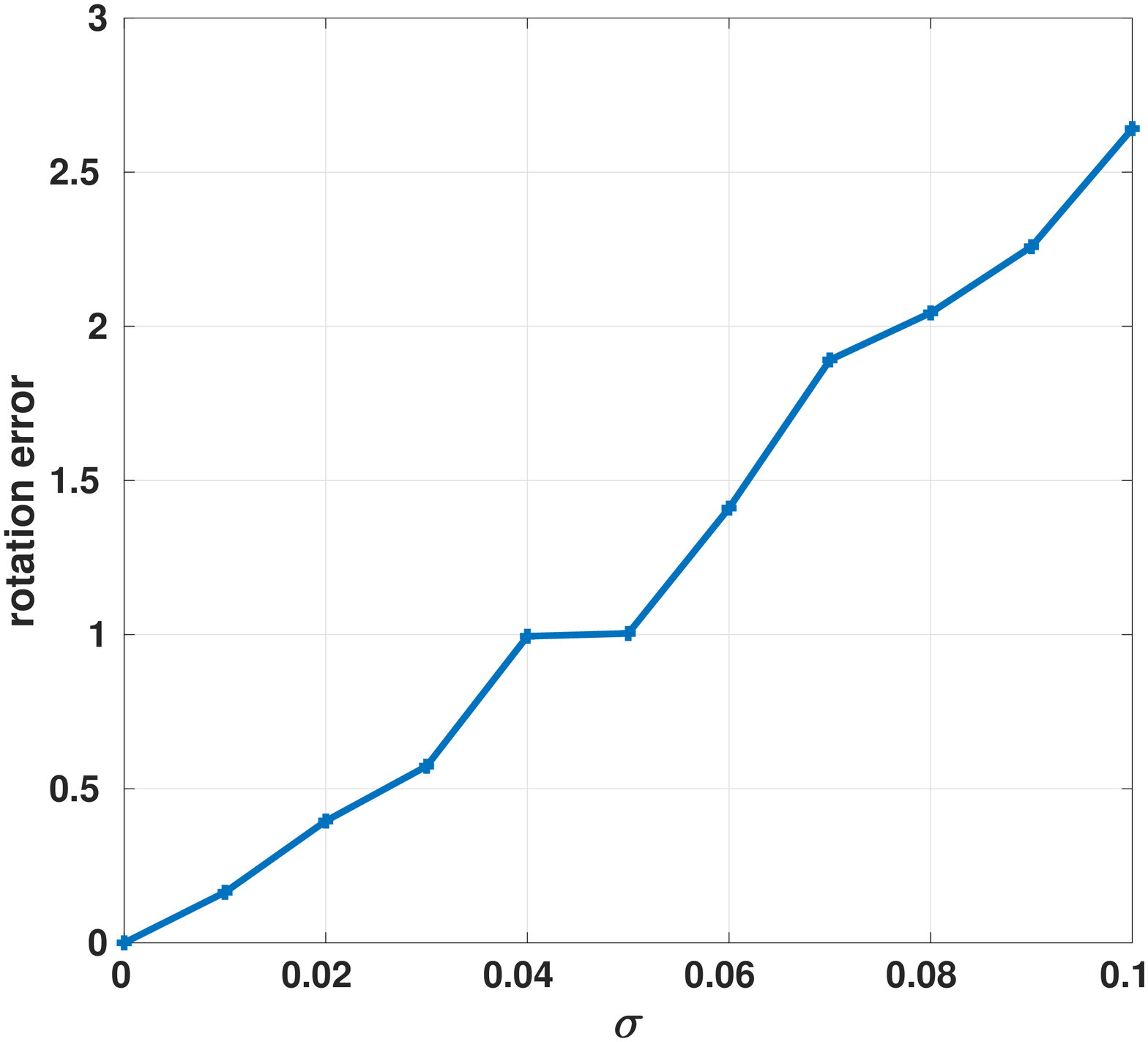}} 
\subfloat[\textit{Buddha}.]{\includegraphics[height=0.3\linewidth,width=0.3\linewidth]{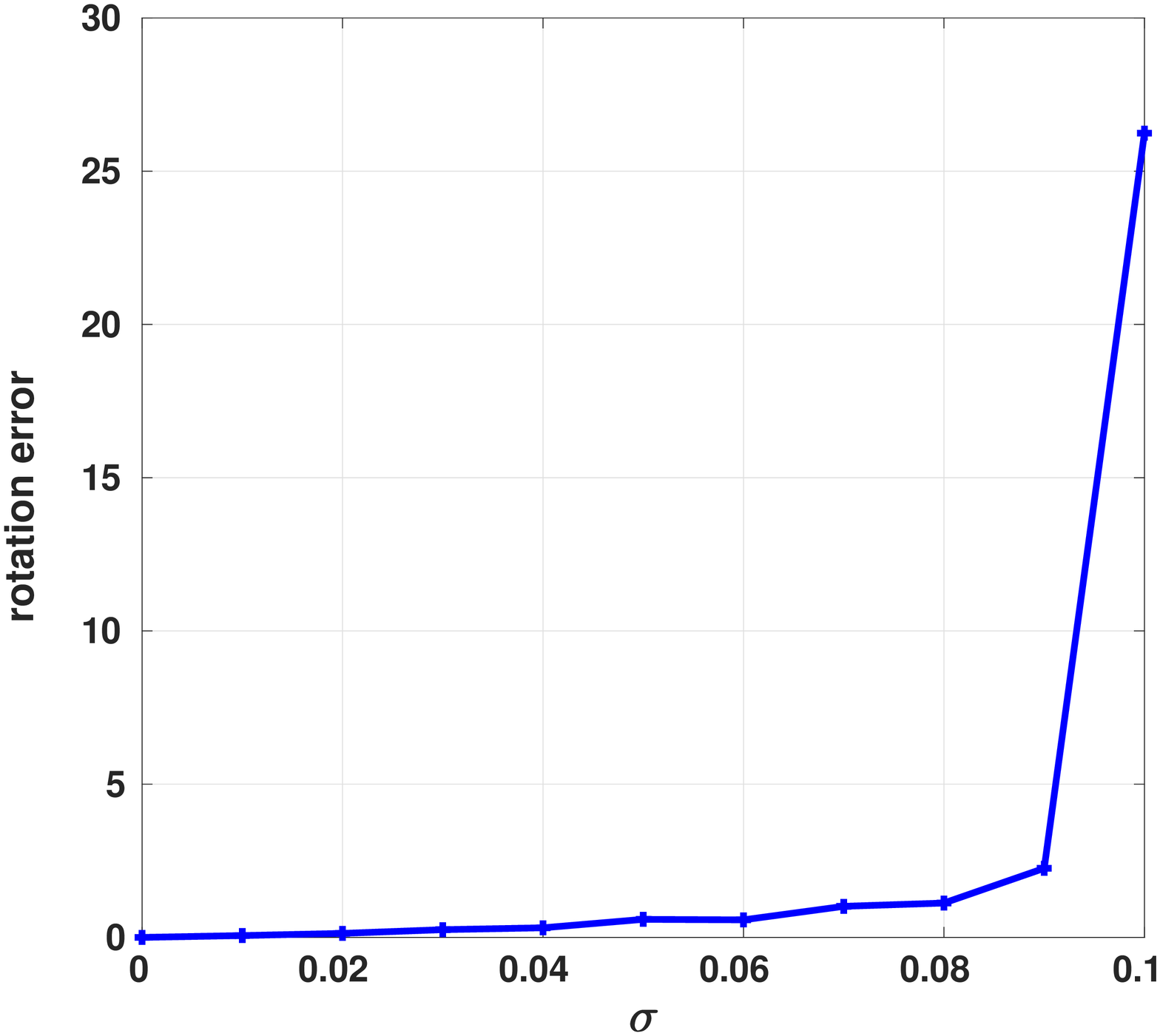}} 
 \caption{Rotation error on account of additive Gaussian noise.} 
\label{fig4}
\end{figure}

\begin{figure*}
\center
\subfloat[$\eta = 0.1$.]{\includegraphics[width=0.23\linewidth]{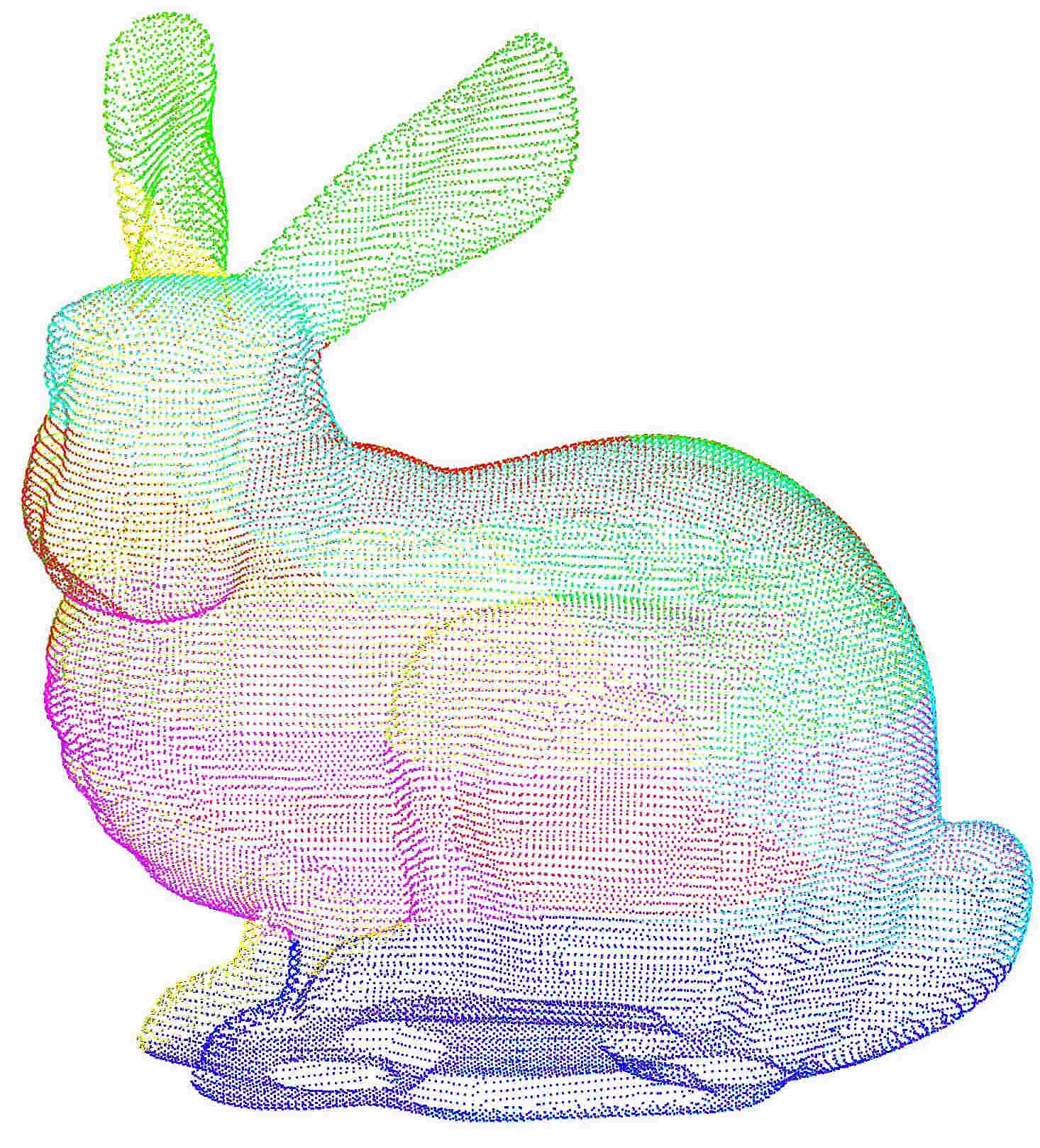}}
\subfloat[$\eta = 0.3$.]{\includegraphics[width=0.25\linewidth]{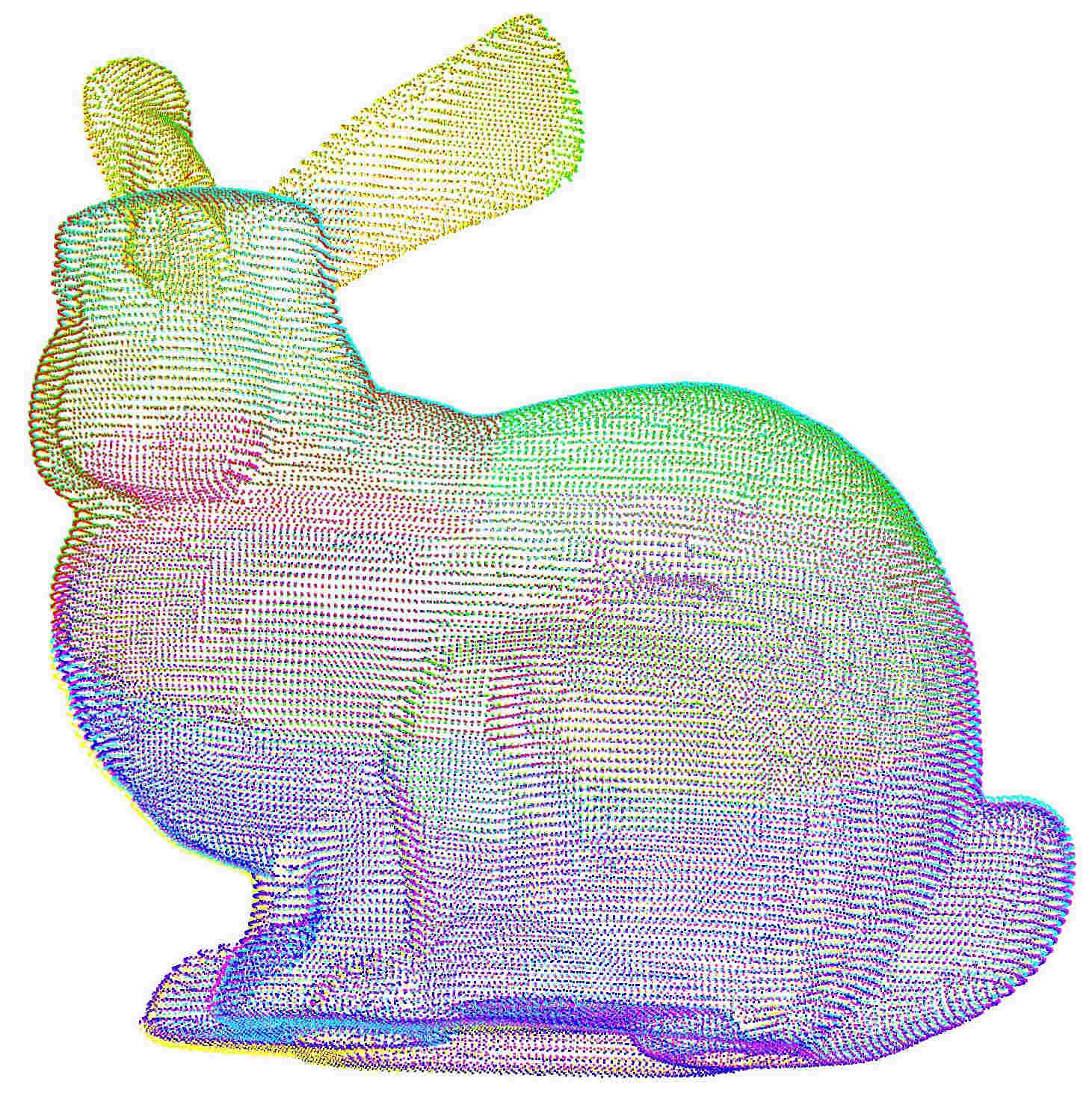}} 
\subfloat[$\eta = 0.5$.]{\includegraphics[width=0.25\linewidth]{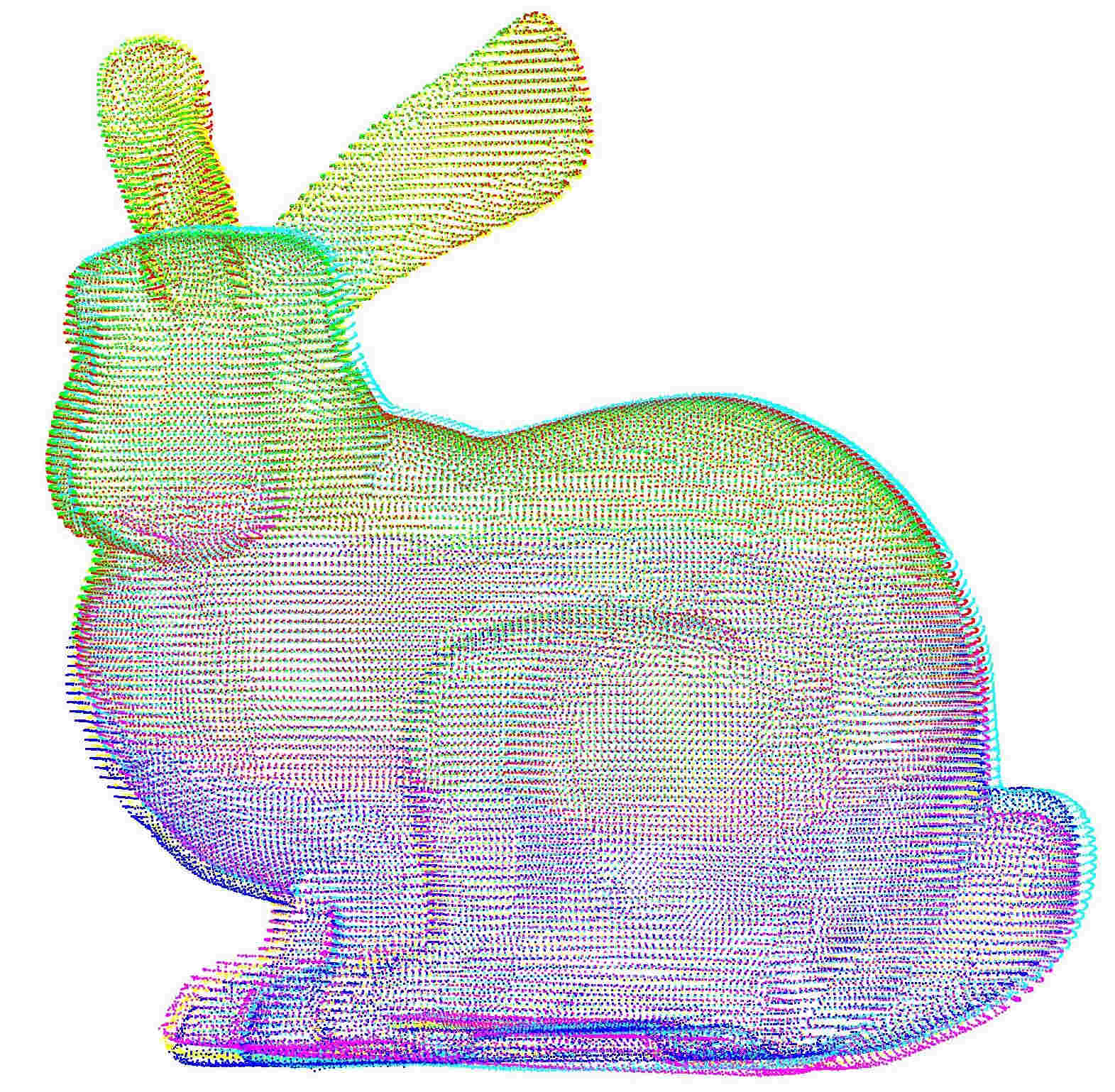}} 
\subfloat[$\eta = 0.7$.]{\includegraphics[width=0.25\linewidth]{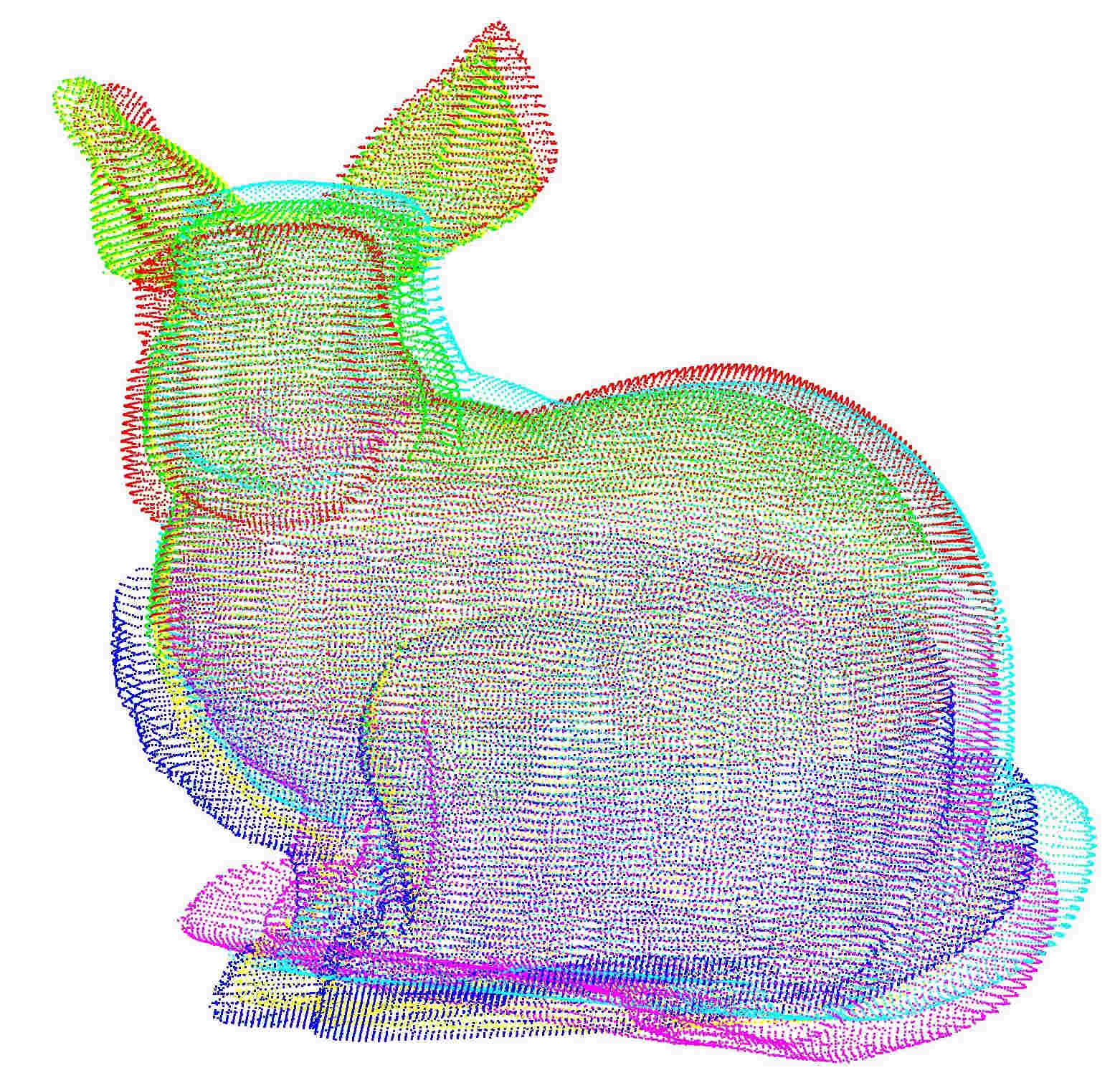}}
\caption{Reconstruction results for \textit{Bunny} with different proportion of outliers. Notice that the reconstruction is quite good even with $70\%$ outliers.} 
\label{fig5}
\end{figure*}

To perturb the coordinates, we add isotropic Gaussian noise of variance $\sigma^2$ to the points in each set. For the correspondence noise, we first fix some $\eta \in [0,1]$ and randomly shuffle $\eta$-fraction of the known correspondences (these are the outliers), keeping other correspondences fixed. The results obtained on the models of \textit{Bunny} and \textit{Buddha} are shown in Figures \ref{fig3} and \ref{fig4}. 
We have used $10$ scans per model for the experiments.
Notice that the rotation error scales gracefully with increase in $\sigma$ and $\eta$. Moreover, thanks to the global registration framework, we can obtain accurate reconstructions even with  $50\%$ outliers. This is further demonstrated using some visual results in Figure \ref{fig5}. 
Some more visual results for \textit{Bunny}, \textit{Buddha}, \textit{Armadillo} and \textit{Dragon} are shown in Table \ref{NRR}. In each case, we have used six point sets. The point sets before and after registration are shown in the figure along with the ground truth.

As in Umeyama's example, the ADMM objective was found to converge to the global minimum (zero) in the noiseless case. Since it is not possible to determine the global minimum in general for multiple point sets, we cannot decide on optimality in the presence of noise. However, the iterates were found to converge to a fixed point in such cases.


\begin{table}
\centering
\caption{Registration results from known correspondences. 
Left to right: ground truth, before registration, and after registration. Six point sets were used in each case (which are color coded). We used $\rho=10$ for all the experiments. }
\begin{tabular}{|m{2.5cm}|m{2.5cm}|m{2.5cm}|}
\cline{1-3}

\raisebox{-0.5\totalheight}{\includegraphics[width=0.9\linewidth]{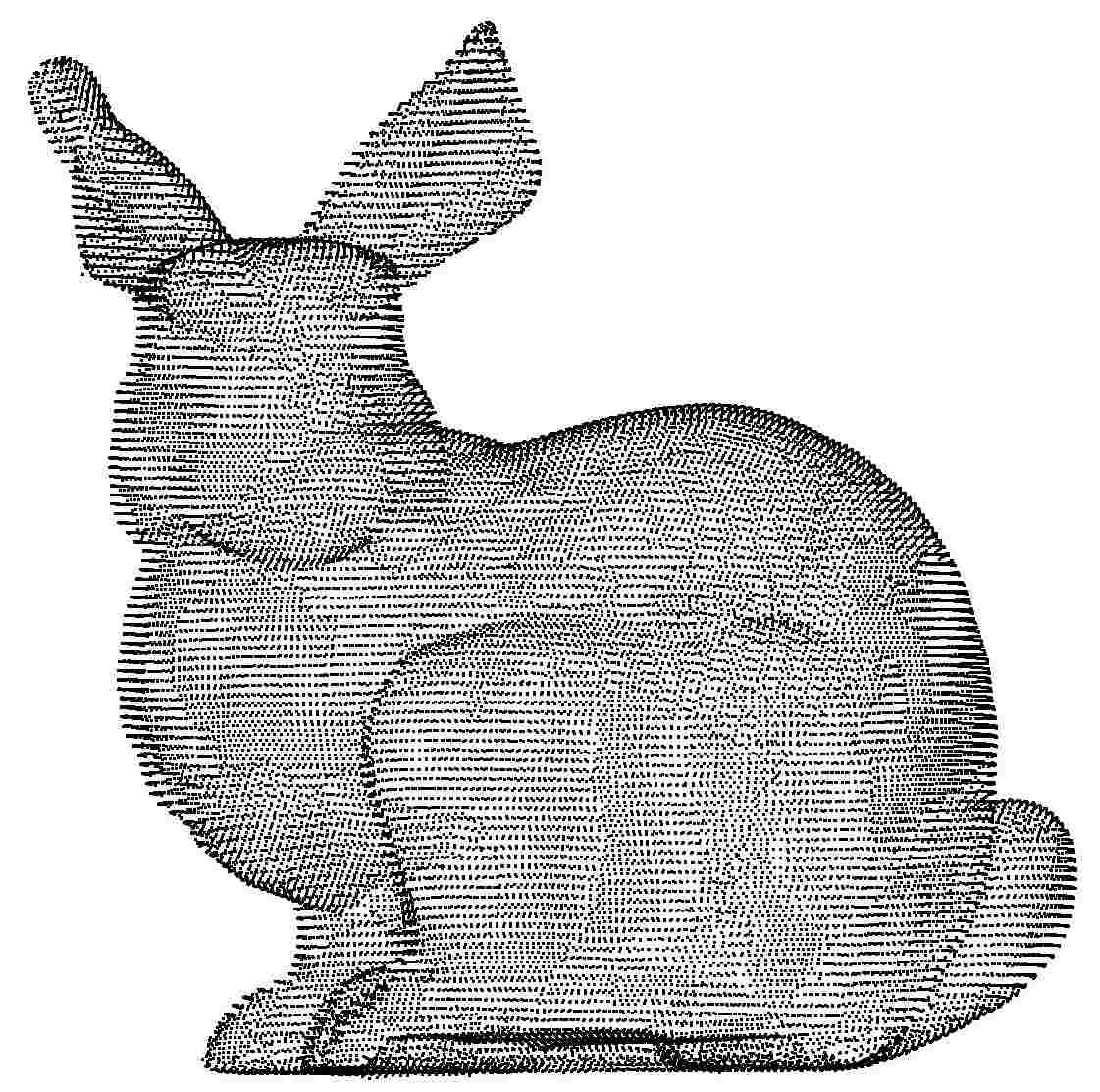}}  &   \hspace{3mm} \raisebox{-\totalheight}{\includegraphics[width= 0.9\linewidth]{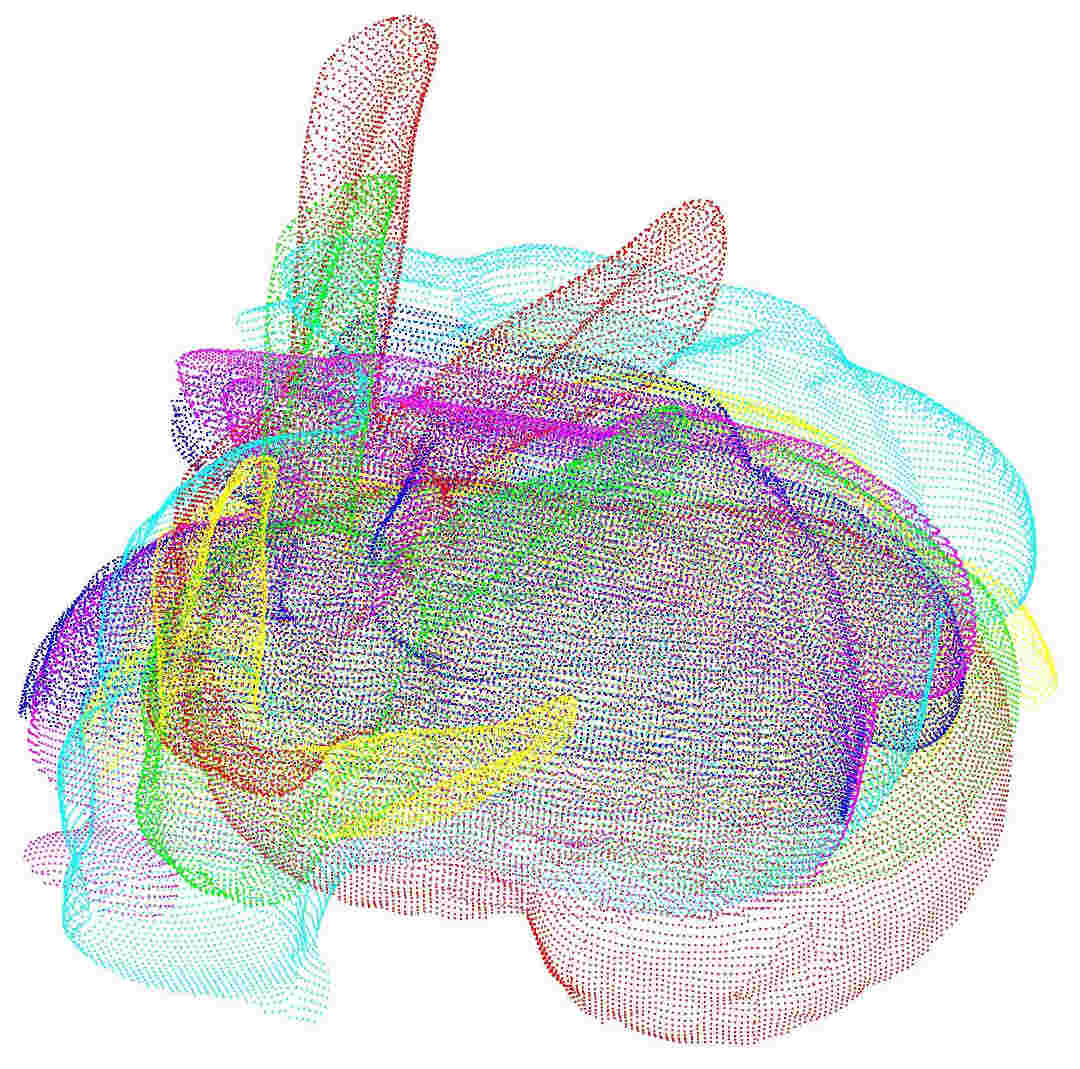}} \vspace{0.5mm}&     \raisebox{-0.5\totalheight}{\includegraphics[width= 0.9\linewidth]{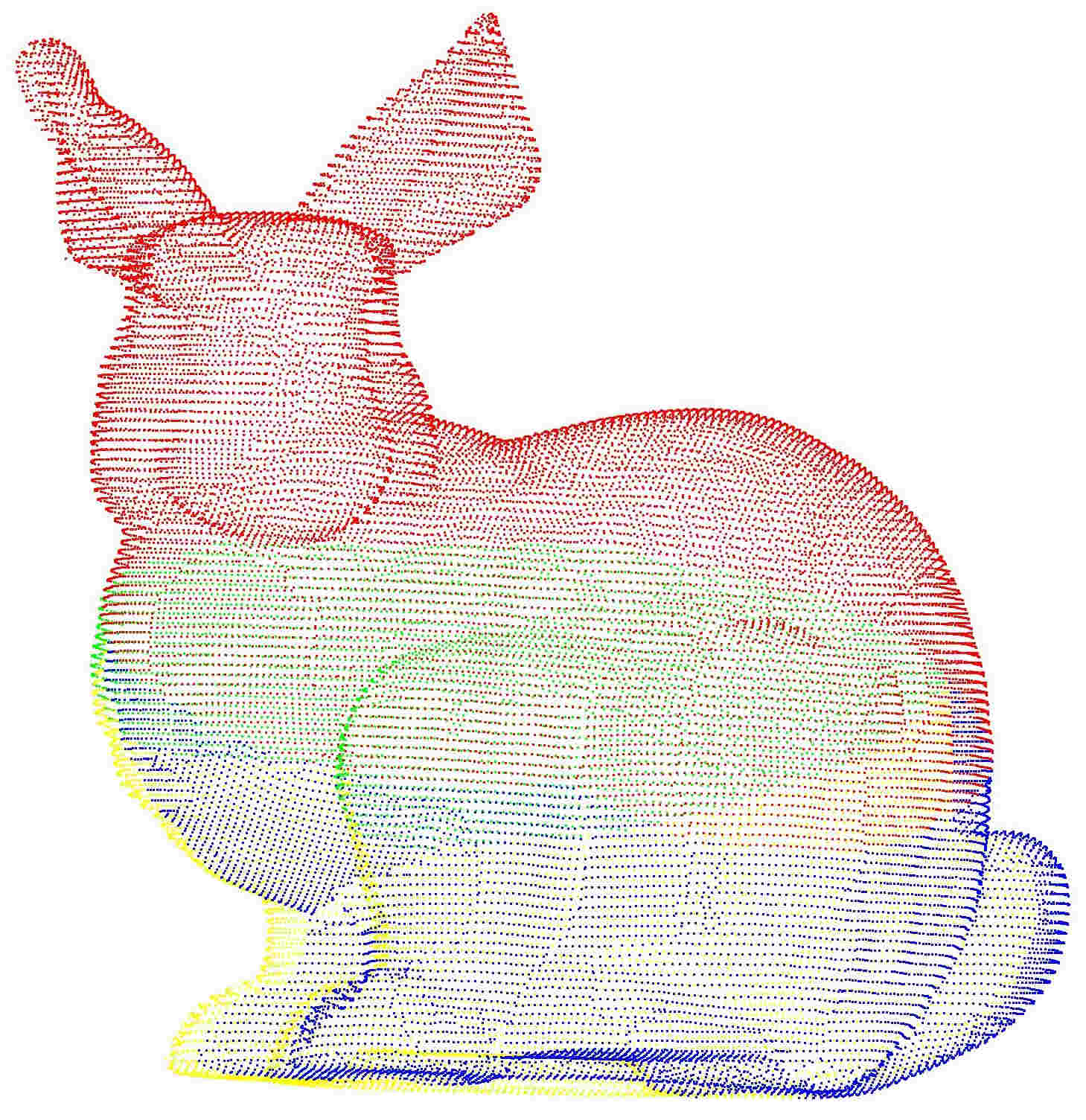}} \\ \hline
 \cline{1-3} 
 
\hspace{3mm} \raisebox{-0.1\totalheight}{\includegraphics[width= 0.9\linewidth]{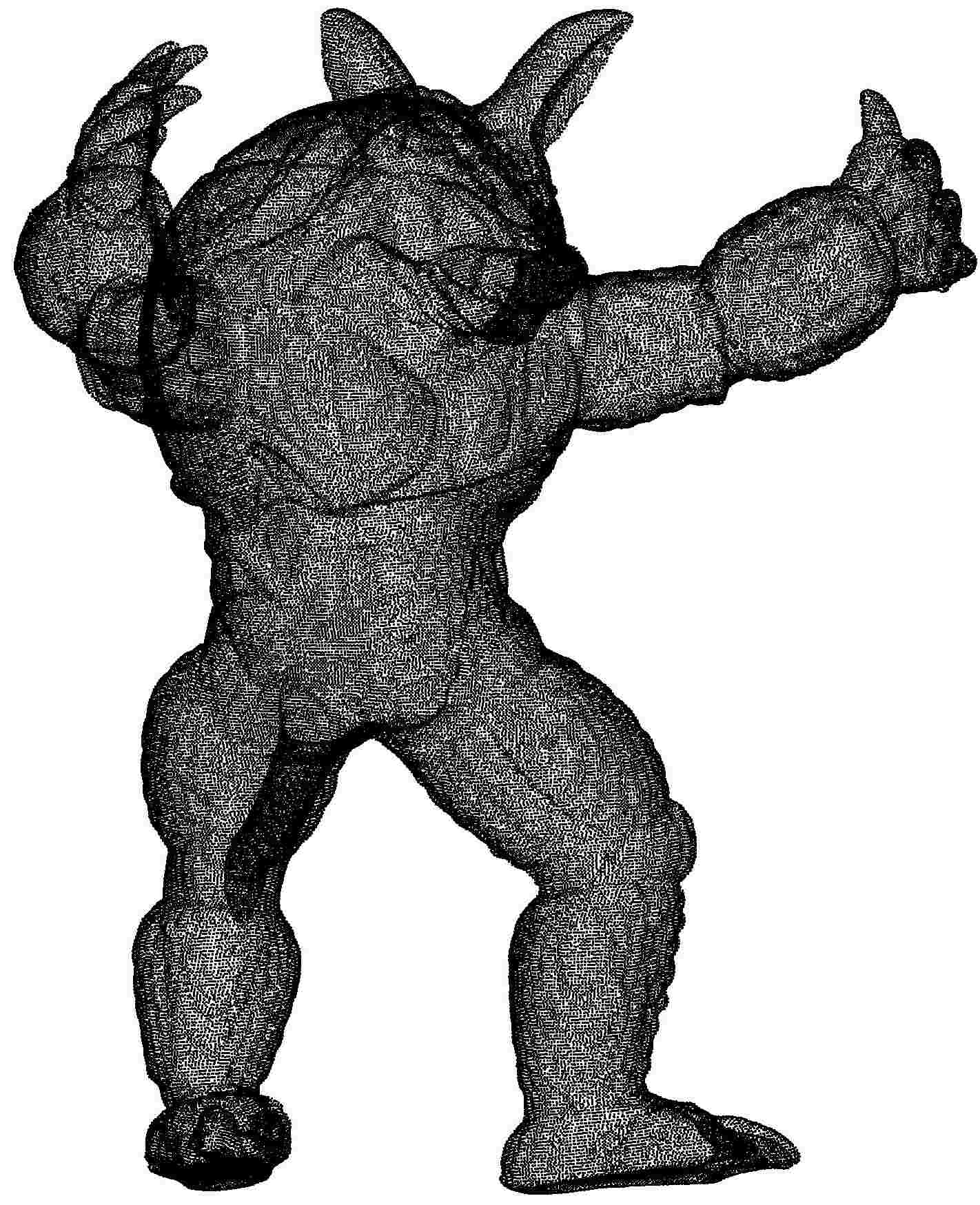}}  &  \hspace{2mm}  \raisebox{-\totalheight}{\includegraphics[width= 0.9\linewidth]{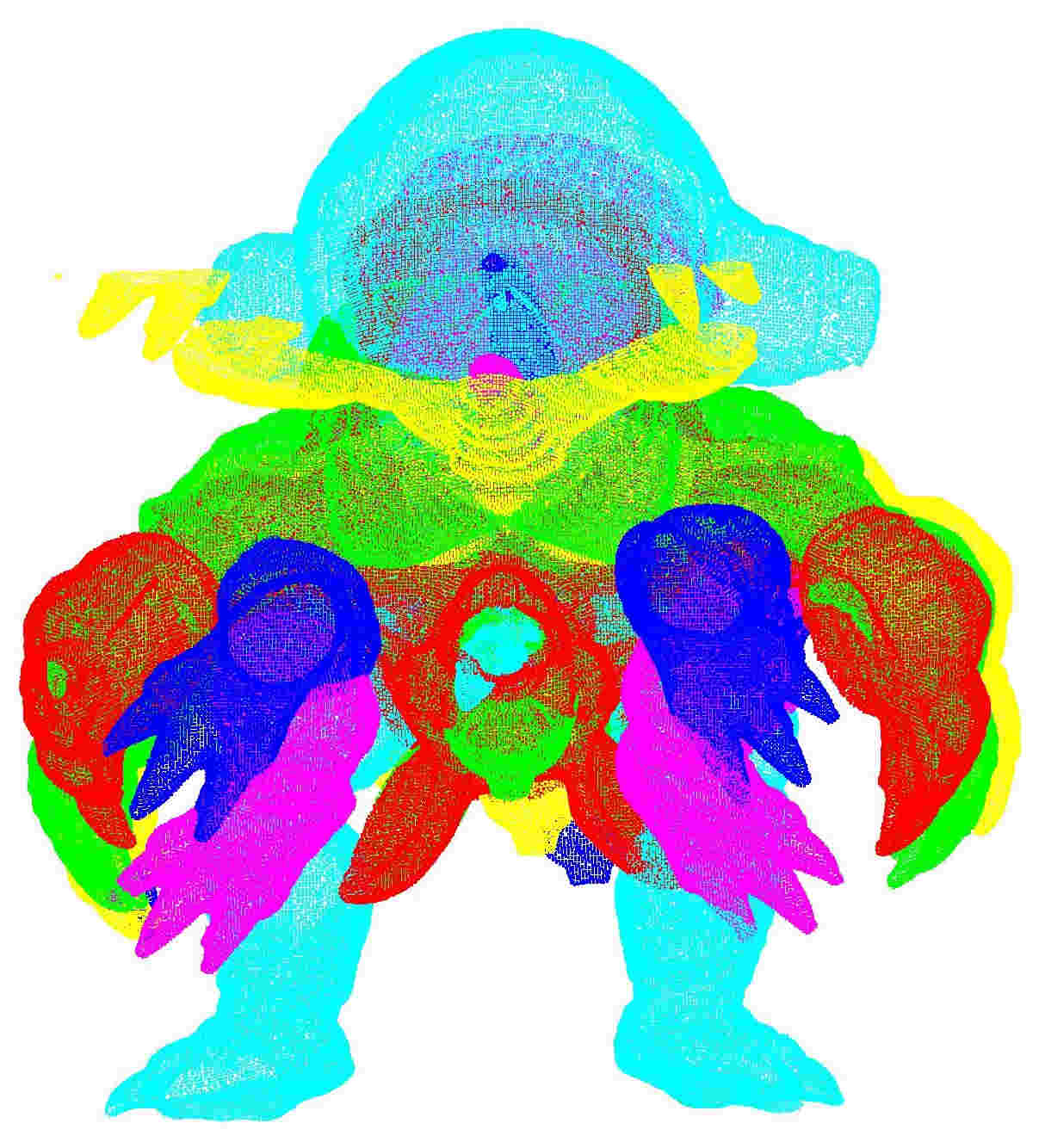}} \vspace{0.5mm}&  \hspace{3mm}   \raisebox{-0.1\totalheight}{\includegraphics[width= 0.9\linewidth]{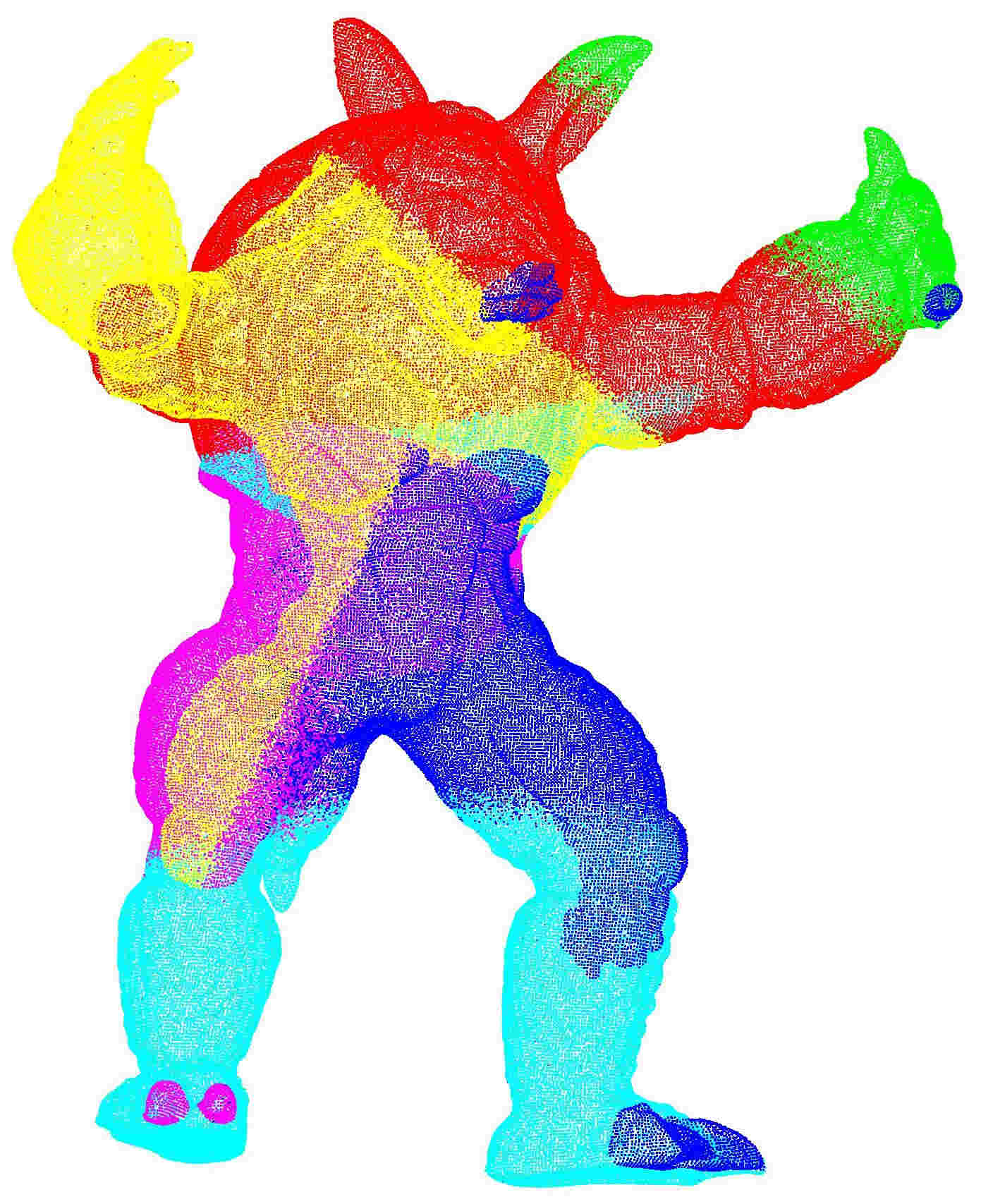}}         \\ \hline
 
\hspace{5mm}\raisebox{-0.5\totalheight}{\includegraphics[width=0.6\linewidth]{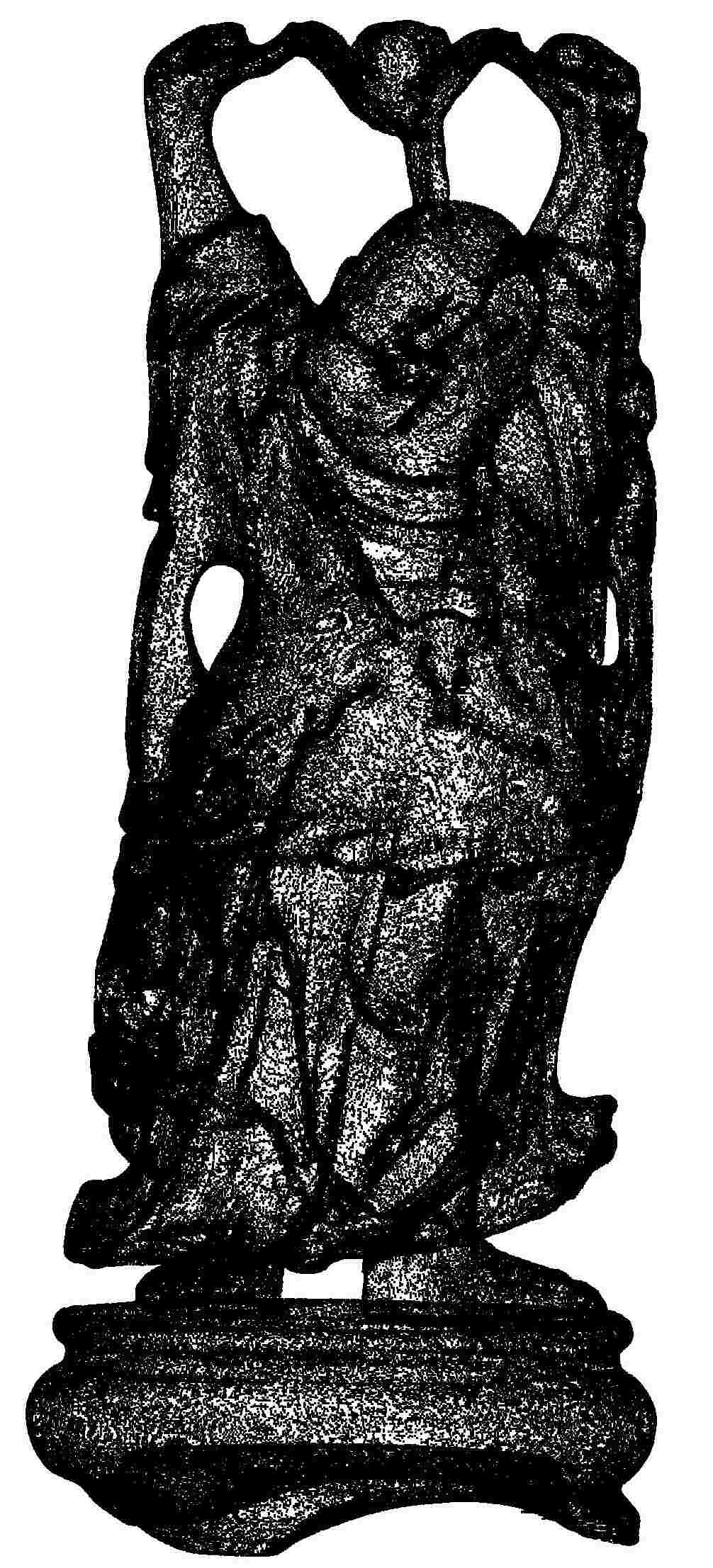}}  &    \raisebox{-0.5\totalheight}{\includegraphics[width= \linewidth]{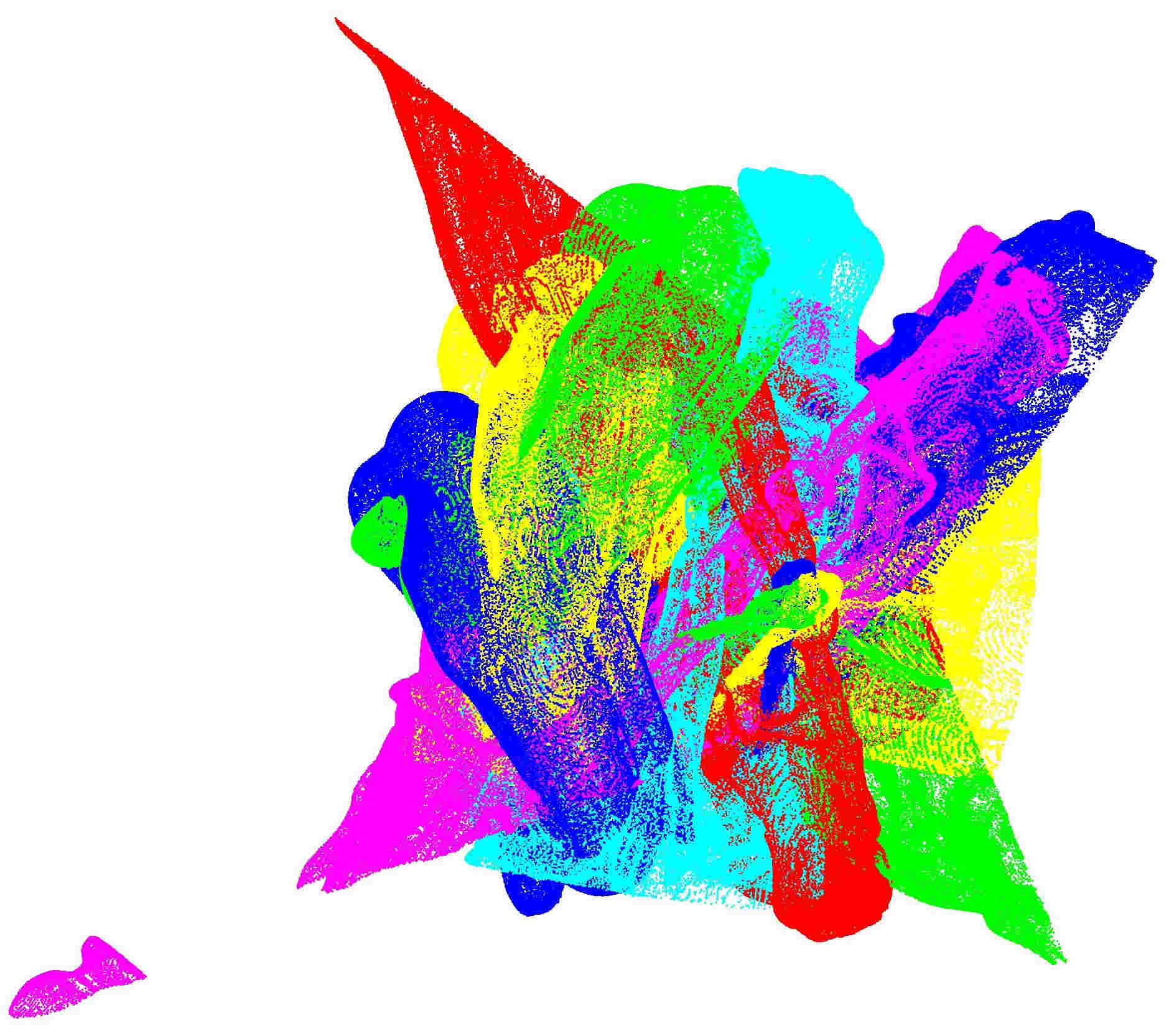}} \vspace{0.5mm}&  \hspace{5mm}   \raisebox{-0.5\totalheight}{\includegraphics[width= 0.6\linewidth]{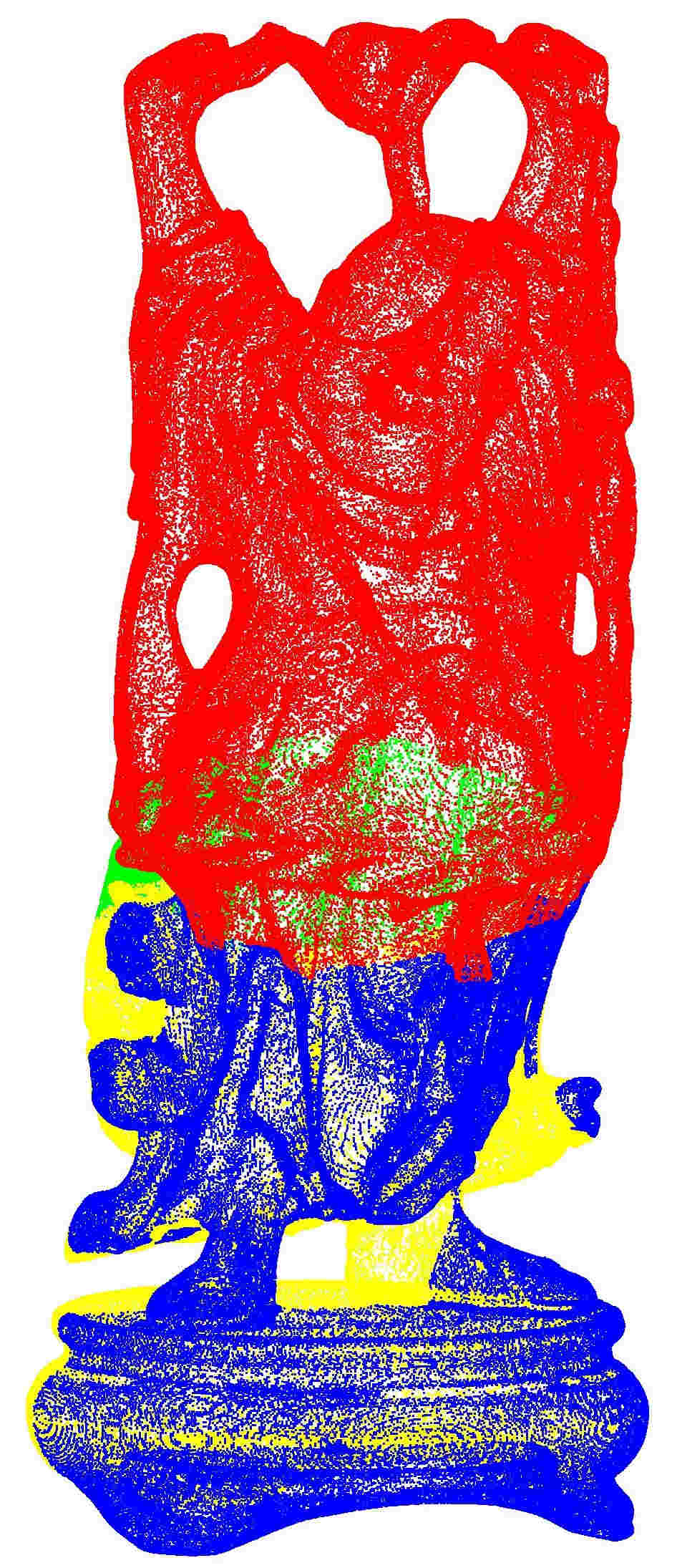}}  \\ \hline

\raisebox{-0.5\totalheight}{\includegraphics[width= \linewidth]{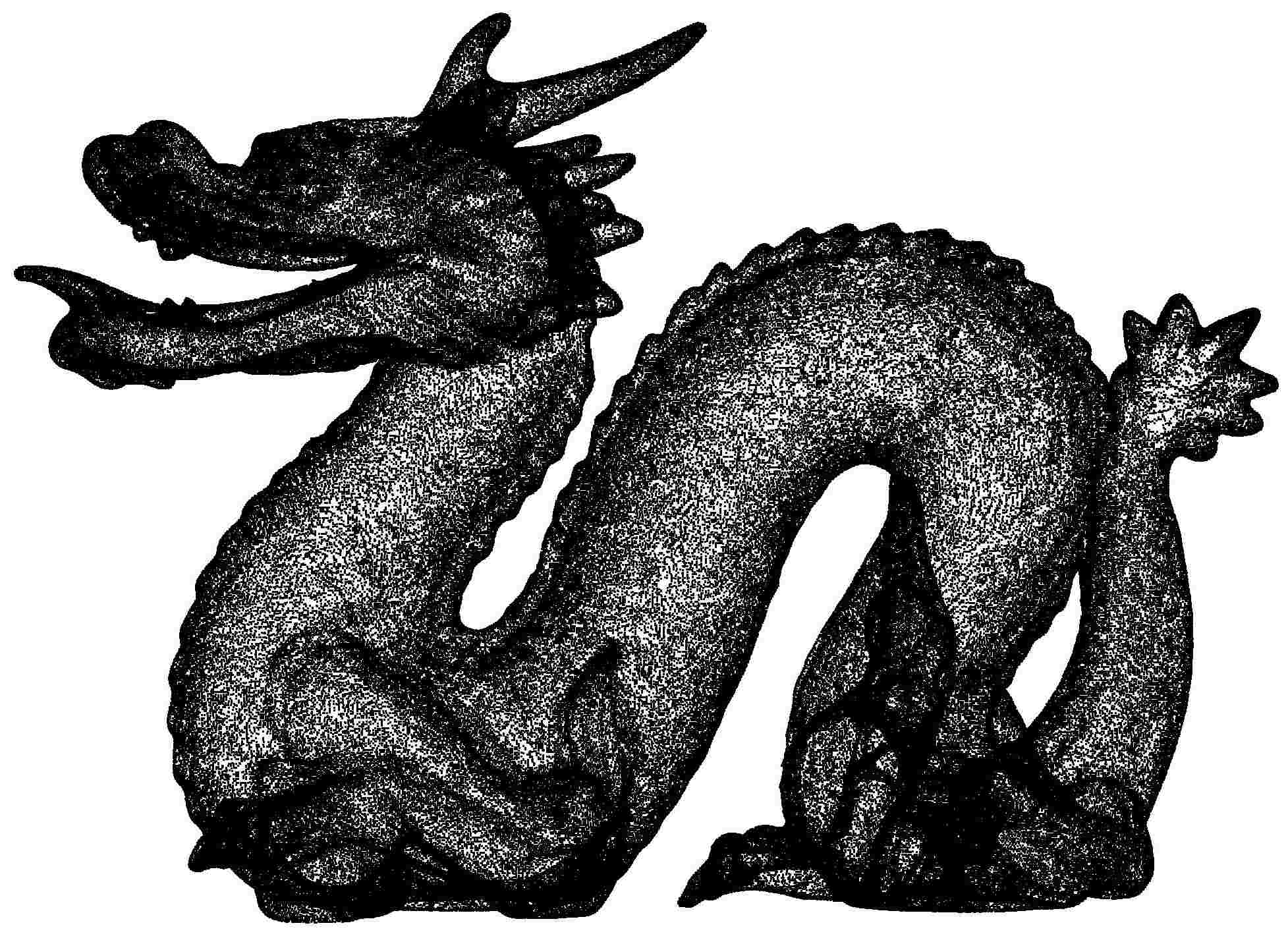}}  &    \raisebox{-0.5\totalheight}{\includegraphics[width= \linewidth]{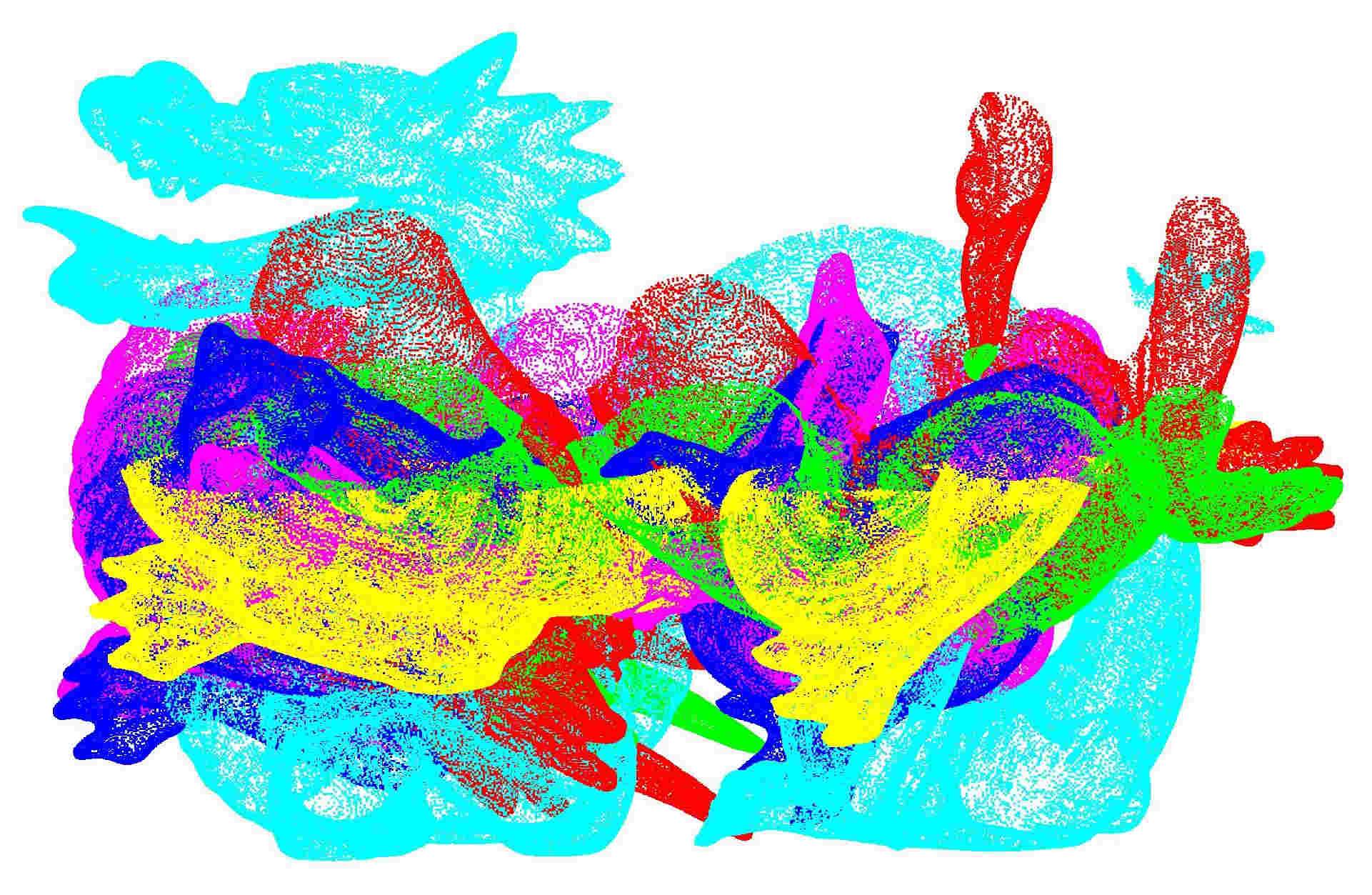}} \vspace{0.5mm}&     \raisebox{-0.5\totalheight}{\includegraphics[width= \linewidth]{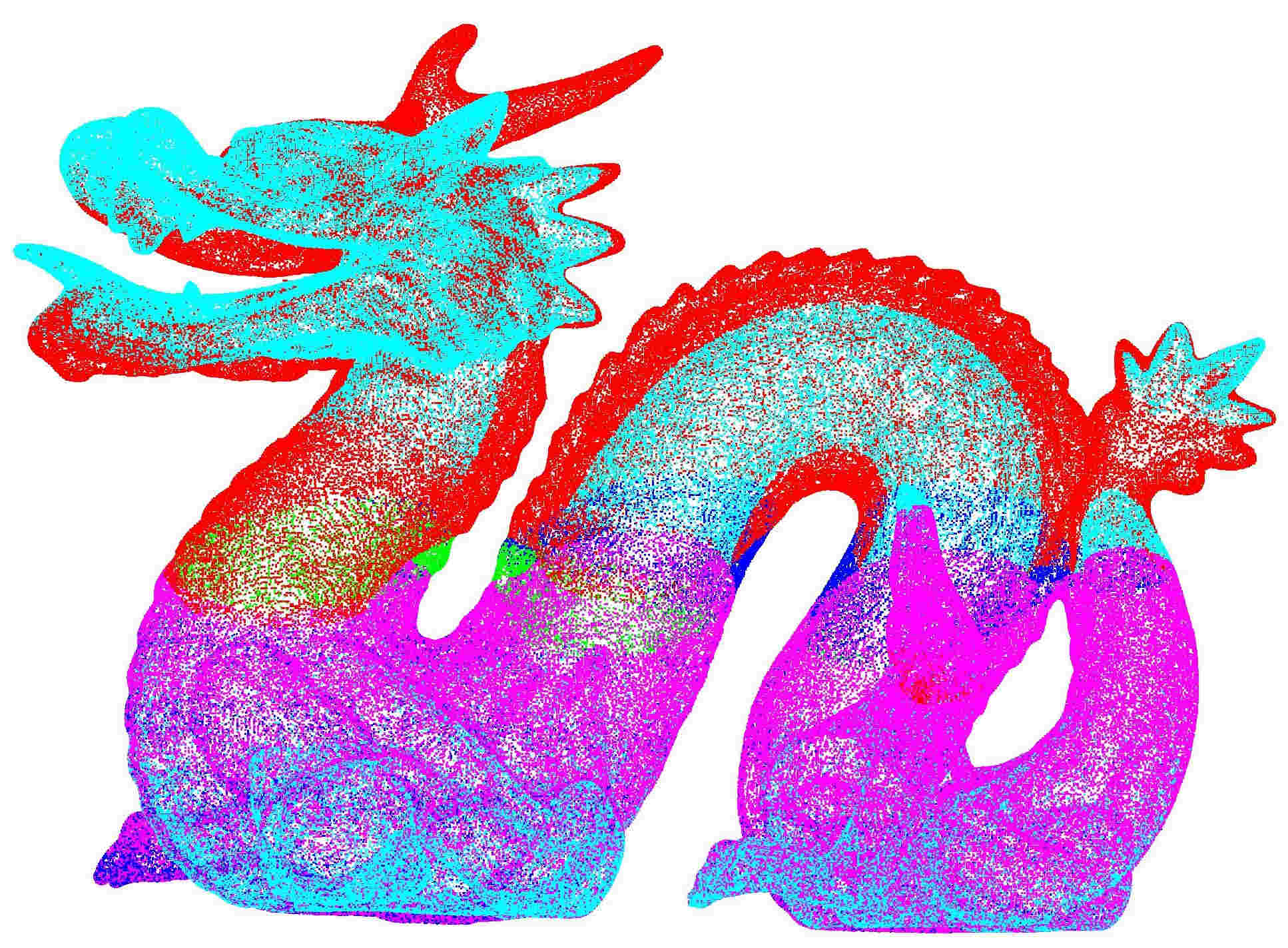}}  \\ \hline
\end{tabular}\\~\\
\label{NRR}
\end{table}

Going back to the questions posed at the start of this section, we have empirically noticed the following:
\begin{itemize}
\item The iterates of Algorithm \ref{algo} converge to a fixed point for any arbitrary initialization when $\rho >0$. The convergence is generally fast if $\rho \in [1,10]$ and spectral initialization is used.  

\item For special cases where we know the global minimum of \eqref{ncvxSDP}, the objective remarkably converges to the global minimum if we use the spectral initialization. Generally, the algorithm always seemed to work in the noiseless setting.

\item The algorithm behaves stably with perturbations in the coordinates and correspondences.
\end{itemize}


For completeness, we compare the proposed algorithm with \cite{Ahmed2017}, where the optimization in \eqref{manopt} is performed over $\mathbb{O}(d)$. We extract ten point sets from \textit{Bunny} and randomly shuffle $60\%$ of the correspondences. We feed this data into the algorithms  and check the determinants of the computed transforms. The results of an experiment are presented in Table \ref{DC}. Notice that the  transforms computed by our algorithm are indeed rotations, whereas the algorithm in \cite{Ahmed2017} returns a mix of rotations and reflections. Moreover, the reconstruction error is much higher for the latter (see Figure \ref{ON}).

\begin{figure*}[!htpb]
\center
\subfloat[\cite{Ahmed2017}.]{\includegraphics[width=0.35\linewidth]{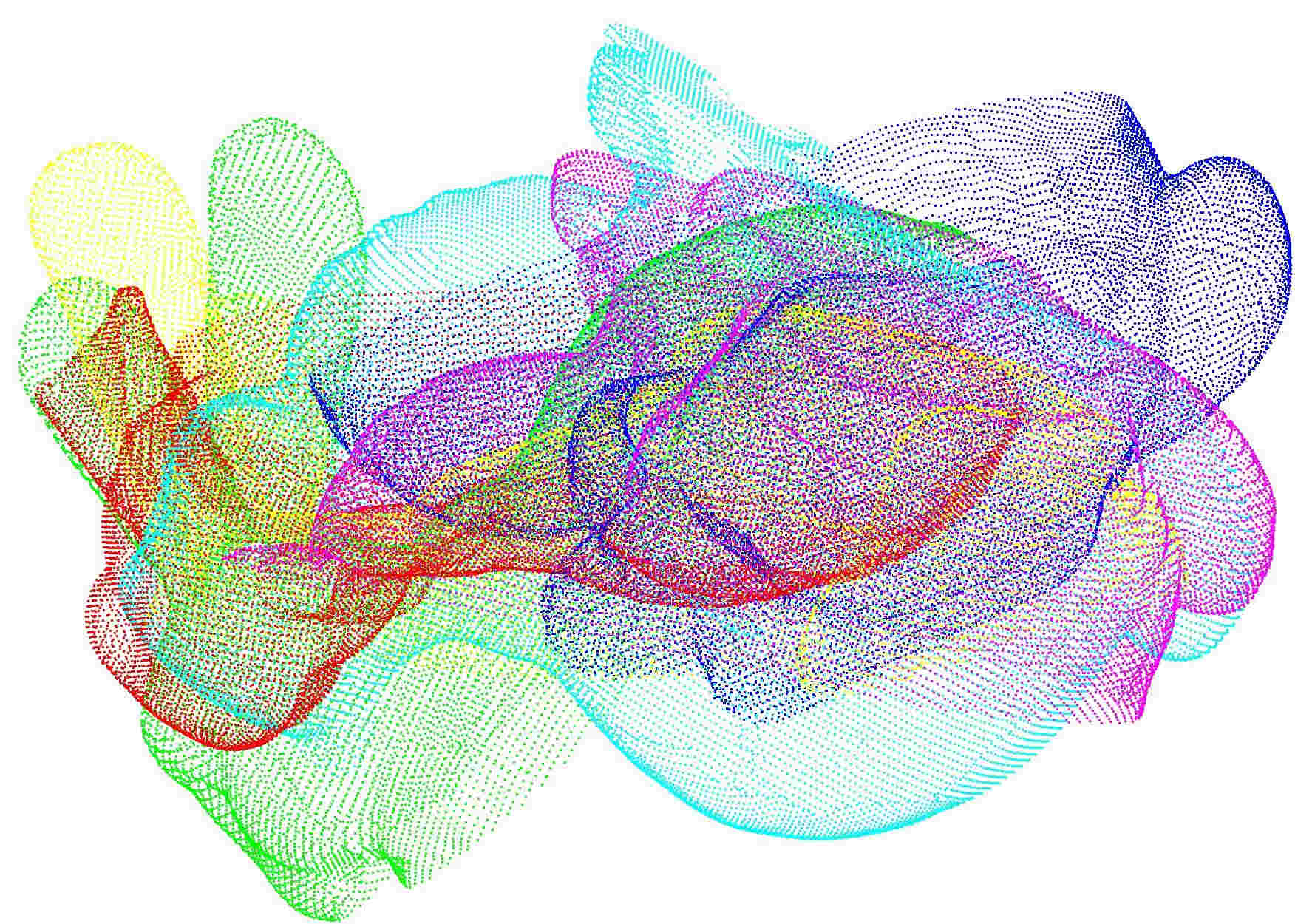}}\hspace{10.em}
\subfloat[Proposed.]{\includegraphics[width=0.28\linewidth]{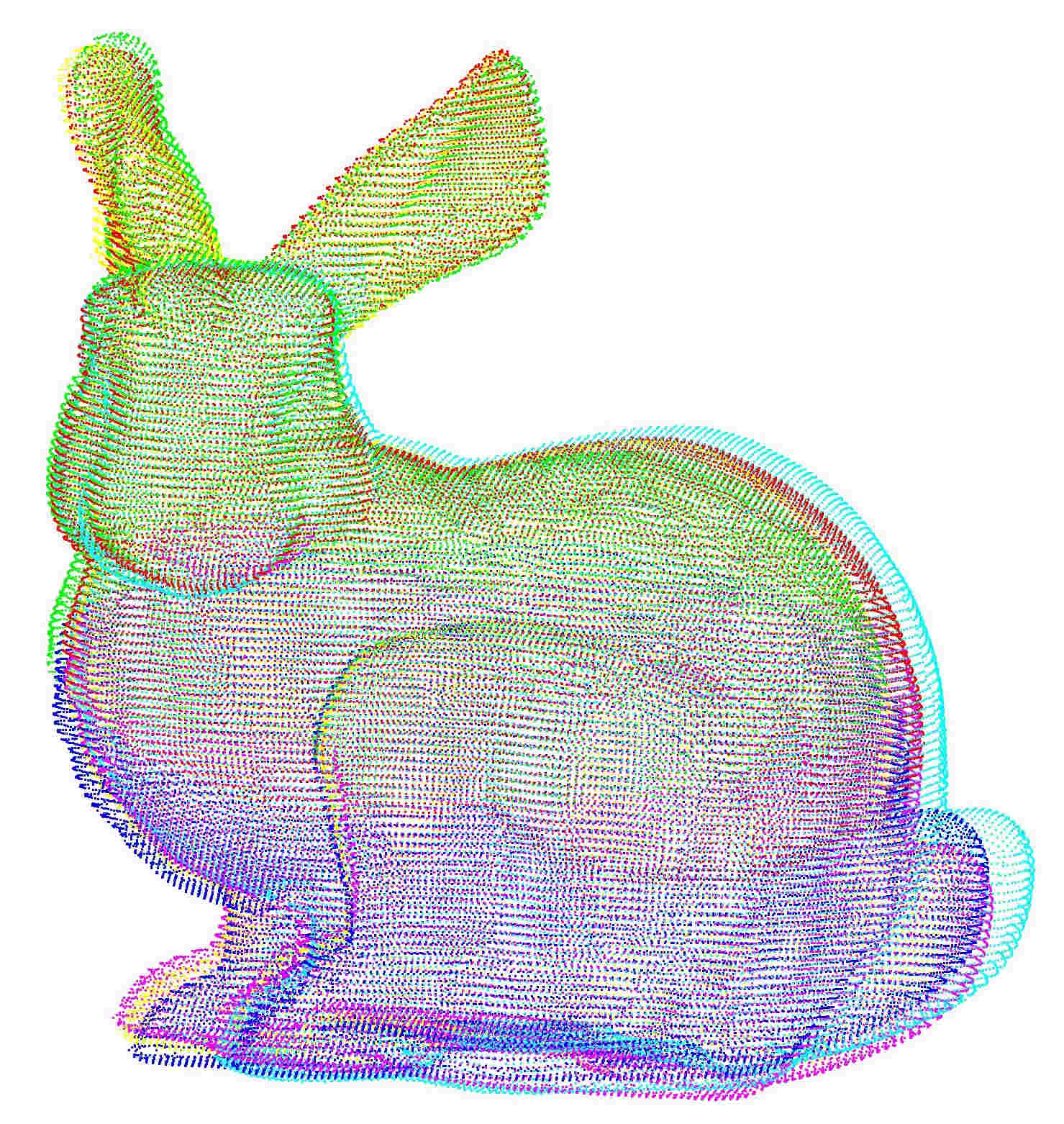}}
\caption{Reconstruction of \textit{Bunny} from known correspondences with $60$\% outliers. The poor reconstruction in (a)  is due to reflections.} 
\label{ON}
\end{figure*}

\begin{figure}[!htpb]
\center
\subfloat[ \textit{Armadillo}.]{\includegraphics[height=0.35\linewidth,width=0.3\linewidth]{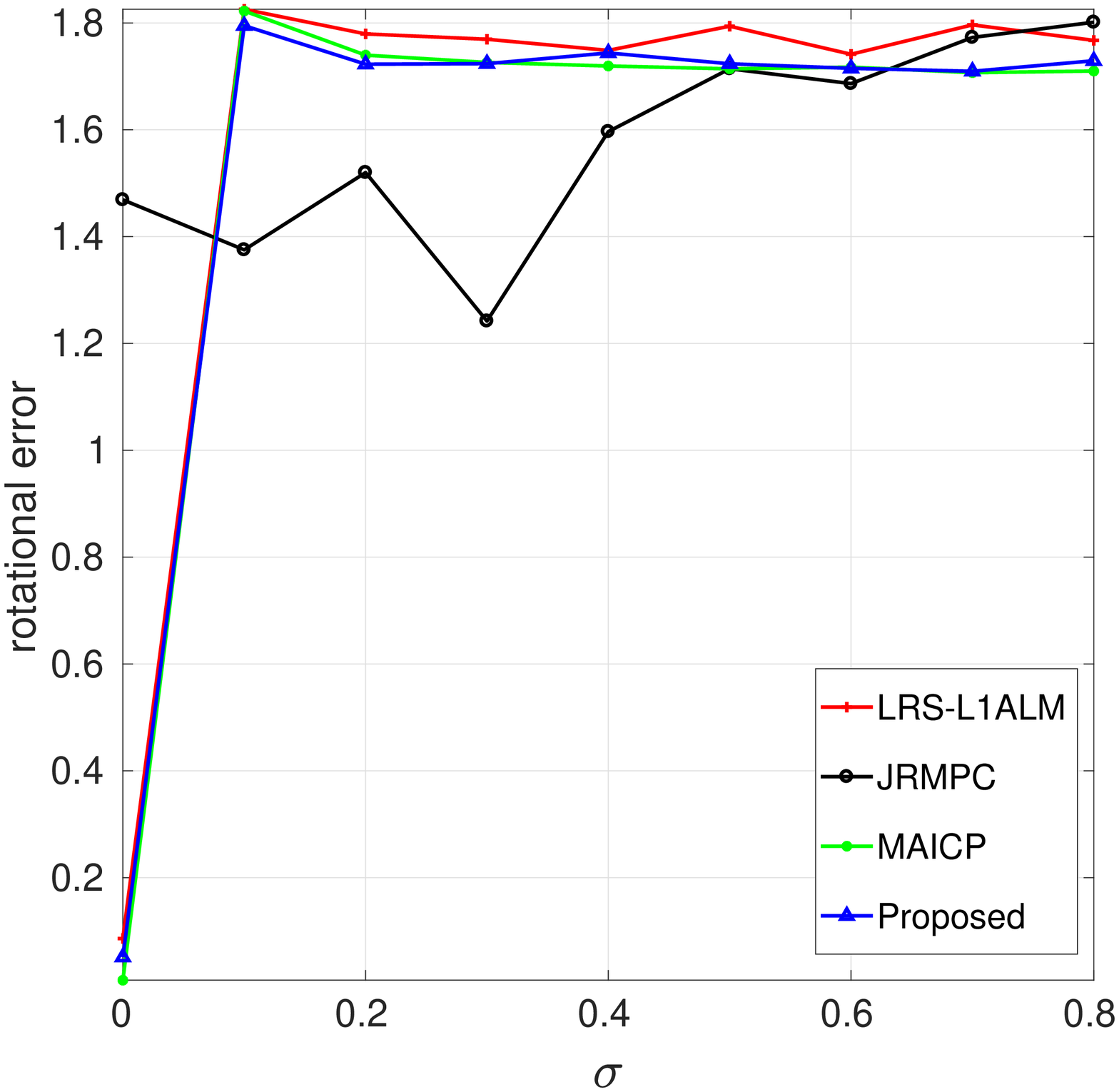}} 
\subfloat[ \textit{Dragon}.]{\includegraphics[height=0.35\linewidth,width=0.3\linewidth]{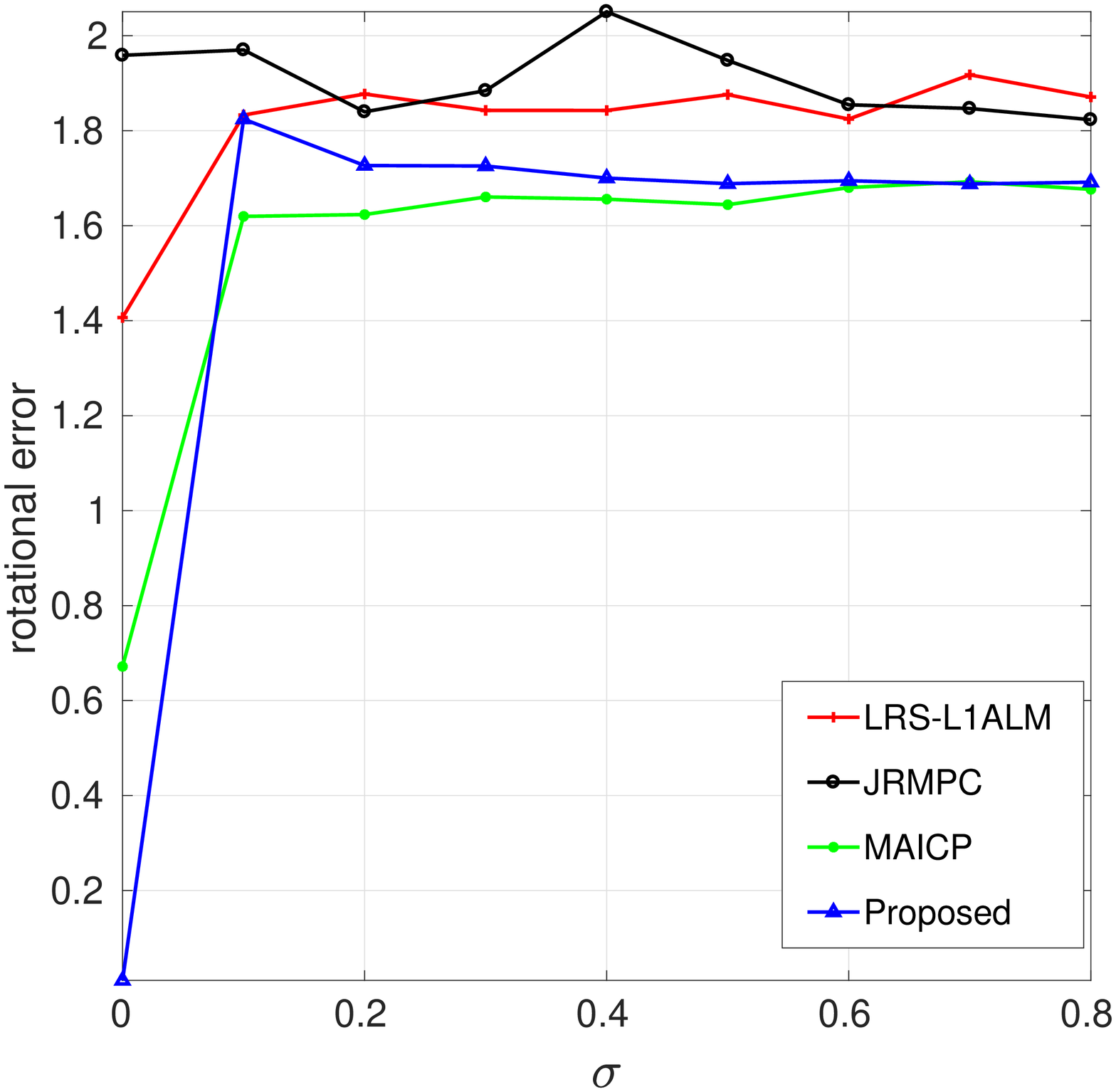}}
\caption{Comparison with existing methods for different noise models.} 
\label{FR}
\end{figure}

\begin{table*}
\centering

\begin{tabular}{|m{3.5cm}|m{3.5cm}|m{3.5cm}|m{3.5cm}|}
\cline{1-4}
\hspace{1cm} Model        & \hspace{1cm} Scans & \hspace{1cm} Proposed & \hspace{1cm} MAICP  \\ \hline
\raisebox{-\totalheight}{\includegraphics[width= \linewidth]{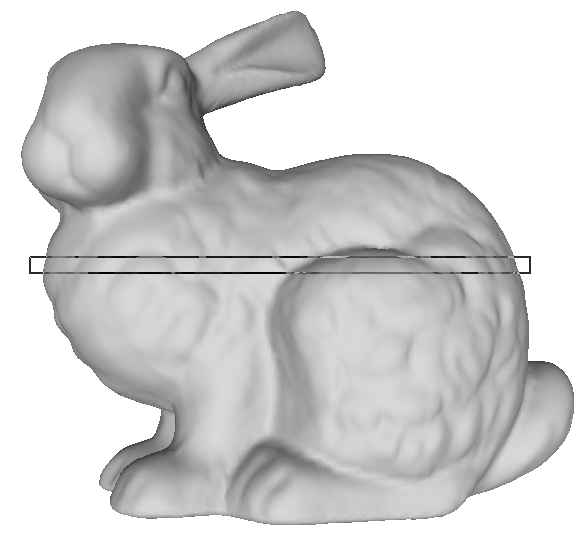}}  &    \raisebox{-\totalheight}{\includegraphics[width= 0.6\linewidth]{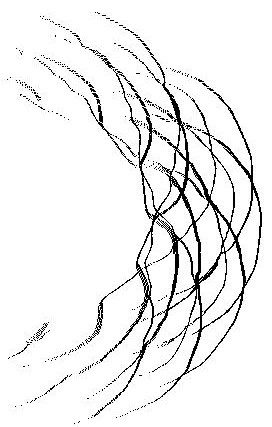}} \vspace{0.5mm}&     \raisebox{-\totalheight}{\includegraphics[width= \linewidth]{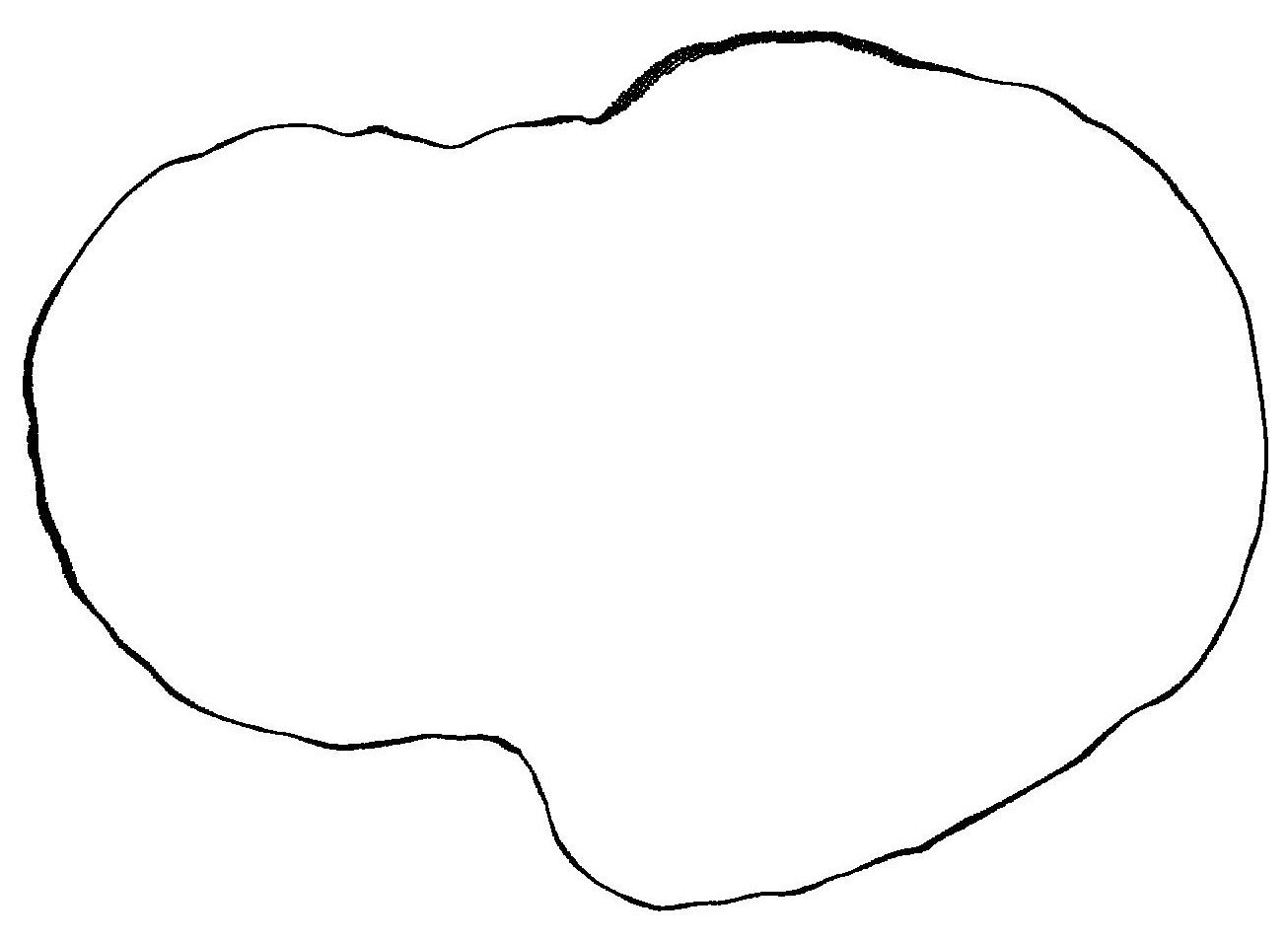}} \vspace{0.5mm} &                                  \raisebox{-\totalheight}{\includegraphics[width= \linewidth]{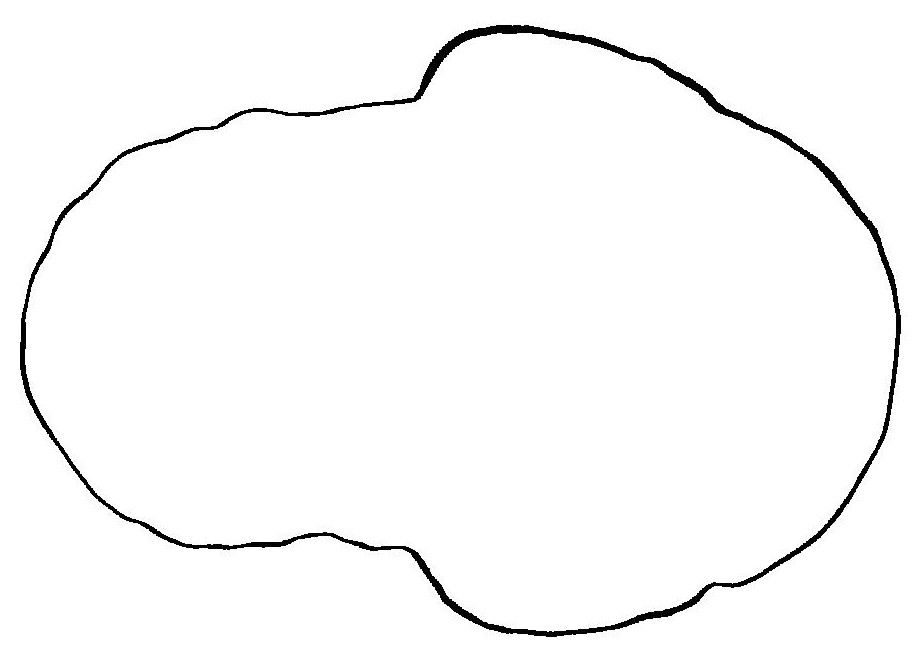}}  \\ 
 \cline{1-4} 
\raisebox{-0.5\totalheight}{\includegraphics[width= \linewidth]{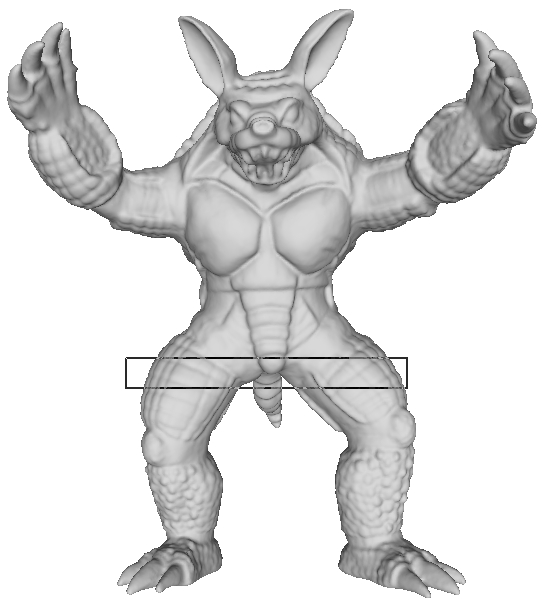}}  &    \raisebox{-\totalheight}{\includegraphics[width= \linewidth]{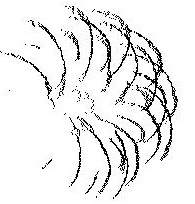}} \vspace{0.5mm}&     \raisebox{-\totalheight}{\includegraphics[width= \linewidth]{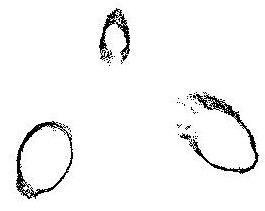}} \vspace{0.5mm} &                                  \raisebox{-\totalheight}{\includegraphics[width= \linewidth]{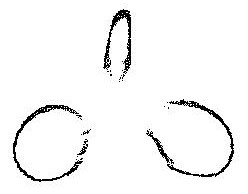}}        \\ \hline
 
 \hspace{8mm}\raisebox{-0.5\totalheight}{\includegraphics[width= 0.5\linewidth, height= 1.25\linewidth]{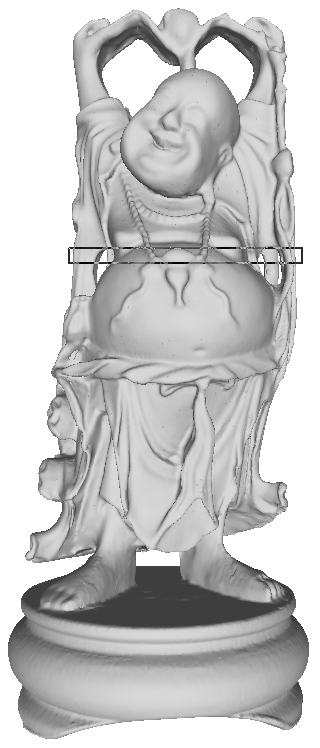}}  &    \raisebox{-\totalheight}{\includegraphics[width= 0.8\linewidth]{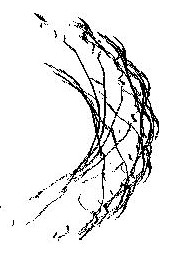}} \vspace{0.5mm}&     \raisebox{-\totalheight}{\includegraphics[width= \linewidth]{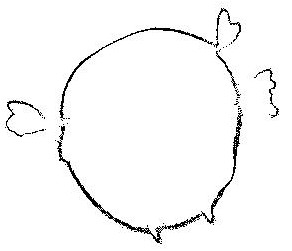}} \vspace{0.5mm} &                                  \raisebox{-\totalheight}{\includegraphics[width= \linewidth]{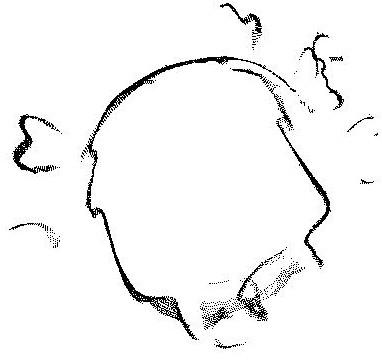}} \\ \hline
 
 \raisebox{-\totalheight}{\includegraphics[width= \linewidth]{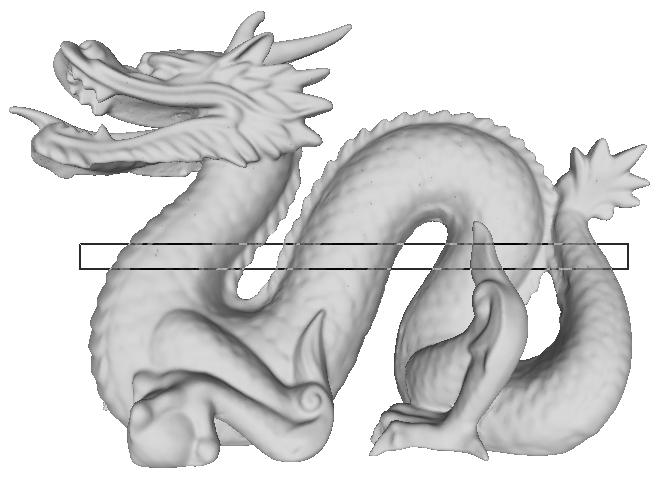}}  &    \raisebox{-\totalheight}{\includegraphics[width= \linewidth]{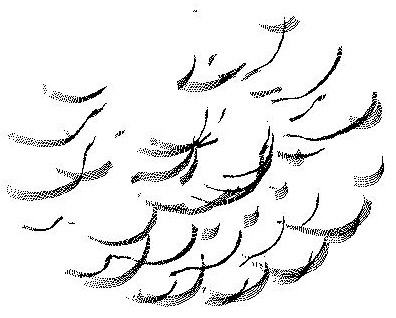}} \vspace{0.5mm}&     \raisebox{-\totalheight}{\includegraphics[width= \linewidth]{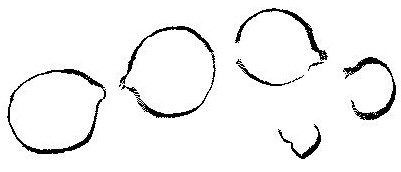}} \vspace{0.5mm} &                                  \raisebox{-\totalheight}{\includegraphics[width= \linewidth]{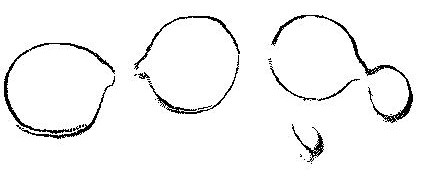}} \\ \hline
 
\end{tabular}
\caption{Visual comparison of reconstructions based on their cross-sections. }
\label{FRC}
\end{table*}

\section{Shape Matching}

We now apply our method for matching $2$D shapes (\cite{shapematchmalik}). Without getting into a rigorous analysis, we simply present few results to demonstrate that our method can be used for shape matching. 
In particular, we wish to match a collection of shapes of a $2$D model without having to compute all pairwise similarities. We have used the models \textit{plane, car} and \textit{bicego} from the hmm-gdb database\footnote{Download from http://visionlab.uta.edu/shape\_data.htm (accessed on $13$ Nov, $2018$).}. The scans are first centered and arbitrarily labeled. Picky-ICP is then used to estimate the correspondences between successive pairs of scans. Finally, the scans are registered using our method. The results are shown in Figure \ref{planedata}.   Notice that the scans do not match perfectly. This is expected since the scans are originally deformed and also because we use only rigid transforms. Nevertheless, the overall matching appears to be reasonably good.

\begin{figure}
	  \center
	      \subfloat[Plane.]{\includegraphics[width=0.35\linewidth]{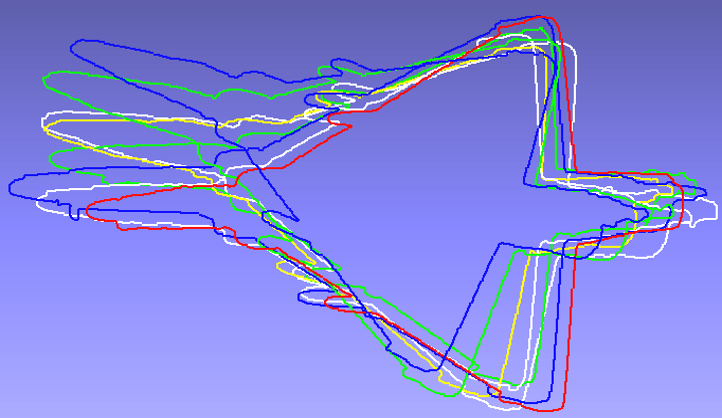}}
	      \subfloat[After registration.]{\includegraphics[width=0.35\linewidth]{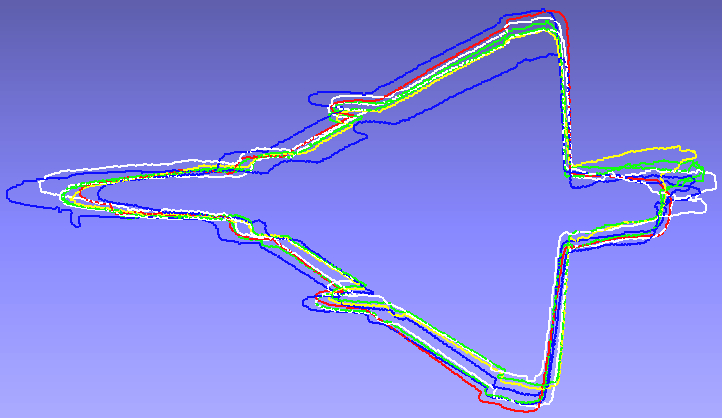}} \\
	      \subfloat[Car.]{\includegraphics[width=0.4335\linewidth]{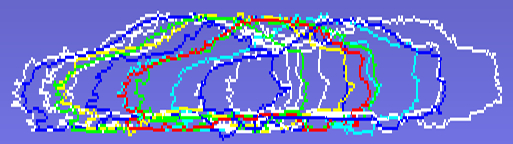}}  
	      \subfloat[After registration.]{\includegraphics[width=0.4\linewidth]{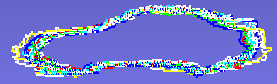}}  \\
	      \subfloat[Bicego.]{\includegraphics[width=0.445\linewidth]{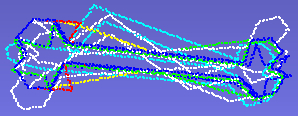}} 
	      \subfloat[After registration.]{\includegraphics[width=0.42\linewidth]{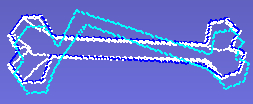}}
	      \caption{Registration of 2D profiles using our method.} 
	  \label{planedata}
\end{figure}

\section{Multiview Registration}
\label{sec:MR}

We next use the proposed algorithm for the registration of 3D scans (\cite{Sharp2002}). 
The important consideration here is that the point-to-point correspondences need to be estimated from the scan data.
We propose to use two-scan registration for the same which is discussed next.

\subsection{Correspondence Estimation}
\label{CF}

As mentioned, the scans extracted from a model are represented using a mesh (\cite{3Dscanrep}).
We treat each scan as a point cloud, where the points are simply the mesh vertices.
We first determine which pairs of scans overlap and the correspondences between them. 
This information is supplied to our registration algorithm.
Note that, while we determine the correspondences in a pairwise manner, the registration algorithm takes all the pairwise correspondences into account. To find pairwise correspondences, we can use  ICP or its fast variant (\cite{Besl1992,Rusinkiewicz2001}).
However, we noticed in our experiments that these are sensitive to outliers. 
After trying different methods, we found that the correspondences obtained using Picky-ICP \cite{Zinsser2003} give good reconstructions for our registration algorithm. In ICP, multiple points from scan $\mathcal{P}_i$ are often assigned to a single point from some target scan $\mathcal{P}_j$. 
However, the correspondences between $\mathcal{P}_i$  and $\mathcal{P}_j$ should ideally be one-to-one. 
In Picky-ICP, the correspondences between $\mathcal{P}_i$  and $\mathcal{P}_j$ are first estimated as in ICP. Multiple assignments are then resolved by selecting the point in $\mathcal{P}_i$ (among several candidates) that is closest to the matching point in $\mathcal{P}_j$ (ties are randomly broken). Let $(d_k)$ be the distances between corresponding points and $s$ be their standard deviation. Pairs for which $d_k$ is within a certain factor of $s$ are considered as overlapping points, and the remaining points are discarded. In our case, we set the factor as three. We simply used PickyICP as a black box and have not engineered anything on our own. Needless to say, if a better method is used for  finding correspondences, the performance of our registration is expected to improve.

\subsection{Comparisons}
\label{exp}

\begin{table}
\centering
\caption{Comparison of rotation error  and Matlab timings (in seconds) on an Intel quad-core $3.4$ GHz machine with $32$ GB  memory}.
\resizebox{0.7\textwidth}{!}{\begin{tabular}{|l|l|l|l|l|l|l|l|l|}
\hline
\multicolumn{1}{|c|}{\multirow{2}{*}{Dataset}} & \multicolumn{2}{c|}{LRS}                                 & \multicolumn{2}{c|}{JRMPC}                               & \multicolumn{2}{c|}{MAICP}                               & \multicolumn{2}{c|}{Our method}                          \\ \cline{2-9} 
\multicolumn{1}{|c|}{}                         & \multicolumn{1}{c|}{rot err} & \multicolumn{1}{c|}{time} & \multicolumn{1}{c|}{rot err} & \multicolumn{1}{c|}{time} & \multicolumn{1}{c|}{rot err} & \multicolumn{1}{c|}{time} & \multicolumn{1}{c|}{rot err} & \multicolumn{1}{c|}{time} \\ \hline
Bunny                                          & 2.293                        & 0.2                       & 2.384                        & 93.8                      & 1.689                        & 562.0                     & 1.723                        & 0.2                       \\ \hline
Armadillo                                      & 0.309                        & 0.1                       & 1.47                         & 93.4                      & 0.008                        & 521.3                     & 0.051                        & 0.2                       \\ \hline
Buddha                                         & 0.314                        & 0.3                       & 0.926                        & 373.1                     & 0.093                        & 637.4                     & 0.054                        & 0.3                       \\ \hline
Dragon                                         & 0.615                        & 0.2                       & 0.883                        & 150.6                     & 0.015                        & 719.6                     & 0.027                        & 0.2                       \\ \hline
\end{tabular}}
\label{fullcleanscanall}
\end{table}

We report results on four datasets from four datasets: \textit{Bunny} (\cite{bunny}), \textit{Buddha}, \textit{Dragon} (\cite{buddhandragon}) and \textit{Armadillo} (\cite{armadillo}). We also compare with  some recent methods for multiview registration: MAICP (\cite{Pooja2014}), LRS (\cite{Arrigoni2016}), and JRMPC (\cite{Evangelidis2014}). These methods have already been demonstrated to perform better than \cite{Sharp2002,Torsello2011,Benjemaa1999,Bennamoun2001,Bernard2015}. Codes for LRS and JRMPC are available online and that of MAICP was provided by the authors. All the competing methods were run using default parameters. 

	  \begin{figure*}
	  \center
	      \subfloat[\textit{Armadillo}.]{\includegraphics[width=0.3\linewidth]{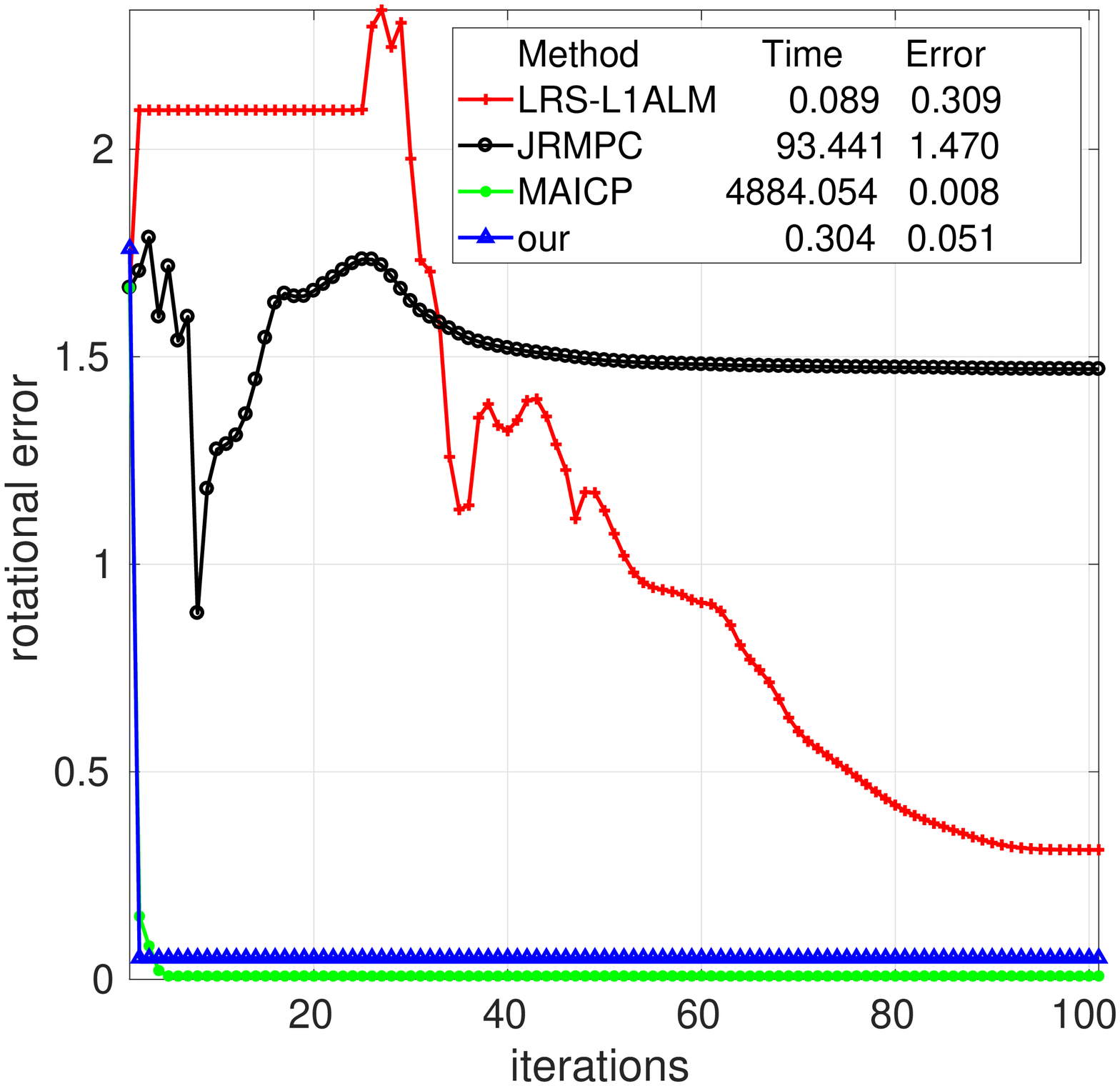}} \vspace{0.2em}
	      \subfloat[\textit{Buddha}.]{\includegraphics[width=0.3\linewidth]{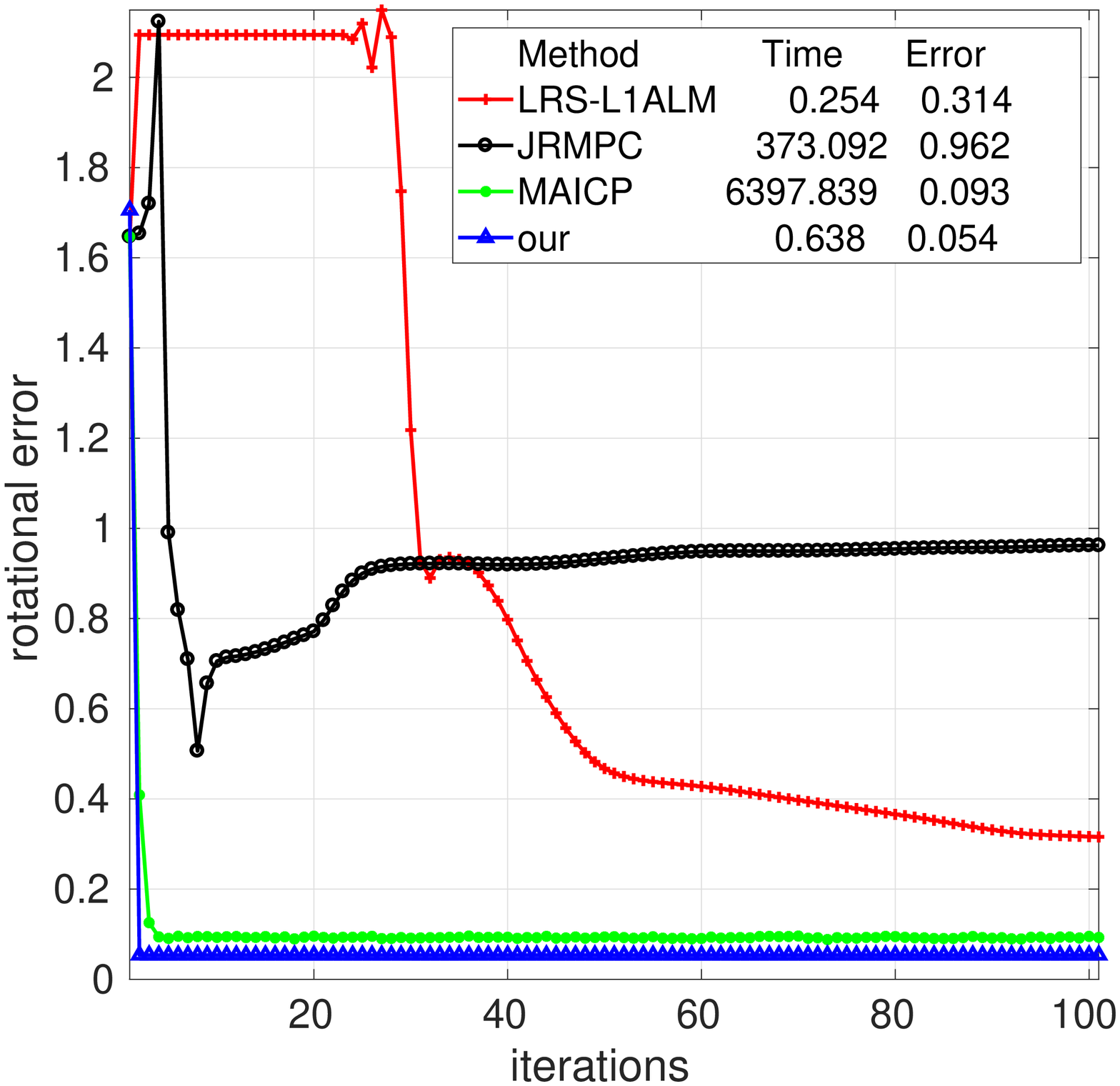}}      
	      
	      \subfloat[\textit{Dragon}.]{\includegraphics[width=0.3\linewidth]{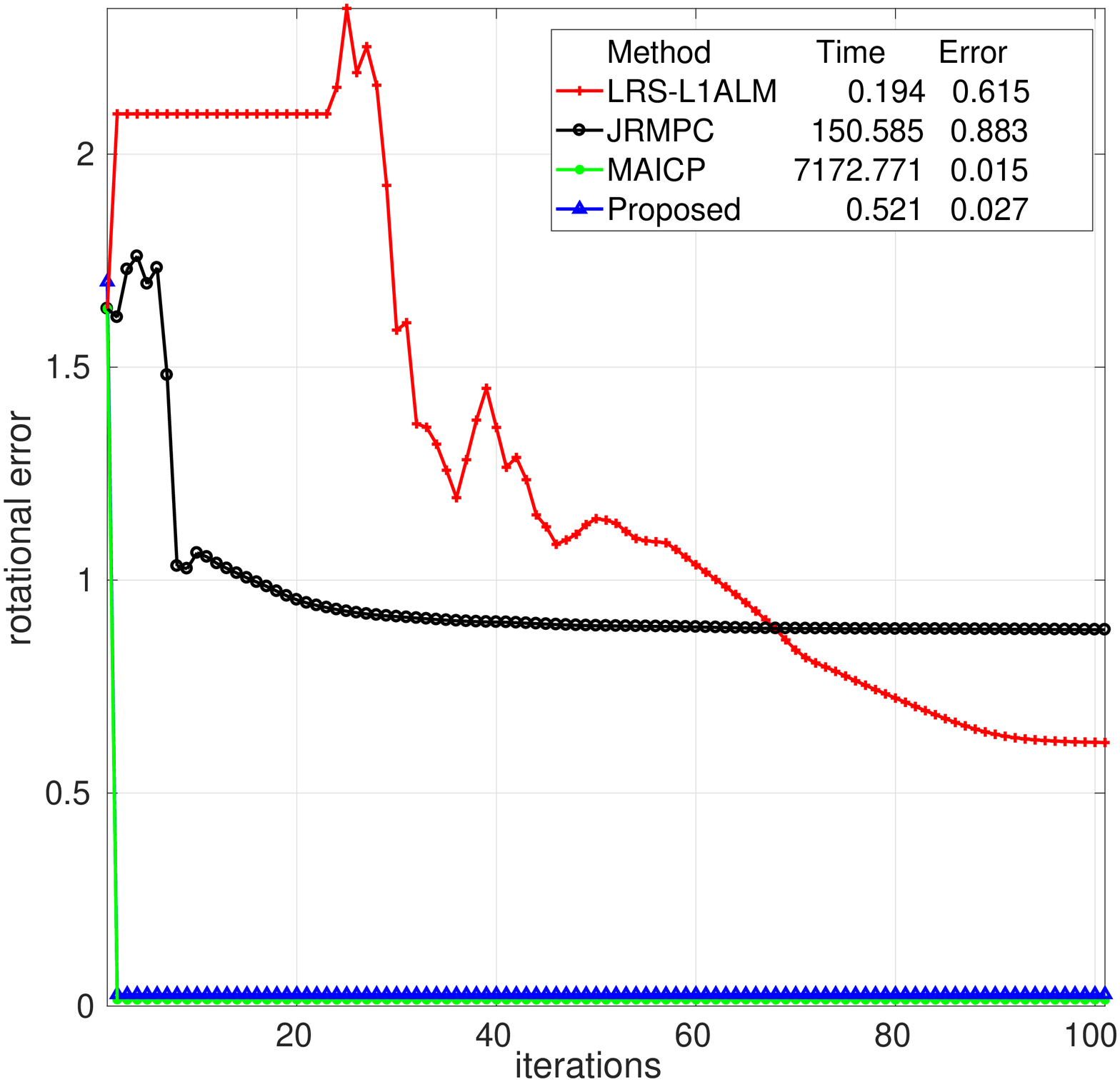}}  \vspace{0.2em}
	      \subfloat[\textit{Bunny}.]{\includegraphics[width=0.3\linewidth]{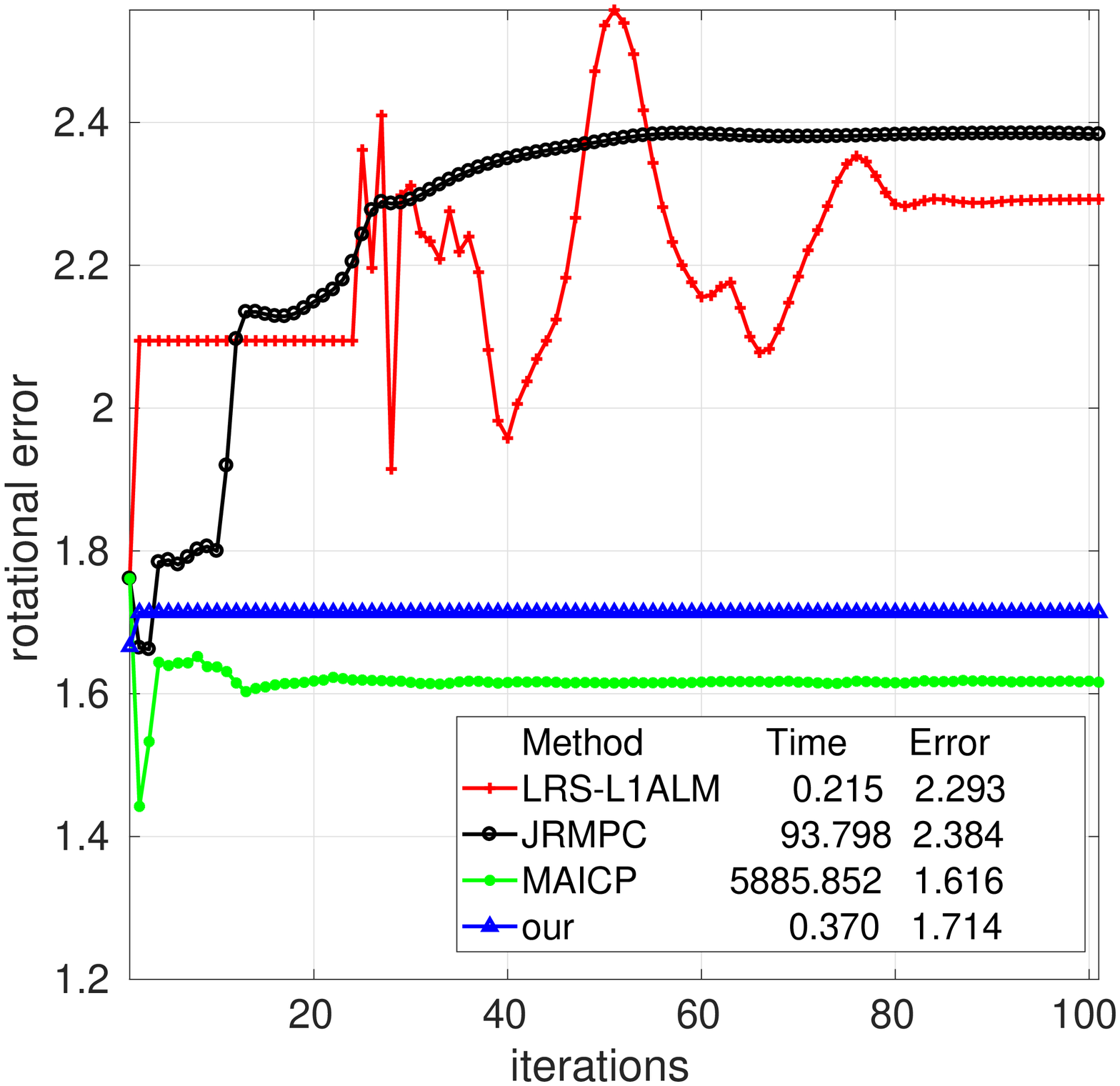}} 
	      \caption{Evolution of rotation error with iterations. Compared to LRS and MAICP, the rotation error for our method and MAICP falls off rapidly. But the execution time for our method is significantly lower than MAICP. Note that MAICP and our algorithm are run for $100$ iterations here for comparison. Consequently the time of execution are also reported. However, in practice, both MAICP and our method needs to be run for fewer iterations as reported in Table \ref{fullcleanscanall}.} 
	  \label{errperiter}
	  \end{figure*}  
	  
\underline{\textbf{Comparison 1}}. We first compare the reconstructions using the scans from Stanford dataset. For a fair comparison, we have initialized all methods using PickyICP \cite{Zinsser2003}. The reconstruction accuracy is assessed using the error metric in \eqref{reconErr}. The results are presented in Table \ref{fullcleanscanall}. It is evident that our method performs much better than LRS and JRMPC. The performance is generally comparable to MAICP. Note that the execution time for our method is significantly less. This aspect is especially important when registering several scans. 
For a visual comparison, the cross-sections of the reconstructions are compared in Table \ref{FRC}. Since the rotation errors for LRS and JRMPC are large, we have only shown the cross-sections for MAICP and our method in Table \ref{FRC}. The number of scans and angular differences are as follows: \textit{Bunny} ($12$ scans, $30$ degree), \textit{Armadillo} ($12$ scans, $30$ degree), \textit{Buddha} ($15$ scans, $24$ degree), and \textit{Dragon} ($15$ scans, $24$ degree). Notice that the cross-section for \textit{Buddha} is much better for our reconstruction. Following JRMPC, LRS and MAICP, we have tried comparing the convergence rate (of the rotation error) using the following protocol:
	 \begin{enumerate}
	  \item read the full scans.
	  \item initialize all methods using PickyICP.
	  \item run LRS, JRMPC, MAICP and our method on the scans.
	  \item record the rotation error at each iteration.
	 \end{enumerate}
	  The results are shown in Figure \ref{errperiter}. Notice that the rotation error decreases quickly for our method and MAICP. The rotation error at convergence is comparable for MAICP and our method. However, the timing is significantly lower in our case. Note that $100$ iterations were used for all methods for comparison, though fewer iterations are required for our method and MAICP (cf. Figure \ref{errperiter}). Thus only $10$ iterations were run MAICP and proposed method in Table \ref{fullcleanscanall}.
	  
	  \begin{figure*}[!htpb]
	  \center
	      \subfloat[\textit{Scans}.]{\includegraphics[width=0.275\linewidth]{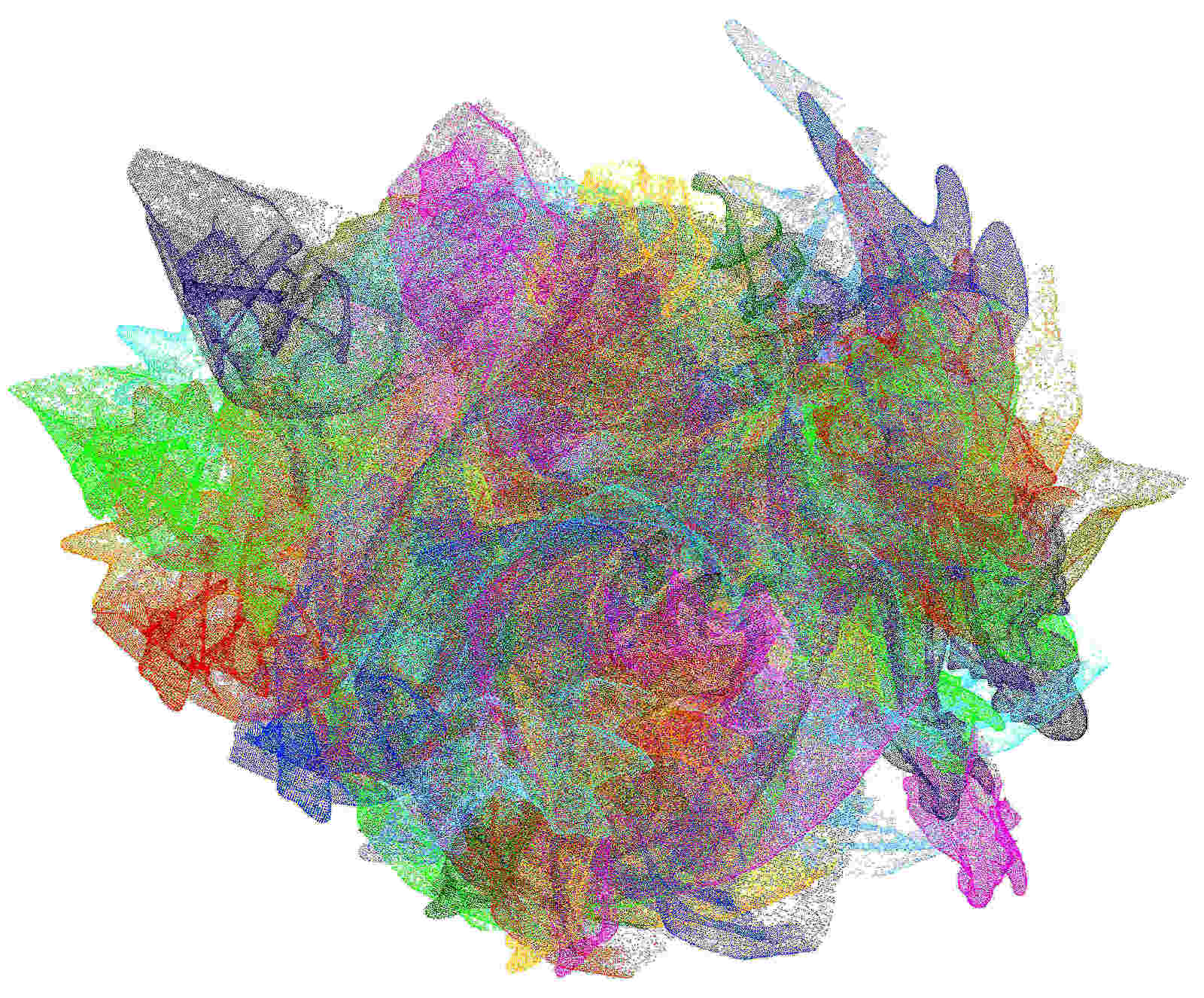}} \hspace{1mm}
	      \subfloat[\textit{MAICP}.]{\includegraphics[width=0.26\linewidth]{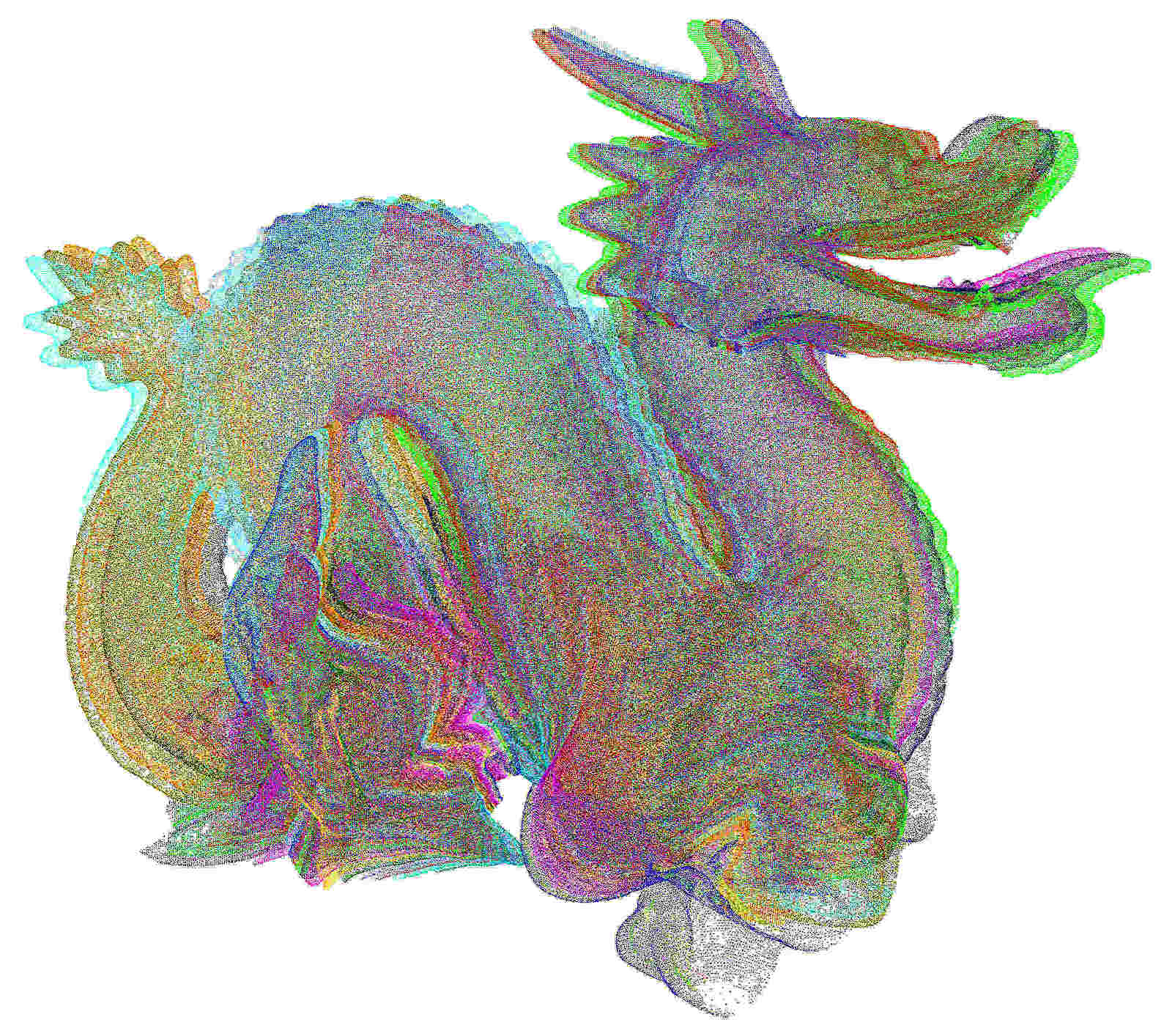}} \hspace{1mm}
	      \subfloat[\textit{Proposed}.]{\includegraphics[width=0.325\linewidth]{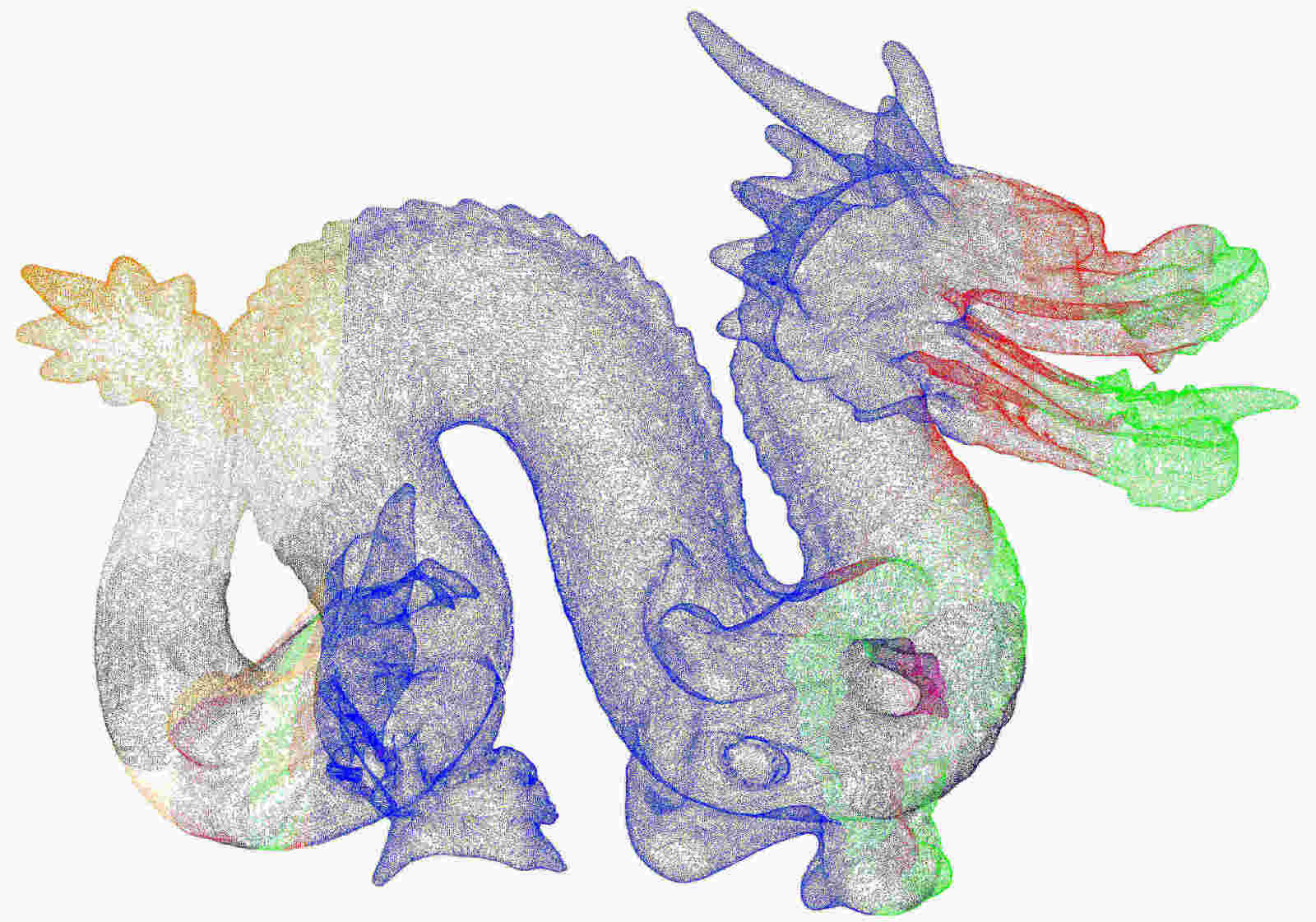}} \\      
	      \subfloat[\textit{Scans}.]{\includegraphics[width=0.24\linewidth]{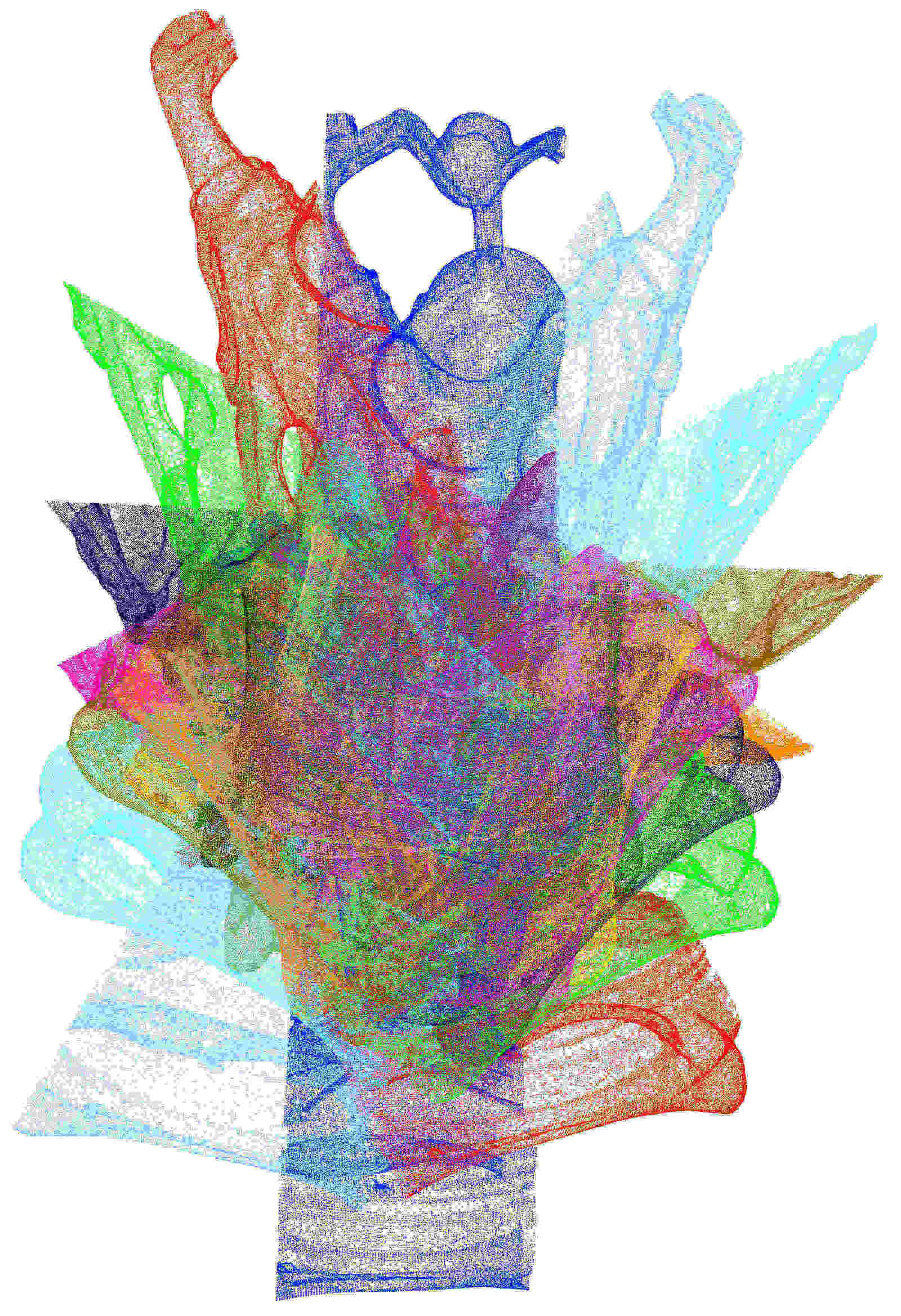}}\hspace{1mm}
	      \subfloat[\textit{MAICP}.]{\includegraphics[width=0.154\linewidth]{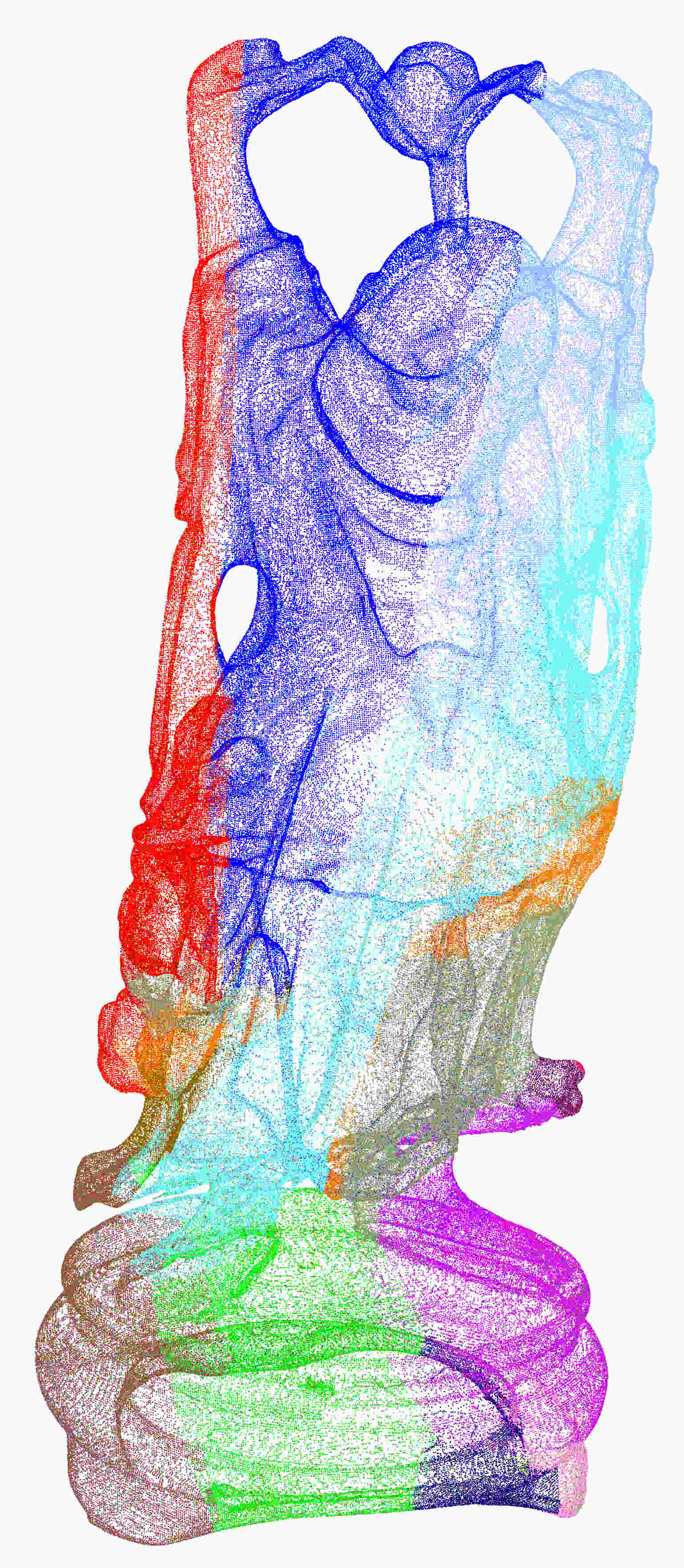}} \hspace{1mm}
	      \subfloat[\textit{Proposed}.]{\includegraphics[width=0.154\linewidth]{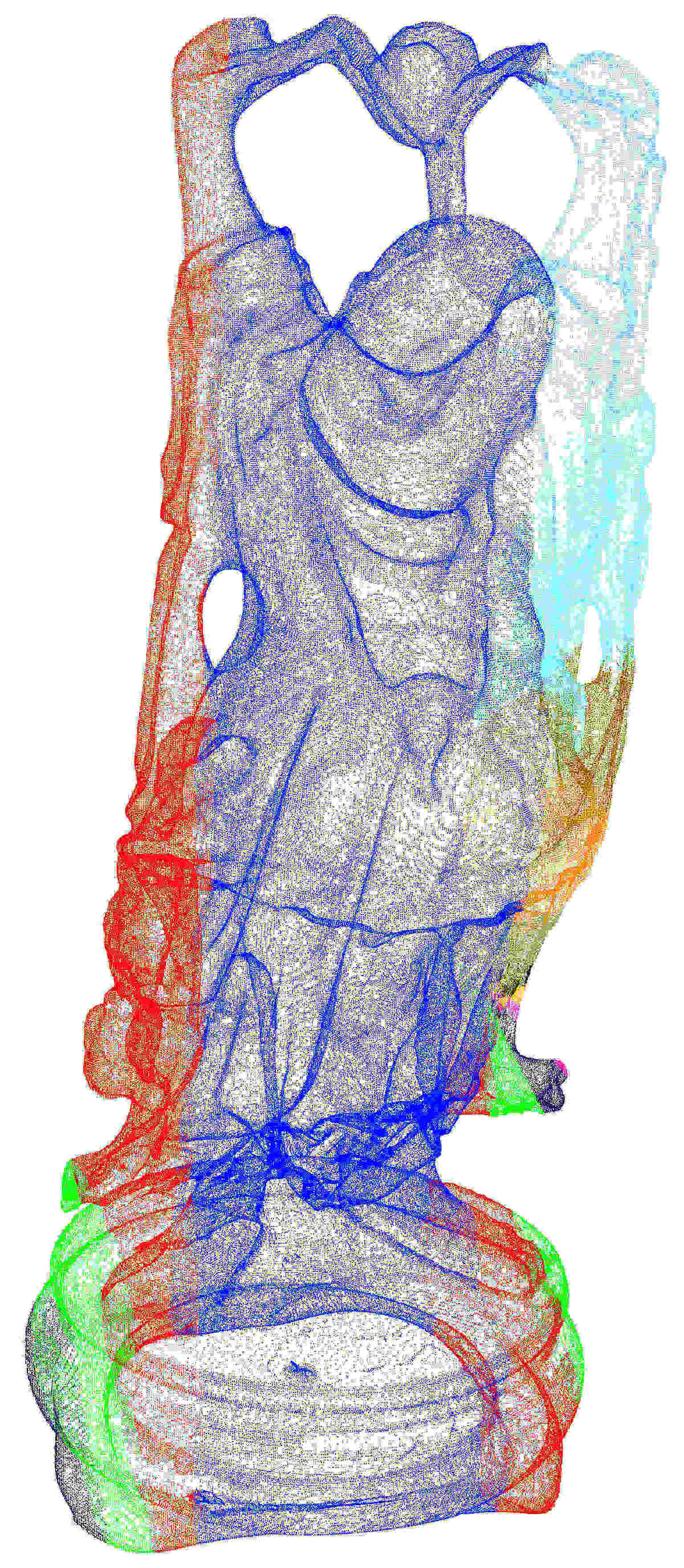}} 	      
	      \caption{Registration using full scans: $24$ scans were randomly generated from \textit{Dragon} and \textit{Buddha} . Rotation between the scans is $15$ degrees. Each scan was randomly rotated about an arbitrary axis in the range $[0\mbox{-}1]$ degrees. The points in each scan were further perturbed with Gaussian noise (standard deviation: $0.01$ and $0.03$ respectively). Notice that the reconstruction  is better for \textit{Dragon} compared to MAICP. This is evident  from the scans colors---the individual scans do not overlap well for MAICP. This suggests that the scans can still be transformed for better alignment. On the other hand, the reconstructions are almost identical for \textit{Buddha}. However, we are much faster.} 
	  \label{randomdata1}
	  \end{figure*}  
	  
\underline{\textbf{Comparison 2}}.  To test robustness, we perturb the individual scans by Gaussian noise and register the scans with all the methods to test for robustness. Some typical results are reported in Figure \ref{FR}. The performance of the proposed method is generally comparable to MAICP. But again, the timings are significantly less for our method.	 
 
\underline{\textbf{Comparison 3}}. Finally, we carry out experiments with random noise (in the rotations and coordinates) using the following protocol: 
	  \begin{enumerate}\label{genrandom}
	   \item read the model.
	   \item center the model by subtracting its centroid and rotate it about the $x$-axis by angles $\theta, 2\theta, \ldots$ for some fixed $\theta$ (this mimics the collection of point sets using a turn table). For each rotation, the points above the $x$-$y$ plane are formed into a point set.
	   \item  perturb the coordinates with additive Gaussian noise.
	  \end{enumerate}
	  The trials are carried out $25$ times for each noise setting and the resulting errors are averaged.  
	  The angle of rotation $\theta$ determines the overlap between successive scans. For all experiments, we fixed $\theta$ to be $15$ degrees.  We generate $24$ scans for \textit{Dragon} and \textit{Buddha}. Standard deviation for Gaussian noise is $0.01$, while the true rotations are randomly perturbed in the range $[0\mbox{-}1]$ degrees. Reconstruction results from MAICP and our method are shown in Figure \ref{randomdata1} for one of the trials. PickyICP is used for initialization in both methods. 
	  The averaged rotation error is slightly better for \textit{Dragon} in our case (our: $1.39$, MAICP: $1.45$). The visual quality also appears to be better for our method (see Figure \ref{randomdata1}). The timings are $1.49$ sec and $23.2$ min for our method and MAICP. 
	  The results are almost identical for \textit{Buddha} (see  Figure \ref{randomdata1}). Average rotation errors for our method and MAICP are $1.486$ and $1.485$. However, the timing is significantly lower for our method (our: $1.39$ sec, MAICP: $21.9$ min).

	  	\begin{figure*}
	  \center
	      \subfloat[\textit{Scans}.]{\includegraphics[width=0.185\linewidth]{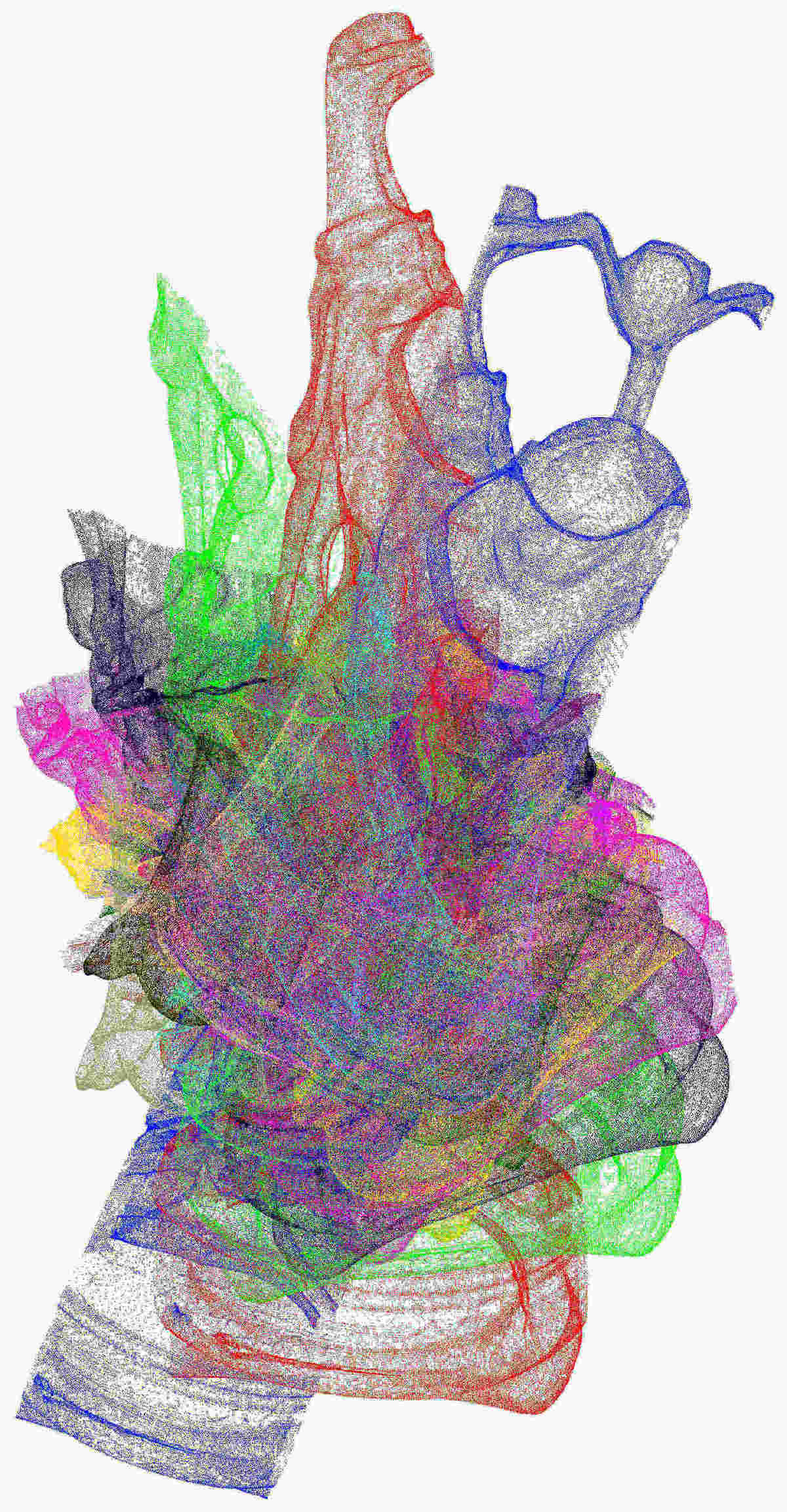}} \hspace{1mm}
	      \subfloat[\textit{MAICP}.]{\includegraphics[width=0.185\linewidth]{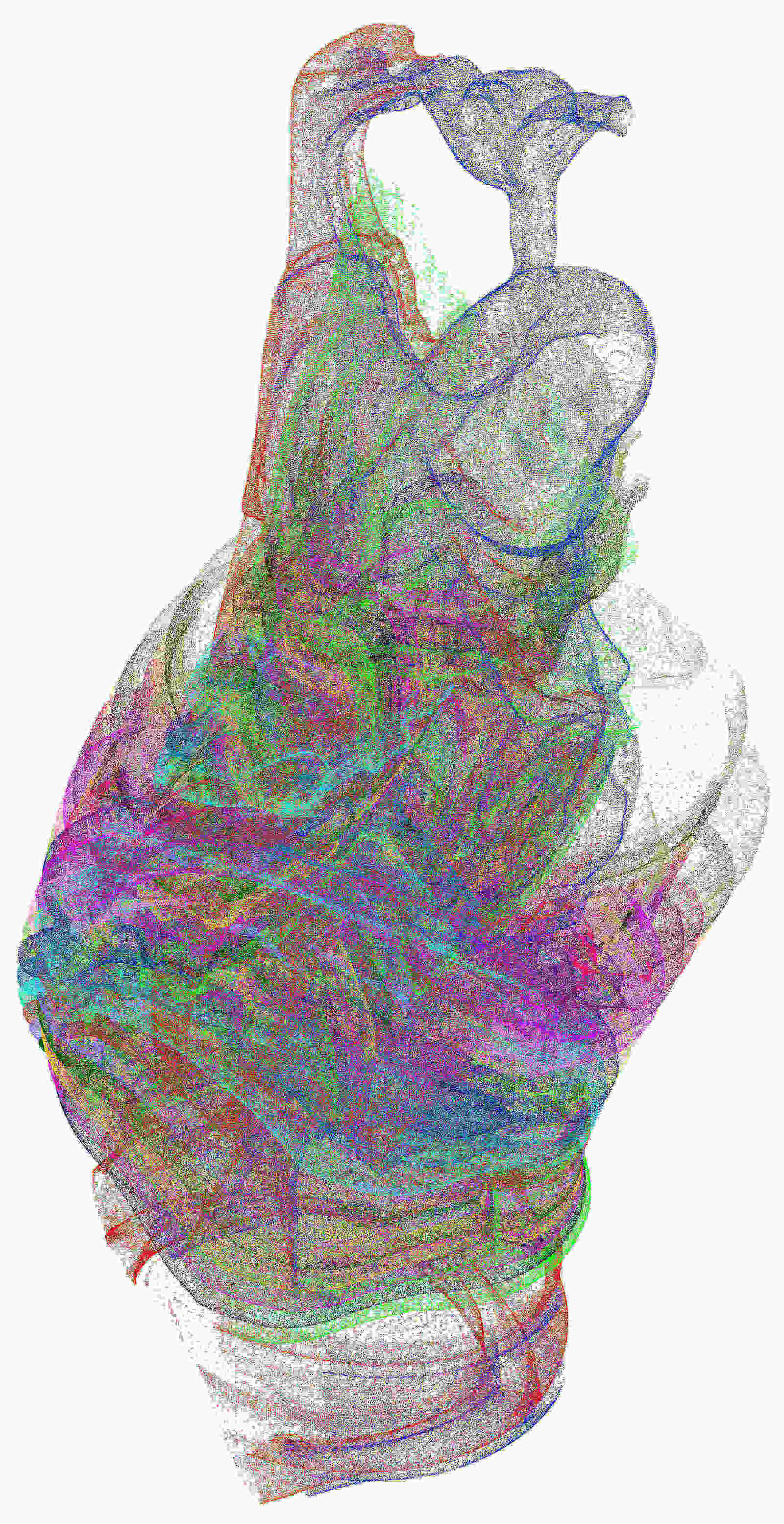}} \hspace{1mm}
	      \subfloat[\textit{Proposed}.]{\includegraphics[width=0.14\linewidth]{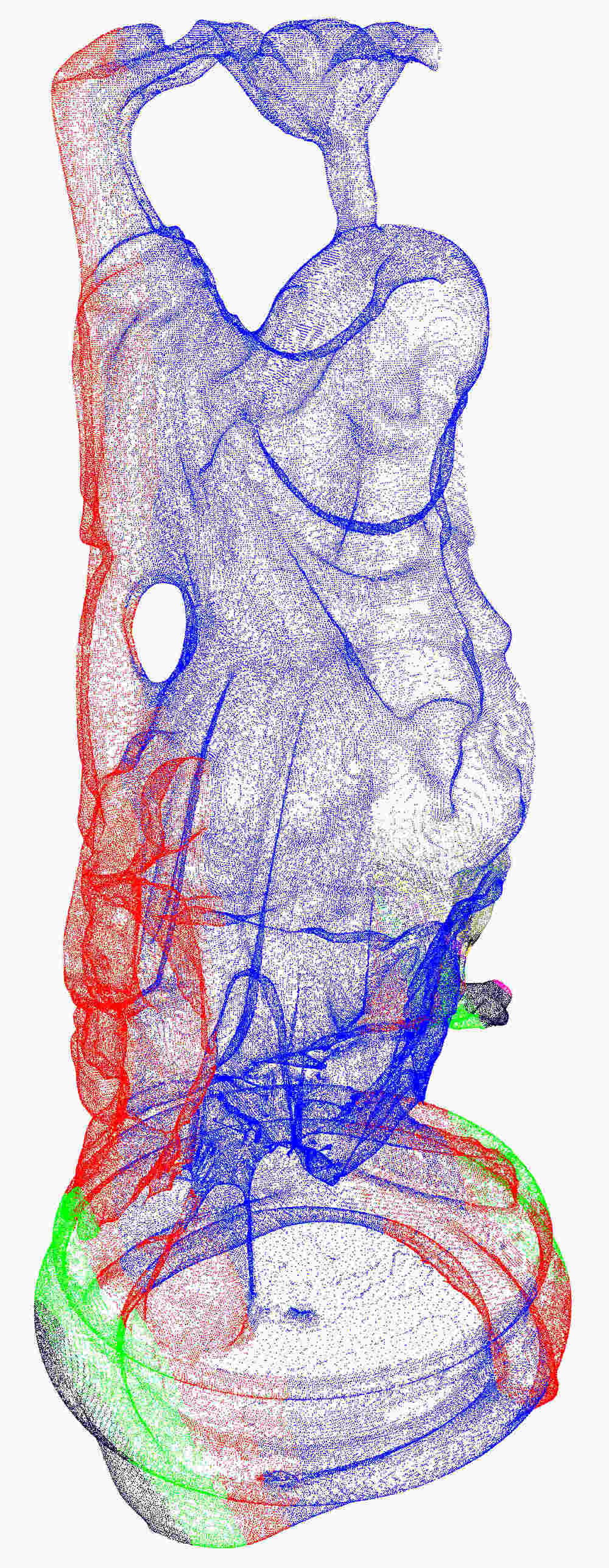}}  \\
	      \subfloat[\textit{Scans}.]{\includegraphics[width=0.3\linewidth]{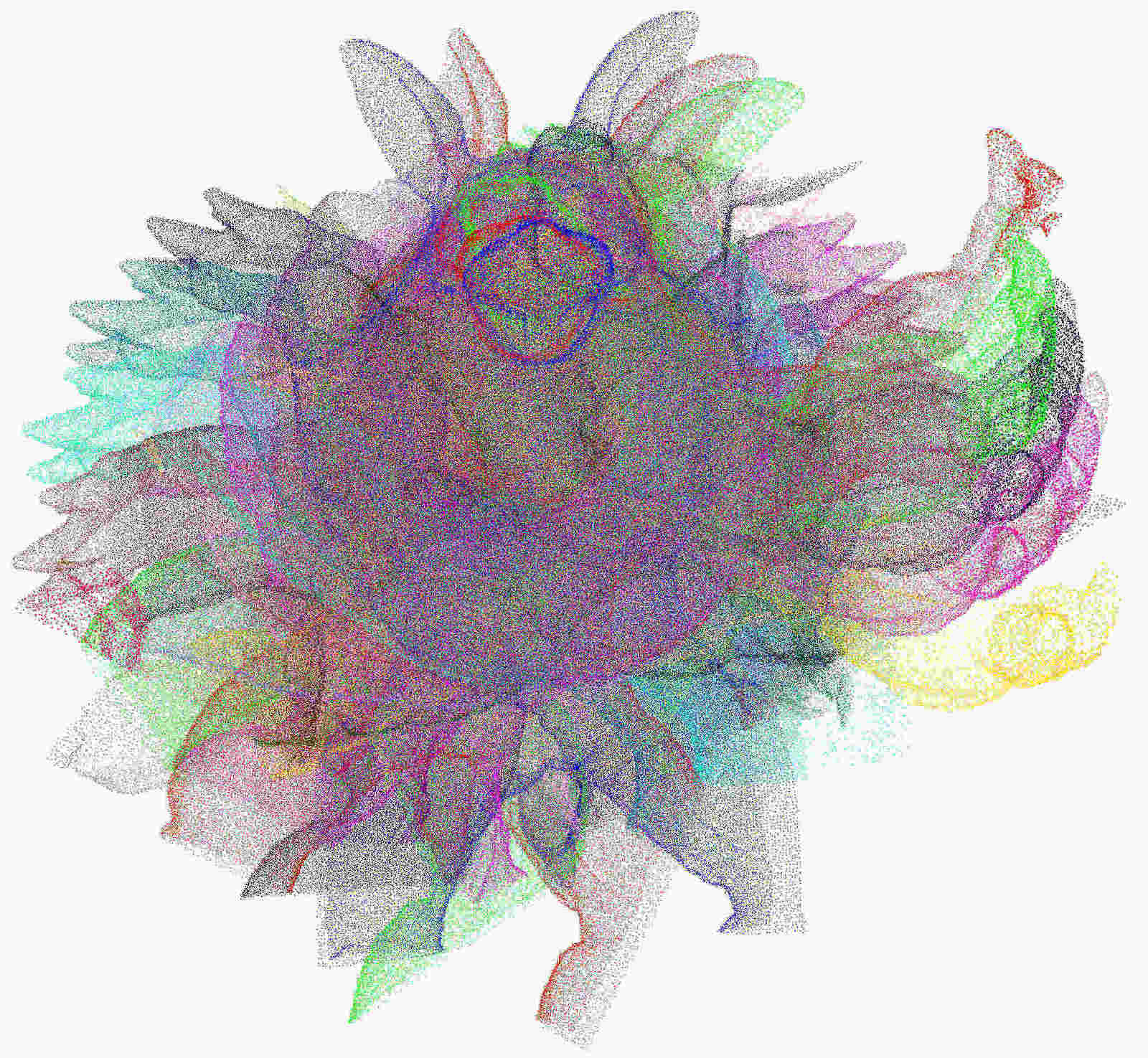}} \hspace{1mm}
	      \subfloat[\textit{MAICP}.]{\includegraphics[width=0.275\linewidth]{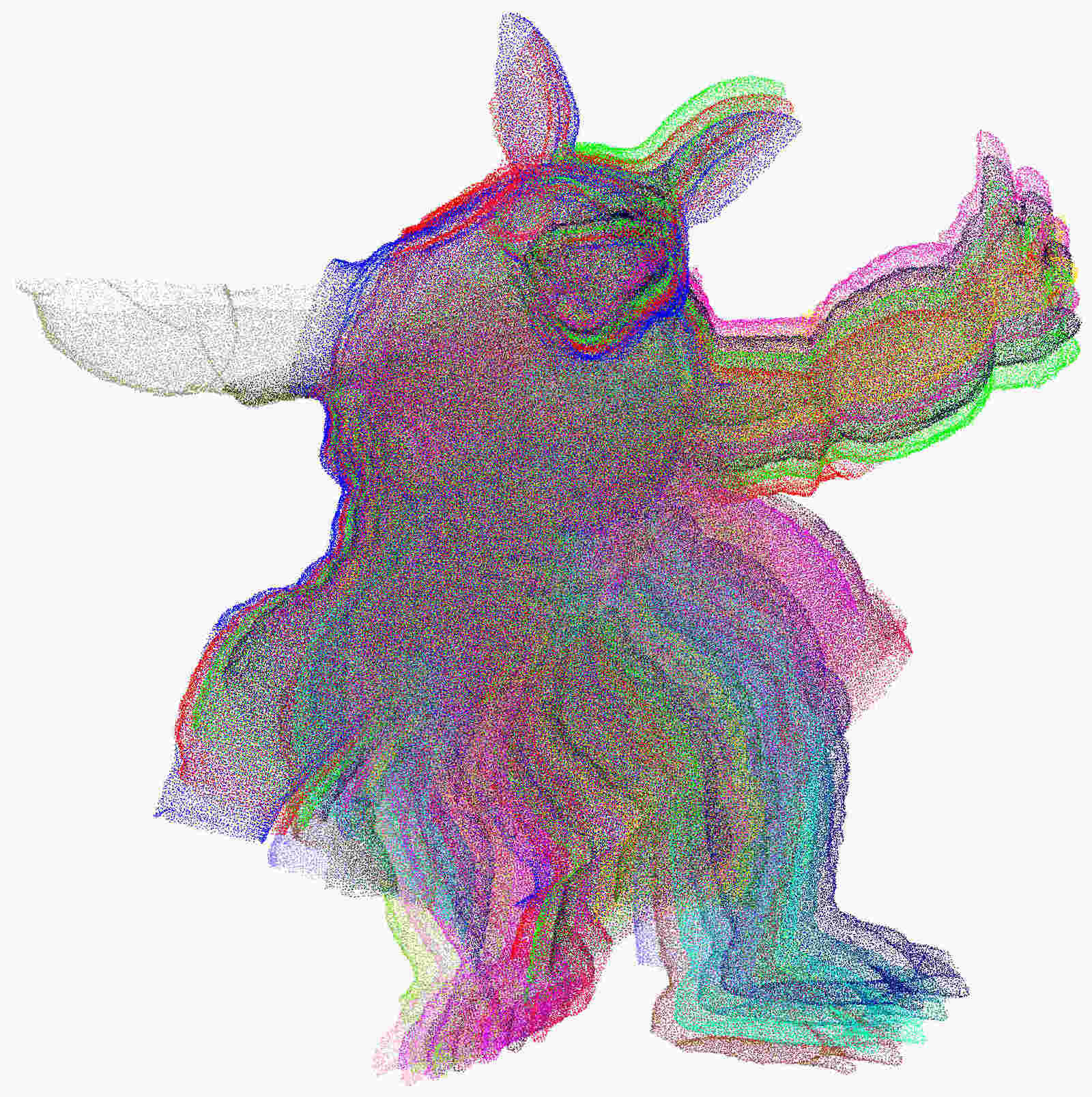}} \hspace{1mm}
	      \subfloat[\textit{Proposed}.]{\includegraphics[width=0.24\linewidth]{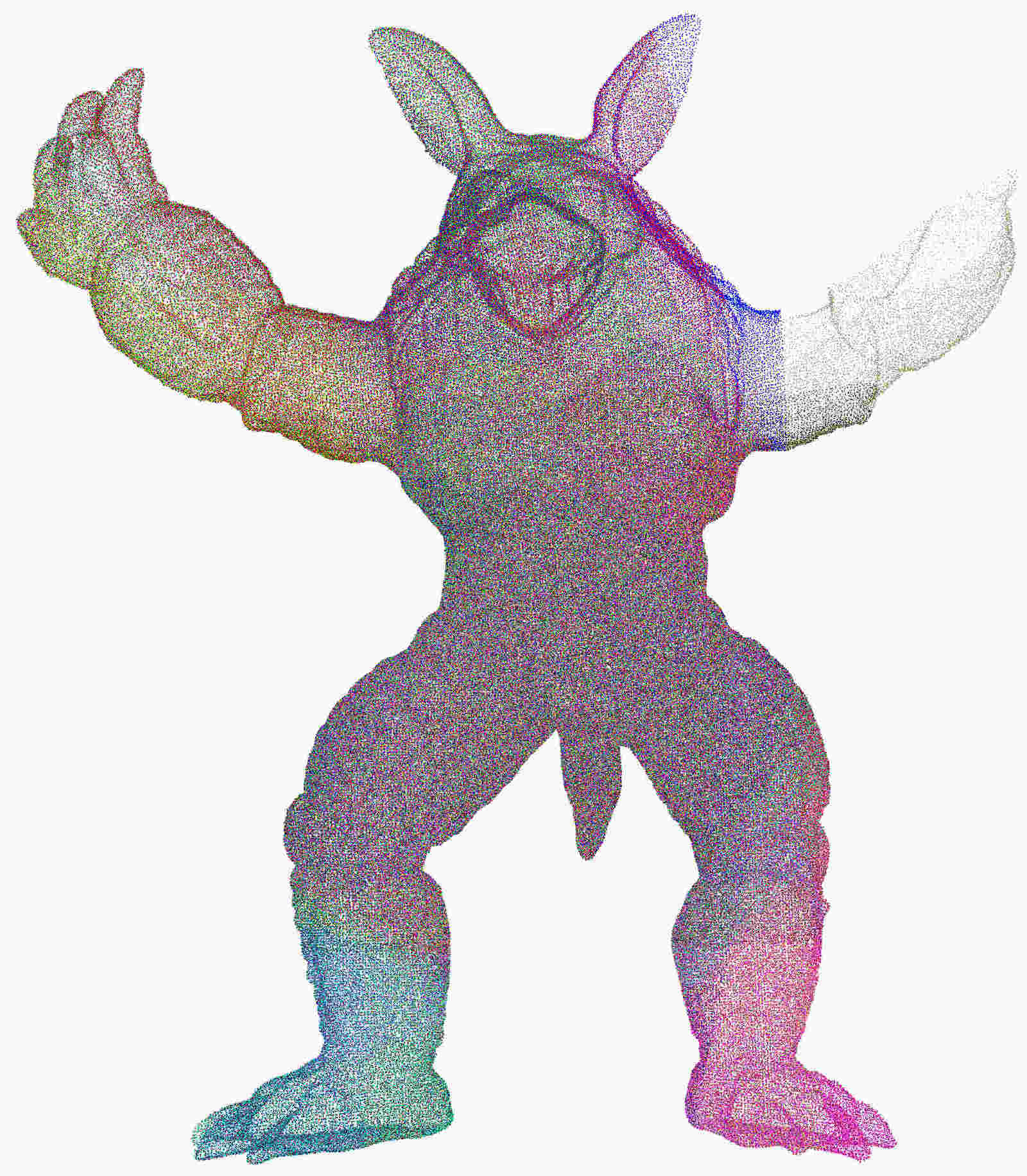}} 
	      
	      \caption{Registration using partial scans: $20$ scans were randomly generated from \textit{Buddha} and \textit{Armadillo}. The rotation between scans is $15$ degrees. Each scan was randomly rotated around an arbitrary axis in the range $[0\mbox{-}2]$ degrees. Standard deviation of Gaussian noise for \textit{Armadillo} is $0.03$. No translational noise is added to \textit{Buddha} to test the reconstruction with just rotation noise. The scans are registered using MAICP and our method. The reconstructions are visually better for the our method.} 
	  \label{randomdata2}
	  \end{figure*}

	We next simulate scenarios where scans covering the entire model may not be available. For this purpose, we generate $20$ scans from \textit{Buddha} and \textit{Armadillo}. The scans for \textit{Armadillo} are perturbed with Gaussian noise having standard deviation $0.03$. No Gaussian noise is added for \textit{Buddha} in order to verify reconstruction with just rotation noise. Each scan is randomly rotated in the range $[0\mbox{-}2]$ degrees. As earlier, we use $25$ trials and average the rotation error. PickyICP is used to initialize both methods. 
	Average rotation errors for \textit{Buddha} are $1.9$ and $2.2$ for our method and MAICP. For \textit{Armadillo}, the corresponding rotation errors are $1.59$ and $1.63$ respectively. The registration results for one of the trials is shown in Figure \ref{randomdata2}. Notice that our reconstruction result is much better than MAICP for both models. The  possible reason for this is that MAICP uses correspondences between the first and and the last scan. Since full scans are not considered, there are very few overlapping points between them.  Again, our method is significantly faster: $1.06$ sec and $0.43$ sec for \textit{Buddha} and \textit{Armadillo} compared to $18.4$ min and $17.6$ min for MAICP.

\section{Conclusion}
\label{sec:Conc}

We proposed a novel optimization algorithm for registering multiple points sets in a globally consistent fashion using rotations and translations.
We empirically analyzed the algorithm and showed that it works well on both simulated and real scan data.
An intriguing property of the proposed ADMM solver is that it converges to the global minimum with a good initialization. 
We could certify this for cases where the global minimum can be computed by some other means, namely, for clean data (where the minimum is zero) and for two scans. 
We applied the proposed algorithm for 2D shape matching and 3D multiview registration. For the later we compared its performance with recent methods. 
The reconstruction accuracy of the proposed algorithm is comparable to MAICP but we are much faster. The results also suggest that the overall algorithm is quite robust to noise.

\section{Appendix}

\subsection{Proof of Theorem \ref{thm1}}

It follows from ($\mathrm{P3}$) that $\mathrm{rank}(\G) \geq d$. Along with ($\mathrm{P2}$), we conclude that $\mathrm{rank}(\G) = d$. 
Therefore, using ($\mathrm{P1}$) and the spectral theorem for symmetric matrices, we can write 
\begin{equation*}
\G= \lambda_1 \boldsymbol{u}_1 \boldsymbol{u}_1^\top + \cdots +  \lambda_d \boldsymbol{u}_d \boldsymbol{u}_d^\top ,
\end{equation*}
where $\lambda_i > 0$ and $\boldsymbol{u}_1,\ldots,\boldsymbol{u}_d$ is a orthonormal basis of $\mathbb{R}^{dm}$. Define $\R \in \mathbb{R}^{d \times dm}$ to be
\begin{equation*}
\R = \big[ \surd \lambda_1 \boldsymbol{u}_1 \quad \cdots \quad \surd \lambda_d \boldsymbol{u}_d \big]^\top, 
\end{equation*}
and let $\R = [ \R_1 \cdots \R_m ]$, where $\R_i \in  \mathbb{R}^{d \times d}$.  By construction, $\G=\R^\top \R$ and, in particular,
\begin{equation*}
\G_{ij}=\R_i^\top\! \R_j \qquad (i,j \in [m]).
\end{equation*}
Therefore, it follows from ($\mathrm{P3}$) that $\R_i ^\top \R_i = \mathbf{I}$. Furthermore, we conclude from ($\mathrm{P4}$) that for $i=1,\ldots, m-1$,
\begin{equation*}
\mathrm{det} (\G_{i,i+1})=\mathrm{det} (\R_i) \mathrm{det}(\R_{i+1}) =1.
\end{equation*}
We deduce that for $i \in [m]$, $\mathrm{det} (\R_i) =1$ or $\mathrm{det} (\R_i) =-1$. In the latter case, we simply pick a global reflection $\mathbf{Q}$ with $\mathrm{det}(\mathbf{Q})=-1$, and reassign $\R_i \leftarrow \mathbf{Q} \R_i$. This gives us the desired $\R_1,\ldots,\R_m \in \mathbb{SO}(d)$ such that \eqref{defG} holds.

\subsection{Proof of Theorem \ref{thm3}}

We can write $\A= \mathbf{U} \Lambda \mathbf{U}^\top$, where 
\begin{equation*}
\Lambda = \mathrm{diag}(\lambda_1,\ldots,\lambda_{dm}) \ \text{ and }  \ \mathbf{U} = [\boldsymbol{u}_1 \cdots \boldsymbol{u}_{dm}].
\end{equation*}
Let $\Psi$ denote matrices in $\mathbb{S}_+^{dm}$ with rank at most $d$. Any $\X \in \Psi$ can be represented as $\X = \bV \Gamma \bV^\top$,  
\begin{equation*}
\Gamma= \mathrm{diag}(\mu_1,\ldots,\mu_{dm})   \ \text{ and }  \ \bV \in \mathbb{O}(dm),
\end{equation*}
where $\mu_1 \geq \cdots \geq \mu_{dm} \geq 0$ and at most $d$ of these are positive. 

Note that $\lVert  \X - \A \rVert_{\mathrm{F}} = \lVert  \mathbf{K} \Gamma  \mathbf{K}^\top -  \Lambda \rVert_{\mathrm{F}}$, where $\mathbf{K} = \bU^\top \bV$. As a result, 
\begin{equation}
\label{red}
\underset{\X \in \Psi}{\text{min}} \ \lVert  \X - \A \rVert_{\mathrm{F}}^2 = \underset{\Gamma \in \Phi,\mathbf{K} \in  \mathbb{O}(d)}{\text{min}} \  \lVert  \mathbf{K} \Gamma  \mathbf{K}^\top -  \Lambda \rVert_{\mathrm{F}}^2,
\end{equation}
where $\Phi$ denotes non-negative diagonal matrices with rank at most $d$. For fixed $\Lambda \in \Phi$, it can be shown that the minimum over $\mathbf{K} \in  \mathbb{O}(d)$ is attained when $\mathbf{K} = \mathbf{I}$, that is, when $\bV = \bU$. In particular, this reduces \eqref{red} to
\begin{equation}
\label{OPT}
 \underset{\Gamma \in \Phi}{\text{min}} \  \lVert  \Gamma - \ \Lambda \rVert_{\mathrm{F}}^2 =  \underset{\Gamma \in \Phi}{\text{min}} \ \sum_{i=1}^{dm} (\mu_i - \lambda_i)^2,
\end{equation}
The minimizer of \eqref{OPT} is given by the projection $\Gamma = \Pi_{\Phi}(\Lambda)$, that is, $\mu_i = \max(0,\lambda_i) \text{ for }1 \leq i \leq d$, and $\mu_i=0 \text{ for } d+1 \leq i \leq dm$.

\subsection{Proof of Theorem \ref{thm4}}

The projection problem in question is
\begin{equation*}
\Pi_{\Omega}(\A) = \underset{\X \in \mathbb{SO}(d)}{\text{argmin}} \ \lVert \X - \A \rVert_{\mathrm{F}}^2 =  \underset{\X \in \mathbb{SO}(d)}{\text{argmax}} \ \langle \A , \X \rangle,
\end{equation*}
where we have used the fact that $\X \in \mathbb{SO}(d)$. Since $\mathbb{SO}(d)$ is compact, there exists $\X_0 \in \mathbb{SO}(d)$ such that
\begin{equation*}
 \underset{\X \in \mathbb{SO}(d)}{\text{max}} \ \langle \A , \X \rangle =  \langle \A , \X_0 \rangle = \mathrm{trace}(\A^\top \! \X_0).
\end{equation*}
We claim that $\mathbf{P}=\A^\top \! \X_0 \in \mathbb{S}^d$. Indeed, consider an arbitrary anti-symmetric matrix $\mathbf{M} \in \mathbb{R}^{d \times d}$ such that $\mathbf{M}^\top =- \mathbf{M}$, and define
\begin{equation*}
f(t) = \mathrm{trace}(\mathbf{P} e^{t \mathbf{M}}) \qquad (t \in \mathbb{R}).
\end{equation*}
Since $e^{t \mathbf{M}} \in \mathbb{SO}(d)$, it follows that for all $t \in \mathbb{R}$,
\begin{equation*}
f(0) = \mathrm{trace}(\mathbf{P}) \geq f(t).
\end{equation*}
Hence, 
\begin{equation}
\label{deriv}
f'(0) =  \mathrm{trace}(\mathbf{P} \mathbf{M})=0.
\end{equation}
Since \eqref{deriv} holds for any anti-symmetric $\mathbf{M}$, it easily follows that $\mathbf{P} \in \mathbb{S}^d$. 

Having shown that $\mathbf{P}$ is symmetric, let $\mathbf{P} = \mathbf{Q} \Lambda \mathbf{Q}^\top$, where $\Lambda$ is a diagonal matrix and $\mathbf{Q} \in \mathbb{O}(d)$. Then 
\begin{equation*}
\mathbf{Q} \Lambda^2 \mathbf{Q} =   \mathbf{P}^2 = \X_0^\top \! \A \A^\top \! \X_0 = \mathbf{K} \mathrm{diag}(\sigma_1,\ldots,\sigma_d )^2 \mathbf{K}^\top,
\end{equation*}
where $\mathbf{K} = \X_0^\top \! \bU \in \mathbf{O}(d)$. Since $\mathbf{P}$ and $\mathbf{P}^2$ commute, they can be diagonalized in the same basis. In particular, 
\begin{equation*}
\mathbf{P} = \mathbf{K}  \mathrm{diag}(s_1\sigma_1,\ldots,s_d\sigma_d )  \mathbf{K}^\top, \qquad (s_i \in \{-1,1\}).
\end{equation*}
In terms of this representation, we have
\begin{equation}
\label{opt}
 \mathrm{trace}(\A^\top \! \X_0) =  \mathrm{trace}(\mathbf{P}) = \sum_{i=1} s_i \sigma_i.
\end{equation}
Since $\sigma_i \geq 0$,  \eqref{opt} is maximum when each $s_i=1$. However, since $\X_0 \in \mathbb{SO}(d)$, it follows that
\begin{equation*}
\prod_{i=1}^d s_i \cdot \prod_{i=1}^d \sigma_i  =   \mathrm{det}(\mathbf{P}) =  \mathrm{det}(\A)  \mathrm{det}(\X_0) = \mathrm{det}(\bU \bV^\top) \prod_{i=1}^d \sigma_i .
\end{equation*}
Therefore, if $\mathrm{det}(\A) \neq 0$, we must have 
\begin{equation}
\label{prod}
\prod_{i=1}^d s_i  = \mathrm{det}(\bU \bV^\top) \in \{-1,1\},
\end{equation}
 since $\bU \bV^\top \in \mathbb{O}(d)$. If $\mathrm{det}(\bU \bV^\top)=1$, then \eqref{opt} is maximum when $s_i=1$ for all $i$. However, if $\mathrm{det}(\bU \bV^\top)=-1$, then it follows from \eqref{prod} that the maximizer is $s_i=1 \text{ for } 1 \leq i \leq d-1$, and $s_d=-1$ (this is where we use the fact that $\sigma_i \geq \sigma_d$ for all $i$). The case $\mathrm{det}(\A) = 0$ can be worked out similarly.

\bibliographystyle{model2-names}
\bibliography{citations}

\end{document}